\documentclass[10pt]{article}
\usepackage[a4paper,margin=1in,footskip=0.25in]{geometry}
\usepackage[english]{babel}
\usepackage[utf8x]{inputenc}
\usepackage[T1]{fontenc}

\usepackage[ruled,linesnumbered]{algorithm2e}

\usepackage{mathcomSTEv4}

\usepackage[normalem]{ulem}
\usepackage{amsmath,amsthm}
\usepackage{amssymb}
\usepackage{graphicx}
\usepackage{bm,comment}
\usepackage[colorinlistoftodos]{todonotes}
\usepackage[colorlinks=true, allcolors=blue]{hyperref}
\usepackage{microtype}
\usepackage{graphicx}
\usepackage{subfigure}
\usepackage{booktabs} 

\usepackage{hyperref}


\newcommand{\DO}{{\mathsf{do}}}

\newtheorem{theorem}{Theorem}

\newtheorem{lemma}{Lemma}

\title{Hierarchical Causal Bandit}

\author{%
  Ruiyang Song,\thanks{
  Department of Electrical Engineering,
  Stanford University, 
  \texttt{ruiyangs@stanford.edu}.} 
  \and 
  Stefano Rini,\thanks{
  Department of Electrical and Computer Engineering,
   National Chiao Tung University, 
   \texttt{stefano@nctu.edu.tw}.}
  \and
   Kuang Xu\thanks{
   Graduate School of Business,
  Stanford University, 
   \texttt{kuangxu@stanford.edu}.}
}

\begin{document}
\maketitle

\begin{abstract}

Causal bandit is a nascent learning model where an agent sequentially experiments in a causal network of variables, in order to identify the reward-maximizing intervention. 
Despite the model’s wide applicability, existing analytical results are largely restricted to a parallel bandit version where all variables are mutually independent. We introduce in this work the hierarchical causal bandit model as a viable path towards understanding general causal bandits with dependent variables. The core idea is to incorporate a contextual variable that captures the interaction among all variables with direct effects. 
Using this hierarchical framework, we derive sharp insights on algorithmic design in causal bandits with dependent arms and obtain nearly matching regret bounds in the case of a binary context.

\end{abstract}
\section{Introduction}

The causal bandit model is a sequential causal inference framework that aims to quickly identify the best intervention by adaptively choosing actions based on past observations.
A generic causal bandit problem consists of a finite set of discrete arms, each associated with a scalar value, and a reward that depends on the values of the arms. Pulling an arm corresponds to setting the said arm to be of a specific value, and such actions could influence both the reward (direct effect) and the values of other arms (indirect effect). A standard goal in this literature is to minimize the so-called simple regret for a given time horizon, defined as the gap between the expected reward from the optimal intervention and from the action chosen by the algorithm at the end of the horizon.  

Existing results on causal bandits tend to revolve around two main paradigms, distinguished by the complexity of the interactions among the arms. On the one end of the spectrum, a general formulation of the problem known as  structural causal bandit  \cite{bareinboim2015bandits,lee2018structural} allows arms to have complex dependencies and confounding effects, modeled by a directed acyclic causal network. On the other end, the parallel causal bandit formulation \cite{lattimore2016causal} considers a significantly simpler setting, where the default values of the arms admit a product distribution, and interventions on one arm does not have any causal effects on the value of the other arms (see Fig.~\ref{fig:network_topologies_2}). While the structural bandit formulation is more general, its inherent complexity makes rigorous analysis of regret difficult. In contrast, sharp upper and lower  bounds on simple regret have been derived for parallel causal bandits \cite{lattimore2016causal}. Perhaps more importantly, these bounds demonstrate a clear dependency on  a certain complexity measure associated with the value distribution that further sheds light on the key factors that determine the difficulty of adaptive causal inference.

The main goal of the present paper is to make progress towards bridging the gap between these  worlds: we want a richer  class of causal bandit models that goes beyond the parallel bandit version in terms of capturing more  realistic cross-arm dependencies, while having enough structure to allow for obtaining sharp analysis and generalizable insights. 

\paragraph{Summary of Main Contributions}  We introduce in this work the hierarchical causal bandit (HCB) model that uses a context variable to allow for complex dependencies across the arm values; an illustration of the model is given in Fig.~\ref{fig:network_topologies}. In a nutshell, the model consists of a top context node, $S$, whose value causally influences the distribution of values of the arms $X_1, \ldots, X_N$, which in turn determine the distribution of the reward, $Y$. The values of the arms are mutually independent conditional on a specific realization of the context node. In other words, HCB can be thought of as a convex combination of a finite set of sub-problems, each being a parallel causal bandit, mediated by the context variable. 

The hierarchical causal bandit straddles between the structural and parallel bandit models: on the one hand, as the alphabet size of the context variable gets large, HCB is rich enough to approximate arbitrary distributions across the arm values, thus endowing it with formidable modeling power as compared to parallel bandits; on the other hand, its relative simple structure clearly cannot (and is not intended to) capture complex causal effects in general networks. Nevertheless, we believe that HCB provides, at a minimum, a meaningful step towards understanding general causal bandits. Using a HCB model with binary interventions, we derive upper and lower bounds on the simple regret. While our results hold for any context node with a finite alphabet, the bounds are nearly sharp in the case where the context node is binary. 

Our analysis also pays special attention to elucidating what features of the conditional arm value distribution determine the regret scaling. 
One key insight is that the problem's hardness is determined by the worst sub-problem across all context states. This is surprising, since one would expect that the agent could leverage data gathered from an easy sub-problem to inform learning in a more difficult one, considering that the mapping  from arm values to reward is context-invariant. Our analysis shows that this is not the case, and in fact that the reward function is powerful enough to effectively ``discriminate'' against a specific sub-problem, rendering the aforementioned cross-context data-sharing ineffective.  
We subsequently use these insights to design an efficient learning policy that provably achieves the simple regret upper bound.

The remainder of this paper is organized as follows. 
   In Sec.~\ref{sec:Hierarchical Causal Bandit}, we formally introduce the hierarchical causal bandit model, with our main results to follow in Sec.~\ref{sec:main result}. 
   In Sec.~\ref{sec:upper} and Sec.~\ref{sec:lower}, we provide proof sketches for the main results, with a technical preliminary in Sec.~\ref{sec:Related Models} that reviews some of the relevant results in the literature.  
Finally, Sec.~\ref{sec:conclusion} concludes the paper.

\def\lw{0.7pt}

\begin{figure}[h!]
    \centering
    \begin{tikzpicture}[scale = 0.25]
        \node (zero) [line width=\lw, draw, circle,inner sep=0.15cm] at (10,5){$S$};
        \node (zero_comm) at (15,5.5){context};
        \node (one) [line width=\lw, draw, circle,inner sep=0.15cm] at (0,0){$X_1$};
        \node (two) [line width=\lw, draw, circle,inner sep=0.15cm] at (5,0){$X_2$};
        \node (three) [line width=\lw, draw, circle,inner sep=0.15cm] at (10,0){$X_3$};
        \node (line width=\lw, arm_comm) at (13,1){arm};
        \node (dots) at (15,0){$\ldots$};
        
        \node (last) [line width=\lw, draw,circle, inner sep=0.15cm] at 
        (20,0){$X_{N}$};
        \node (reward) [line width=\lw, draw, circle,inner sep=0.15cm] at (10,-5){$Y$};
        \node (reward_comm) at (15,-5.5){reward};
                
        \draw[line width=\lw,style={->,black}] 
        (zero) -- (one);
        \draw[line width=\lw,style={->,black}] 
        (zero) -- (two);
        \draw[line width=\lw,style={->,black}] 
        (zero) -- (three);
        \draw[line width=\lw,style={->,black}] 
        (zero) -- (last);

        \draw[line width=\lw,style={->,black}] 
        (one) -- (reward);
        \draw[line width=\lw,style={->,black}] 
        (two) -- (reward);
        \draw[line width=\lw,style={->,black}] 
        (three) -- (reward);
        \draw[line width=\lw,style={->,black}] 
        (last) -- (reward);
        
  

    \end{tikzpicture}
    \caption{
    The hierarchical causal bandit (HCB) model. Directed edges indicate causal influence. }
    \label{fig:network_topologies}
\end{figure}
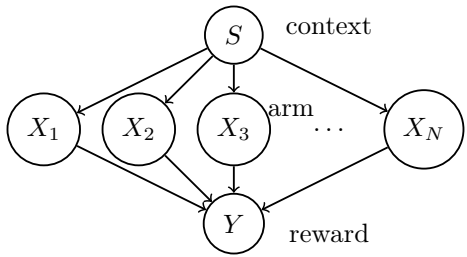

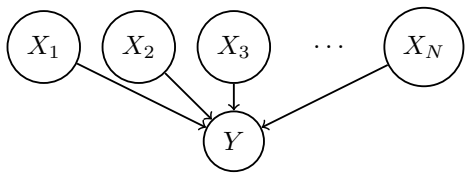
\begin{figure}[h!]
    \centering
    \begin{tikzpicture}[scale = 0.25]
  
        \node (one) [line width=\lw,draw, circle,inner sep=0.15cm] at (0,0){$X_1$};
        \node (two) [line width=\lw,draw, circle,inner sep=0.15cm] at (5,0){$X_2$};
        \node (three) [line width=\lw,draw, circle,inner sep=0.15cm] at (10,0){$X_3$};
        \node (dots) at (15,0){$\ldots$};
        
        \node (last) [line width=\lw,draw,circle, inner sep=0.15cm] at 
        (20,0){$X_{N}$};
        \node (reward) [line width=\lw,draw, circle,inner sep=0.15cm] at (10,-5){$Y$};

        \draw[line width=\lw,style={->,black}] 
        (one) -- (reward);
        \draw[line width=\lw,style={->,black}] 
        (two) -- (reward);
        \draw[line width=\lw,style={->,black}] 
        (three) -- (reward);
        \draw[line width=\lw,style={->,black}] 
        (last) -- (reward);
        
  

    \end{tikzpicture}
    \caption{The parallel causal bandit model \cite{lattimore2016causal}.}
    \label{fig:network_topologies_2}
\end{figure}

\subsection{Related work}

The causal bandit model is derived from the multi-armed bandit (MAB) problem by incorporating the causality framework of \cite{pearl2009causality}; see \cite{bubeck2012regret} for an overview of the MAB literature. 
In contrast to the conventional stochastic bandit model, where the reward obtained in each round depends solely on the identity of the chosen arm, the reward in a causal bandit problem depends on the values of all arms, which are typically assumed to be observable to the agent after the fact, thus allowing for substantially more efficient learning than one would expect in a generic MAB problem \cite{lattimore2016causal}.
To the best of our knowledge,   \cite{ortega2014generalized} was the  first to consider causal extensions of MAB, where they analyze an extension of Thompson sampling by treating actions as  causal interventions.
In particular, they argue that Thompson sampling remains a principled approach in choosing  arm pulls also in the presence of causal inference.
In a similar vein, the authors of \cite{bareinboim2015bandits} demonstrate that, in the presence of confounding variables, the value that a variable would have taken had it not been intervened on can provide important contextual information.

The current paper is perhaps most related to the program laid out in \cite{lattimore2016causal}, where they prove closed-form upper and lower bounds for simple regret   for the parallel bandit model. 
\cite{lattimore2016causal} also shows an upper bound for the simple regret under a general causal structure, although a matching lower bound has yet to be derived.
Following \cite{lattimore2016causal}, the agent in our model can only perform intervention on at most one variable per round. 

Generalizations of the parallel causal bandit models have been considered \cite{yabe2018causal, lee2018structural}. 
In~\cite{yabe2018causal}, the authors consider the scenario where an arbitrary set of intervention is made available at the agent. However, their upper bounds depend on the solution of an optimization problem, which cannot be solved analytically in general. \cite{lee2018structural} considers a general structural causal model and the problem of determining a minimal intervention set, but does not study the metric of simple regret. 

Another set of  interesting results are those in the context of \emph{causal graph discovery} which examines the number of controlled experiments required to discover a causal
graph \cite{hauser2012two}.
In \cite{eberhardt2010causal}, Eberhardt proposes the use of randomized strategies in causal graph discovery and shows that, if the designer is restricted to single-variable interventions, the worst case expected number of experiments required scales as the number of nodes in the graph. 
In \cite{hu2014randomized}, it is shown that randomization provides substantial improvements in  \cite{eberhardt2010causal}.

\noindent
{\bf Notation:}
Throughout this paper, we adopt the following notation. 
Given $\al \in [0,1]$, let $\overline{\al}=1-\al$.
For $x\in \mathbb R$, let $\lceil x \rceil = \min\{z\in \mathbb Z: z\geq x\}$ be the smallest integer that is greater than or equal to $x$.
Random variables are indicated with capital Roman letters, their support with calligraphic letters, e.g., $Z \in \Zcal$, while the support may not be indicated when implied by the context. 
Sets of random variables are denoted by bold capital Roman letters, e.g.,
 $\Zv=\{Z_n\}_{n \in [N]}$. For a subset $\mathcal B\subseteq[N]$, denote by $\mathbf{Z}_{\mathcal B} = \{Z_n\}_{n\in\mathcal B}$.
When there exists an ordering of the variables in the set we will, with some abuse of notation, use set and vector notations interchangeably.
The set of positive integers $\{i,i+1,\ldots,j\}$  for $i <j$ is denoted as $[i:j]$. When $i=1$, we further simplify the notation as $[j]$.
For $x\in \Rbb$ and $\delta \in \Rbb^+$, let $[x\pm \delta] = [x-\delta,x+\delta]$.
For an event $B$, let $\mathbf{1}_B$ be the indicator function that returns 1 if $B$ is true and 0 otherwise. 
We also adopt the following graph-theoretic notation: for a directed acyclic graph (DAG), $\mathcal G=(\Vcal,\Ecal)$ with $\Vcal=[N]$, $\Ecal \subseteq \Vcal \times \Vcal$, denote by $\pa(v)$ the index set of the parent nodes of $v  \in\Vcal$, i.e.,
\begin{equation}
    \pa(v) = \{n \in [N]: (n,v) \in\Ecal\}.
\end{equation} 
A Bayesian network is a probabilistic model built over a DAG in which the random variable $Z_n$ is associated with the node $n \in [N]$.
Accordingly, $\pa(Z_n)$ corresponds to the set of indices of the variables that are parents of $Z_n$.

\section{The Hierarchical Causal Bandit Model}
\label{sec:Hierarchical Causal Bandit}

We now introduce the system  model.\footnote{HCB is an instance of the structural causal bandit. 
For conciseness of presentation, we shall not present the structural causal bandit in its full generality: we refer the interested reader to \cite{bareinboim2015bandits,lattimore2016causal,lee2019structural} and references therein.}
A hierarchical causal bandit problem, illustrated in Fig.~\ref{fig:network_topologies}, consists of a set of random  variables $S, \Xv$ and $Y$. Here, $S$ is referred to as the \emph{context},  $Y$ the \emph{reward}, and $\Xv=\{X_n\}_{n \in [N]}$, where $X_n$ is the value of  the $n^{\rm th}$ \emph{arm}.
We assume that each variable takes only a finite number of values.  
The distribution of the variables is defined by an underlying DAG: there is a directed edge from $S$ to each $X_n$, ${n \in [N]}$, and there is a directed edge between each arm $X_n$ and the reward $Y$.
Accordingly, following \cite{jordan2003introduction}, we have that the  joint distribution can be factorized as 
\ea{
P_{S,\Xv, Y}=P_S \lb \prod_{n \in [N]}  P_{X_n|S} \rb P_{Y|\Xv}.
\label{eq:joint}
}

An \emph{intervention} in HCB corresponds to fixing the outcome of one of the random variables $Z \in \{S,\Xv\}$ to a prescribed outcome, i.e., setting $Z=z$ regardless of its natural distribution, denoted by $\DO(Z=z)$.
This induces a distribution in the observation which corresponds to the graph in which the edges incoming to the chosen variable are removed. 
The resulting distribution of $P_{S,\Xv,Y}$ is then obtained analogously to \eqref{eq:joint} but for $P_{Z|\pa(Z)}=\ones_{\{Z=z\}}$.
Let us denote the distribution of the set of random variables $\Wv$ under the intervention $Z=z$ as $P(\Wv| \DO(Z=z))$.
We will also consider the empty intervention denoted by $\DO(\varnothing)$, i.e., the case that the agent purely observes the variables in the system. Under the empty intervention, the distribution $P(\Wv| \DO(\varnothing))$ corresponds to the distribution of $\Wv$ under the law in \eqref{eq:joint}.
The set of possible interventions is denoted as $\Acal$. In the following we consider two settings for HCB:
\begin{enumerate}
    
    \item {\bf non-manipulable context (HCB-nmc)} in which the context cannot be intervened upon, i.e.,
    \begin{equation}
    \Acal^{\mathsf{nmc}}=\{\DO(X_n=x) ,\ n \in[N],\ x \in\Xcal\}\cup \{\DO(\varnothing)\},
    \label{eq:nmc}
\end{equation}

\item {\bf manipulable context (HCB-mc)} in which the context belongs to the set of possible interventions, just like the other arms, i.e.,
    \begin{equation}
    \Acal^{\mathsf{mc}}= \Acal^{\mathsf{nmc}} \cup \{ \DO(S=s), s \in \Scal \}.
    \label{eq:mc}
\end{equation}

\end{enumerate}

\medskip

In an HCB, an agent wishes to maximize the expected reward while not knowing the underlying distribution in \eqref{eq:joint}.
In order to do so, the agent  is able to choose an intervention  and observe the realizations of the variables in the graph. 
More precisely, the agent can choose $T$ interventions, for some time horizon $T$: after $T$ interventions, the goal of the agent is to  choose the action in $\Acal$ yielding the highest expected reward. 
Suppose an action $\Ah_T\in \Acal$ is identified as the best arm at the end of the $T$ steps of experiments. The agent then obtains a {\bf simple regret} at time $T$, defined as
\ea{
R_T=\mu^*-\sum_{a\in \mathcal A}\mu_{a}P(\hat{A}_T=a),
\label{eq:pure exploration regret}
}
where $\mu_{a} = \mathbb E[Y|A=a]$ is the expected reward when the agent chooses action $a$, $a\in \mathcal A$, and $\mu^*=\max_{a\in\mathcal A}  \mu_a$ is the expected reward of the optimal arm, with the maximum taken over all possible actions $a\in \Acal$. Here, $P(\hat{A}_T =a)$ is the probability that the agent identifies $a$ as the optimal arm after the experiments, which is determined by the distribution of the variables in the system and the algorithm adopted by the agent. 
In other words, $R_T$ measures the expected loss in choosing the intervention $\hat{A}_T$ instead of the optimal intervention $a^*=\argmax_{a\in\mathcal A} \ \Ebb\left[Y|A=a \right]$ that yields the highest reward $\mu^*$.

\subsection{Bernoulli HCB}
\label{sec:Bernoulli HCB}

Let us next restrict ourselves to a scenario where all variables are binary-valued: $\Scal=\Xcal=\Ycal=\{0,1\}$. 
Here, HCB is defined through the tuple $( \al,\pv,\qv,r)$, where
$\al = P(S=1)\in(0,1)$, $\pv=(p_1,\ldots,p_N)$, $\qv=(q_1,\ldots,q_N)$, and $\pv,\qv \in (0,1)^N$ are probability vectors such that $p_n=P(X_n=1|S=1)$ and $q_n=P(X_n=1|S=0)$. Also, $r: \  \{0,1\}^N \goes \Rbb^+$ is defined as $r(\xv)=P(Y=1|\Xv=\xv)$ and referred to as the reward function. We will denote the Bernoulli HCB with non-manipulable and manipulable context as Bernoulli HCB-nmc and Bernoulli HCB-mc, respectively. 

This choice of  binary-valued variables is motivated by two primary considerations. First, from a practical point of view, some of the most popular applications of causal inference and randomized experiments often involve binary-valued interventions (e.g., treatment vs.~control), thus making it a natural starting point. Second, our main results and conceptual insights are most easily seen through this binary model.

We should mention that we do extend the regret bounds to the case where the context variable $S$ has a general finite alphabet, while the other variables in the system are still binary in Thm.~\ref{thm:upper bound K}.  
In particular,
we have $\mathcal X=\mathcal Y=\{0,1\}$, and $|\mathcal S|=K$ where $K\in\mathbb Z_+$ is a fixed integer. We will refer to this model as the Bernoulli HCB with $K$-context, which is depicted by parameters $\left(\bm{\alpha},\{\mathbf{p}^{(i)}\}_{i\in[0:K-1]},r\right)$, where $\alpha_i = P(S=i)$, $i\in[0:K-1]$, and $p^{(i)} = (p_1^{(i)},\ldots,p_N^{(i)})$ with $p_j^{(i)} = P(X_j=1|S=i)$. We do not treat the case of non-binary arms (other than $S$) in this work, though we expect that our results can be generalized to non-binary arms in a future work with more refined analysis. 

\section{Main Results}\label{sec:main result}
\label{sec. Main contributions}
%
%
The main contribution of the paper is the derivation of upper and lower bounds on the simple regret  of  HCB.
Proof sketches for these bounds are provided in the latter sections, Sec. \ref{sec:upper} and Sec. \ref{sec:lower}, while detailed proofs are available in the appendix.

\label{notationr the hierarchical causal bandit}

To better understand what makes a causal bandit problem hard (large simple regret), it is instructive to first highlight the following measure of difficulty, first used by \cite{lattimore2016causal} for studying parallel causal bandits, which will play a central role in our analysis as well. For a probability vector $\vv$, define $m(\vv)$ to be a value that characterizes the number of entries in $\vv$ that are highly biased towards 0 or 1: 
\begin{equation}\label{def:m}
    m(\vv) = \min\left\{s\in\Rbb^+:s\geq\left|\mathcal I_s(\vv)\right|\right\},
\end{equation}
where
\ea{
\mathcal I_s(\vv)=\left\{n \in [N]:\min \{v_n,\vo_n\}<\frac{1}{s}\right\}.
}
The significance of $m$ is as follows: when applied to a Bernoulli parallel causal bandit, $\vv$ can be thought of as the probability vector representing the distributions of the $N$ arms.  Then, one can imagine that the causal effect of an arm that is heavily biased tend to be more difficult to learn, simply because their values rarely change on their own accord and learning would thus require the more expensive manual intervention. Indeed, \cite{lattimore2016causal} shows that $m(\vv)$ essentially determines the scaling of simple regret in parallel causal bandits. 

In the case of hierarchical causal bandits, it is no longer obvious whether $m$ should still play a role, and if so, in what manner. One plausible conjecture based on the insights from parallel bandits might suggest that the simple regret should depend on $m(\vv)$ where $\vv$ is the {marginal} distribution of the $N$ arms. As eluded to in the Introduction, the logic goes that an arm should be easy to learn as long as it is sufficiently stochastic under {some} context. Our results demonstrate, however, that this is not the case.

\begin{theorem}[Bernoulli HCB simple regret upper bound]
\label{thm:upper bound}
Consider a Bernoulli HCB-nmc as in Sec.~\ref{sec:Bernoulli HCB}. There exists an algorithm $\phi_{\mathsf{nmc}}$ such that
\begin{equation}
    R_T^{\mathsf{nmc}} \leq 122\sqrt{\lb m(\pv)\al+m(\qv) \alo \rb \f {\log NT}{T}},
\label{eq:upper bound nmc}
\end{equation}
for each fixed set of parameters $(\alpha,\mathbf{p},\mathbf{q},r)$ when $T> 540\max \lcb \frac{m(\pv)}{\al},\frac{m(\qv)}{\alo}\rcb \log (NT)$. Here, $\alo = 1-\al$. 
For the Bernoulli HCB-mc as in Sec.~\ref{sec:Bernoulli HCB}, instead, there exists an algorithm $\phi_{\textsf{mc}}$ such that
\begin{equation}
    R_T^{\mathsf{mc}}\leq 116\sqrt{ \lb  m(\pv)\al+m(\qv) \alo \rb \f {\log NT}{T}},
    \label{eq:upper bound mc}
\end{equation}
for each fixed $(\alpha,\mathbf{p},\mathbf{q},r)$ when $T> 540\max \lcb \frac{m(\pv)}{\al},\frac{m(\qv)}{\alo}\rcb \log (NT)$.
\end{theorem}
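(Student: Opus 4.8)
The plan is to decompose the HCB into two coupled Bernoulli parallel causal sub-bandits, one per context state, and to reduce the regret to the problem of estimating each sub-bandit's intervention rewards. First I would write every action's reward as a context mixture: using the factorization \eqref{eq:joint}, for an arm intervention $\mu_{\DO(X_i=x)}=\alpha\,\eta_1(i,x)+(1-\alpha)\,\eta_0(i,x)$, where $\eta_s(i,x):=\Ebb[Y\mid \DO(X_i=x),\,S=s]$, and likewise $\mu_{\DO(\varnothing)}=\alpha\,\eta_1(\varnothing)+(1-\alpha)\,\eta_0(\varnothing)$. Conditioned on $S=s$ the arms are mutually independent and unconfounded, so a short argument gives $\eta_s(i,x)=\Ebb[Y\mid X_i=x,\,S=s]$; that is, conditional on $S=s$ the system is precisely a Bernoulli parallel causal bandit with arm distribution $\pv$ (for $s=1$) or $\qv$ (for $s=0$) and the context-invariant reward $r$. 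Since the agent observes $S$ each round, it can bucket its data by realized context and work on the two sub-bandits separately. The reduction lemma I would then prove is elementary: if every $\eta_s(a)$ is estimated within $\epsilon_s$ uniformly over $a\in\Acal$ on an event of probability $\ge 1-\delta$, then choosing $\hat A_T=\argmax_a[\alpha\hat\eta_1(a)+(1-\alpha)\hat\eta_0(a)]$ yields $R_T\le 2(\alpha\epsilon_1+(1-\alpha)\epsilon_0)+\delta$, because $\mu_a\in[0,1]$ bounds the contribution of the failure event.

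Next I would adapt the parallel-bandit estimator of \cite{lattimore2016causal} inside each context. Writing $\vv_1=\pv$ and $\vv_0=\qv$, given $n_s$ context-$s$ samples all $\eta_s(a)$ can be estimated uniformly within $O(\sqrt{m(\vv_s)\log(NT)/n_s})$: every arm value whose conditional probability $P(X_i=x\mid S=s)$ exceeds $1/m(\vv_s)$ is handled by an importance-weighted estimator over the context-$s$ observations, whose variance is at most $m(\vv_s)$, while the at most $m(\vv_s)$ remaining rare pairs (this count is exactly the defining property \eqref{def:m} of $m$) are learned by direct interventions $\DO(X_i=x)$, retaining only the rounds in which $S=s$ is realized. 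I would devote a constant fraction of the horizon to pure observation; context $s$ then appears $\Theta(\alpha_s T)$ times, and these counts concentrate by a Chernoff bound precisely under the hypothesis $T>540\max\{m(\pv)/\alpha,\,m(\qv)/(1-\alpha)\}\log(NT)$. Splitting the remaining rounds among the rare pairs, a counting argument shows the intervention budget fits inside $T$ while matching the observational precision, so that $\epsilon_s=O(\sqrt{m(\vv_s)\log(NT)/(\alpha_s T)})$.

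Combining through the reduction lemma, the estimation error of each $\mu_a$ is at most $\alpha\epsilon_1+(1-\alpha)\epsilon_0=O(\sqrt{\alpha\,m(\pv)\log(NT)/T}+\sqrt{(1-\alpha)\,m(\qv)\log(NT)/T})$, and the inequality $\sqrt{x}+\sqrt{y}\le\sqrt{2(x+y)}$ collapses this to the advertised form $O(\sqrt{(\alpha\,m(\pv)+(1-\alpha)\,m(\qv))\log(NT)/T})$; carefully propagating the variance constant, the clipping threshold and the union bound over the $O(N)$ actions is what pins down the explicit constant $122$. A wrinkle is that the policy knows neither $\alpha$, $m(\pv)$, $m(\qv)$ nor which pairs are rare, so I would prepend an estimation step that reads these off the first observation batch; the horizon hypothesis certifies, on an event of probability $1-O(1/T)$, that the plug-in estimates are accurate and every arm is correctly classified as rare or stochastic, validating the downstream allocation and fixing $\delta=O(1/T)$, which is of lower order than $R_T$.

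For the manipulable-context model the architecture carries over unchanged, the larger set $\Acal^{\mathsf{mc}}$ merely appending the actions $\DO(S=s)$ whose rewards equal $\eta_s(\varnothing)$ and are cheap to estimate; the option to force $\DO(S=s)$ lets the agent draw clean context-$s$ samples for the stochastic part on demand, which tightens the variance accounting and trims the constant from $122$ to $116$. I expect the principal obstacle to lie in the non-manipulable rare-pair step: since the context cannot be set, the number of useful context-$s$ rounds among the direct interventions is random and only an $\approx\alpha_s$ fraction of the total, so the allocation must stay robust to these fluctuations yet still fit within $T$ rounds — this is exactly where the horizon hypothesis is consumed, and where one must make rigorous the (perhaps counterintuitive) fact that the two sub-problems have to be paid for separately, with no cross-context pooling.
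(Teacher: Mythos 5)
Your proposal is correct and follows essentially the same route as the paper: the mixture identity $\mu_{\DO(X_i=x)}=\alpha\,\eta_1(i,x)+\overline{\alpha}\,\eta_0(i,x)$ is exactly the paper's Lem.~\ref{lemma:intervention}, your observation-then-targeted-intervention split with rare/stochastic classification via plug-in estimates of $\alpha$, $m(\pv)$, $m(\qv)$ mirrors Stages 1--2 of Alg.~\ref{alg:nmc} (including retaining only rounds with the realized context $S=s$ during interventions in the $\mathsf{nmc}$ case), and your final combination via the plug-in argmax, a $2(\alpha\epsilon_1+\overline{\alpha}\epsilon_0)+\delta$ regret bound, and $\sqrt{x}+\sqrt{y}\leq\sqrt{2(x+y)}$ matches Eq.~\eqref{eq:Rtnmc}. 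Your treatment of the manipulable case—using $\DO(S=s)$ to draw clean context-$s$ samples for the stochastic entries, which trims the constant—is likewise the structure of Alg.~\ref{alg:mc}.
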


\begin{theorem}[Bernoulli HCB simple regret lower bound]\label{thm:lower_bound}
Consider a Bernoulli HCB as in Sec.~\ref{sec:Bernoulli HCB}, where the context node can be either manipulable or non-manipulable. 
Fix parameters $(\alpha, \mathbf{p},\mathbf{q})$ with $N\geq 4$. Fix an algorithm $\phi$.
Then there exists a reward function $r(\xv)$ for which it holds that
\begin{equation}
R_T\geq \frac{1}{127}\sqrt{\frac{\widetilde{m}(\al,\pv,\qv)}{T}},  
\end{equation}
for all  $T\geq \max\{m(\pv),m(\qv)\}$. Here,
\ea{
\widetilde{m}(\al,\pv,\qv)=\lcb \p{
\max\{m(\pv) \al^2,m(\qv) \alo^2  \} &   m(\pv) \geq \tau_1(\al), m(\qv) \geq \tau_0(\al) \\ 
m(\pv) \al^2 &  m(\pv) \geq \tau_1(\al), m(\qv) < \tau_0(\al)  \\
m(\qv) \alo^2 &   m(\pv) < \tau_1(\al), m(\qv) \geq \tau_0(\al)  \\
(1-\max\{q_{\max},1/2\})^2 &  m(\pv) < \tau_1(\al), m(\qv) < \tau_0(\al)
}\rnone
}
where  $\tau_0(\al)=\frac{3e}{\alo (3-e)}$, $\tau_1(\al)=\frac{3e}{\al (3-e)}$, and $q_{\max}=\max_{n \in [N]} q_n$.
 \end{theorem}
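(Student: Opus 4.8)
The plan is to establish the bound by a minimax reduction to a multi-way hypothesis testing problem, in the spirit of the parallel-bandit lower bound of \cite{lattimore2016causal}, but engineered so that the two context sub-problems cannot assist one another. First I would reduce to the manipulable setting. For the instances I construct below the optimal action is an intervention on a single arm, hence it lies in both $\Acal^{\mathsf{nmc}}$ and $\Acal^{\mathsf{mc}}$ and the two notions of $\mu^\ast$ agree; since every HCB-nmc algorithm is also an HCB-mc algorithm, a lower bound proved against every $\phi$ in the manipulable class then covers the non-manipulable class as well. Second, by relabeling the contexts (swapping the roles of $S=1$ and $S=0$, of $\pv$ and $\qv$, and of $\al$ and $\alo$, which also swaps $\tau_1(\al)$ with $\tau_0(\al)$) I would assume without loss of generality that the harder sub-problem sits in context $S=1$, i.e.\ that the maximizing term in $\widetilde m$ carries $m(\pv)\al^2$; the four cases in the definition of $\widetilde m$ then correspond to whether each context has enough heavily-biased arms, measured against $\tau_1(\al)$ and $\tau_0(\al)$, to support the construction.

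Next I would build the family of hard instances. Let $\mathcal J\subseteq\mathcal I_{m(\pv)}(\pv)$ be a set of $\Theta(m(\pv))$ arms that are heavily biased under context $S=1$ (here $N\geq 4$ guarantees the candidate set is large enough), and for each $n\in\mathcal J$ let $b_n\in\{0,1\}$ be its typical value, so that its rare value is $1-b_n$. I would take a null reward $r_0\equiv\tfrac12$ and, for each candidate $j\in\mathcal J$, a perturbed reward $r_j$ that adds a bonus $\epsilon$ exactly on the configurations in which arm $j$ takes its rare value $1-b_j$ \emph{and} every other arm of $\mathcal J$ takes its typical value $b_n$. Conditioning the bonus on this ``activating pattern'' is the discrimination device promised in the introduction: the pattern is a product of $|\mathcal J|$ near-one factors (each at least $1-1/m(\pv)$, so the product is bounded below by a constant via the estimate $(1-1/m)^m\to e^{-1}$ once there are enough biased arms), yet observations drawn under context $S=0$ are essentially uninformative about which $r_j$ is in force. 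The requirement $m(\pv)\geq\tau_1(\al)$ enters precisely here: it keeps the activating pattern of constant probability in context $1$ while ensuring the single-arm optimum $\DO(X_{j^\ast}=1-b_{j^\ast})$ dominates the context-forcing actions $\DO(S=s)$, so manipulability does not collapse the bound. Because committing to $\DO(X_{j^\ast}=1-b_{j^\ast})$ still leaves $S$ drawn from its prior, the advantage of the optimal action over every other action is only $\Theta(\al\epsilon)$ in the context-averaged reward.

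The analytic core is then an information-theoretic indistinguishability argument. I would bound the KL divergence between the transcript law under $r_j$ and under $r_0$ by a constant times $\epsilon^2$ multiplied by the expected number of rounds in which arm $j$ is seen at its rare value with the activating pattern present; since the budget is $T$ and there are $|\mathcal J|=\Theta(m(\pv))$ candidates, a pigeonhole/averaging step over $j\in\mathcal J$ yields an index $j^\ast$ for which this count is at most $\Theta(T/m(\pv))$, so the transcript laws under $r_{j^\ast}$ and $r_0$ are within constant total variation once $\epsilon\asymp\sqrt{m(\pv)/T}$ (the hypothesis $T\geq\max\{m(\pv),m(\qv)\}$ ensures this $\epsilon\leq 1$ is a valid perturbation). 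By Pinsker's inequality the algorithm then fails to output $\DO(X_{j^\ast}=1-b_{j^\ast})$ with constant probability under $r_{j^\ast}$, and each failure costs regret $\Theta(\al\epsilon)$. Choosing $\epsilon$ at the indistinguishability threshold gives $R_T\gtrsim\al\sqrt{m(\pv)/T}=\sqrt{m(\pv)\al^2/T}$, the claimed scaling whenever context $1$ dominates; the constant $1/127$ comes from tracking the constants in Pinsker's inequality, the product-pattern bound, and the budget split.

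The step I expect to be the main obstacle is making the ``cross-context data is useless'' claim quantitative and uniform in $(\al,\pv,\qv)$: I must control the KL contribution of rounds spent in context $S=0$ (and, in the manipulable case, of rounds in which the agent plays $\DO(S=0)$ or $\DO(S=1)$) and show they cannot sharpen the $m$-way test faster than the within-context-$1$ rate. This is exactly where the thresholds $\tau_0(\al),\tau_1(\al)$ arise, delineating when each context carries enough biased arms for the activating pattern to be simultaneously common in the hard context and rare in the other. Finally, the degenerate fourth case $m(\pv)<\tau_1(\al),\,m(\qv)<\tau_0(\al)$, where neither context can hide a bonus among many arms, reduces to a simpler two-point Le Cam argument on a single arm whose natural context-$0$ rate is at most $q_{\max}$, producing the context-free floor $\tfrac{1}{127}\sqrt{(1-\max\{q_{\max},1/2\})^2/T}$.
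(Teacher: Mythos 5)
Your proposal follows the paper's route in all structural respects: the same reward family ($r_0\equiv 1/2$ plus an $\varepsilon$-bonus on the set where one candidate arm takes its rare value and all other candidate arms take their typical values), the same transcript-level KL decomposition, the same pigeonhole over $\Theta(m(\pv))$ candidates with $\varepsilon\asymp\sqrt{m(\pv)/T}$, the same nmc-to-mc reduction via showing the optimum is an arm intervention, and the same case split. However, there is a genuine gap at precisely the step you flag as the main obstacle, and the mechanism you propose for closing it cannot work. You take $\mathcal J$ to be an \emph{arbitrary} set of $\Theta(m(\pv))$ arms biased under context $1$ and assert that context-$0$ data is then uninformative, attributing this to the thresholds $\tau_0(\alpha),\tau_1(\alpha)$. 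The thresholds cannot deliver this: $m(\pv)\geq\tau_1(\alpha)$ only guarantees $\alpha/e>1/m(\pv)$, i.e., that the intended optimum clears the $1/m(\pv)$ ceiling on competing actions; it imposes no constraint on $\qv$, and the bound $\sqrt{m(\pv)\alpha^2/T}$ must be proved for \emph{every} $\qv$ (it appears in both the first and second cases of the theorem). For adversarial $\qv$ your claim fails: take, say, $m(\pv)/10$ of your candidates with $q_j\approx 10/m(\pv)$ and the remaining candidates with $q_\ell\approx 0$. Then for each of those $\Theta(m(\pv))$ candidates the activating pattern occurs under $\DO(S=0)$ (and, up to a factor $\overline{\alpha}$, under $\DO(\varnothing)$) with probability on the order of $3.7/m(\pv)>1/m(\pv)$, and with more extreme choices (e.g., one $q_j\approx 1/2$, the rest $\approx 0$) with constant probability exceeding $\alpha/e$. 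Consequently (i) $\DO(X_j=1)$ need not be optimal, which breaks your regret-per-failure bound and your reduction (the optimum may be $\DO(S=0)$, which is not in $\Acal^{\mathsf{nmc}}$), and (ii) the KL divergence between $P_0$ and $P_j$ scales like $\varepsilon^2 T$ rather than $\varepsilon^2 T/m(\pv)$, forcing $\varepsilon\asymp\sqrt{1/T}$ and yielding only a $\sqrt{1/T}$ bound.

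The missing device (Lemma~\ref{lem:1} in the paper) is that the candidate set must be selected \emph{as a function of $\qv$}: sort the $\lceil m(\pv)\rceil$ biased arms by their $q$-values and admit only the half with the smallest $q$-values, $\mathcal I=\{\pi(j):j\leq\lfloor m(\pv)/2\rfloor\}$. For $i\in\mathcal I$, every action other than $\DO(X_i=1)$ --- including $\DO(S=0)$, $\DO(\varnothing)$, and interventions on other arms --- hits $\Xcal_i^*$ with probability at most $1/m(\pv)$, because the context-$0$ pattern probability is at most $q_i\,\overline{q}_i^{\,\lceil m(\pv)\rceil-\lfloor m(\pv)/2\rfloor}$ (the product retains at least $\lceil m(\pv)\rceil-\lfloor m(\pv)/2\rfloor$ factors $\overline{q}_{\pi(\ell)}\leq\overline{q}_i$), and $x(1-x)^n\leq\frac{1}{n+1}\left(1-\frac{1}{n+1}\right)^n$. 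This still leaves $\lfloor m(\pv)/2\rfloor$ candidates, enough for your pigeonhole. With this selection step inserted, the rest of your outline matches the paper's proof; without it (or an equivalent $\qv$-dependent selection), the argument does not go through.
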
 

A few important observations can be made. First, the metric $m$ does play an important role in determining the simple regret scaling of HCB. However, our lower bounds suggest that the complexity of an HCB is determined by the difficulty of the ``worst'' sub-problem, i.e., the larger one between $m(\mathbf{p})$ and $m(\mathbf{q})$, and not the difficulty associated with the marginal probability, $m(\mathbf{p}\al+\mathbf{q}\alo)$. This finding leads to a somewhat counter intuitive observation: an arm being highly stochastic in one context does not entail that its causal effect can be easily learned, so long as the same arm is highly biased in another context. The  driving insight behind this phenomenon, one that underpins the proof of our lower bounds, is that when $N$ is large, the reward function can effectively discriminate between different conditional arm value distributions, thus giving it the power to adversarially ``target'' specific context states and cause large regret. This effect seems to be a fundamental feature of the HCB model, and explains why it is not sufficient to only count the arms that are biased on-average when it comes to simple regret. 

Note that the upper bounds in \eqref{eq:upper bound nmc} and \eqref{eq:upper bound mc} in Thm.~\ref{thm:upper bound} only differ by a constant.  This shows that for the Bernoulli HCB model, the ability of manipulating the context directly does not introduce significant improvement in the performance.  
Note also that the upper and lower bounds in Thm.~\ref{thm:upper bound} and Thm.~\ref{thm:lower_bound} are nearly matching up to $\log$ terms as they differ only in a multiplicative factor that scales as $\Ocal(\sqrt{\log(NT)})$, while a discrepancy scaling as $\Ocal(\sqrt{\log(NT/m(\mathbf{p}))})$ is present in the bounds of \cite{lattimore2016causal}.  

Our final theorem generalizes the upper bound to HCB with a $K$-valued context. Note that the lower bound above holds (trivially) for the $K$-context setting as well. 

\begin{theorem}[$K$-context HCB]
\label{thm:upper bound K}
Consider a Bernoulli HCB with $K$-context as in Sec.~\ref{sec:Bernoulli HCB}.
If the context node is non-manipulable, then there exists an algorithm with regret
\begin{equation}
    R_T^{\mathsf{nmc}} \leq 27\sqrt{ \f{K(2K+1)\widetilde{m}\log NT}{T}}
\label{eq:upper bound nmc K}
\end{equation}
for each fixed set of parameters $\left(\bm{\alpha},\{\mathbf{p}^{(i)}\}_{i\in[0:K-1]},r\right)$ when the number of experiments $T$ satisfies
\begin{equation}\label{eq:Tlowerthm}
T> 600(7K+1)\max_{i\in[0:K-1]} \lcb \frac{m(\pv^{(i)})}{\al_i}\rcb \log (NT).
\end{equation}
If the context node is manipulable, there exists an algorithm such that 
\begin{equation}
    R_T^{\mathsf{mc}}\leq 7\sqrt{\f {K(7K+1)\widetilde{m} \log NT}{T}}.
    \label{eq:upper bound mc K}
\end{equation}
for each fixed $\left(\bm{\alpha},\{\mathbf{p}^{(i)}\}_{i\in[0:K-1]},r\right)$ when $T$ satisfies \eqref{eq:Tlowerthm}.
Here, 
$
    \widetilde{m} = \sum_{i\in[0:K-1]}\alpha_i m(\pv^{(i)}).
$
\end{theorem}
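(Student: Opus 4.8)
The plan is to generalize the algorithm and analysis behind Thm.~\ref{thm:upper bound} by using the fact that, conditioned on the realized context, a Bernoulli HCB is exactly a parallel causal bandit. Decompose the reward of every action $a$ as $\mu_a=\sum_{i\in[0:K-1]}\alpha_i\mu_a^{(i)}$, where $\mu_a^{(i)}=\mathbb E[Y\mid A=a,S=i]$ is the conditional reward of $a$ within context state $i$ (this uses that intervening on an arm leaves the law of $S$ unchanged, so $P(S=i\mid \DO(X_n=x))=\alpha_i$). The enabling observation is that the conditional independence of the arms given $S$ in \eqref{eq:joint} gives the identity $\mathbb E[Y\mid \DO(X_n=x),S=i]=\mathbb E[Y\mid X_n=x,S=i]$; consequently both observational samples (from $\DO(\varnothing)$) and interventional samples (from $\DO(X_n=x)$) that happen to realize $S=i$ can be pooled, after conditioning on the observed value of $S$, to estimate $\mu_a^{(i)}$. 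The task thus reduces to running $K$ parallel-bandit estimators side by side and combining them with the empirical weights $\hat\alpha_i$.

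First I would specify a two-phase policy mirroring the one used for Thm.~\ref{thm:upper bound}. In the observation phase the agent plays $\DO(\varnothing)$ and uses the samples to (i) estimate each $\alpha_i$ by frequency, (ii) estimate the conditional marginals $p_n^{(i)}$ and, from them, classify for each context the arms as ``stochastic'' or ``biased'' relative to the $1/s$ cutoff underlying \eqref{def:m} (as in the parallel-bandit routine, which does not need to know $m$ in advance), and (iii) form $\hat\mu_{n,x}^{(i)}$ for the stochastic arms. In the intervention phase the agent splits the budget evenly across the $K$ contexts; inside the block assigned to context $i$ it cycles through the at most $m(\mathbf p^{(i)})$ arms deemed biased in context $i$, plays $\DO(X_n=x)$, and retains the samples realizing $S=i$. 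This even split is the design choice that governs the final bound: context $i$ accrues on the order of $\alpha_i T/\bigl(K\,m(\mathbf p^{(i)})\bigr)$ interventional samples per biased arm. For the manipulable case the only change is that $\DO(S=i)$ may be played during the observation phase to boost the sample count of a chosen context, which sharpens constants but not the scaling.

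The regret analysis then runs as follows. A concentration argument (Hoeffding together with a union bound over the $N$ arms and $K$ contexts) gives, with high probability, $|\hat\mu_a^{(i)}-\mu_a^{(i)}|\le\varepsilon_i$ with $\varepsilon_i$ of order $\sqrt{K\,m(\mathbf p^{(i)})\log(NT)/(\alpha_i T)}$, the extra factor $K$ reflecting the even budget split; the horizon condition \eqref{eq:Tlowerthm}, organized around $\max_i m(\mathbf p^{(i)})/\alpha_i$, is exactly what guarantees that every context---including a rare but hard one---collects enough samples for these bounds and for the biased/stochastic classification to be correct. Since the reported action is $\hat A_T=\argmax_a\hat\mu_a$, we have $R_T\le 2\max_a\sum_i\alpha_i|\hat\mu_a^{(i)}-\mu_a^{(i)}|+(\text{lower-order }\hat\alpha\text{ error})$, so the decisive step is the aggregation
\[
\sum_i\alpha_i\varepsilon_i\;\lesssim\;\sqrt{\frac{K\log(NT)}{T}}\sum_i\sqrt{\alpha_i\,m(\mathbf p^{(i)})}\;\le\;\sqrt{\frac{K\log(NT)}{T}}\cdot\sqrt{K\,\widetilde m},
\]
where the last inequality is Cauchy--Schwarz, $\sum_i\sqrt{\alpha_i m(\mathbf p^{(i)})}\le\sqrt{K\sum_i\alpha_i m(\mathbf p^{(i)})}=\sqrt{K\,\widetilde m}$. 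This yields precisely the $\sqrt{K(2K+1)\,\widetilde m\,\log(NT)/T}$ scaling of \eqref{eq:upper bound nmc K}; carrying the improved observation-phase constants through the same steps gives \eqref{eq:upper bound mc K}.

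The step I expect to be the main obstacle is forcing the aggregation to land on the $\alpha$-weighted quantity $\widetilde m=\sum_i\alpha_i m(\mathbf p^{(i)})$ rather than the far larger $\sum_i m(\mathbf p^{(i)})$ that a naive ``intervene once on every arm biased in some context'' scheme would produce. This is where the non-manipulability of $S$ is felt: because the agent cannot steer $S$ during a manual intervention, the samples a biased arm contributes to context $i$ are thinned by $\alpha_i$, and the budget must be apportioned so that the per-context errors, reweighted by $\alpha_i$ and summed, collapse through Cauchy--Schwarz to $\sqrt{K\widetilde m}$. A secondary difficulty is controlling the mis-classification of borderline arms (empirical marginal near the $1/s$ threshold) and the error in $\hat\alpha_i$; I expect both to be absorbable into the stated constants once \eqref{eq:Tlowerthm} holds.
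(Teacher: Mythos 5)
Your proposal is correct and takes essentially the same route as the paper: the paper in fact omits a detailed proof of Thm.~\ref{thm:upper bound K}, stating it is proved in the same way as Thm.~\ref{thm:upper bound}, and your construction---pure observation for the ``easy'' entries, a per-context/per-arm-value split of the intervention budget for the ``hard'' entries with the $\alpha_i$-thinning of interventional samples, then $\alpha_i$-weighted aggregation collapsed via Cauchy--Schwarz to $\widetilde m=\sum_i \alpha_i m(\pv^{(i)})$---is exactly that generalization of the two-stage Alg.~\ref{alg:nmc}/Alg.~\ref{alg:mc} analysis. The only cosmetic difference is block-size bookkeeping (the paper's $2K+1$ and $7K+1$ partitions versus your even $K$-way split), which affects constants, not the argument.
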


\section{Technical Preliminaries} 
\label{sec:Related Models}

Before delving into the proofs of the main results, let us introduce some further notation that will be useful in the development of the paper.

\subsection{Parallel Bandit Model}
\label{sec:Parallel Bandit Model}

The parallel bandit model studied in~\cite{lattimore2016causal}  is presented in Fig. \ref{fig:network_topologies_2}.
This model can be obtained from the Bernoulli HCB when $\mathbf{p}=\mathbf{q}$.
An algorithm is introduced in~\cite{lattimore2016causal} for the parallel bandit which attains the simple regret 
\ea{
R_T=\Ocal \left(\sqrt{m(\mathbf{p})\frac{\log (NT/m(\mathbf{p}))}{T}}\right).
}
%
In particular, since the variables in $\Xv$ are mutually independent, it is easy to see that $P(Y=1|\DO(X_n=1)) = P(Y=1|X_n = 1)$ and $P(Y=1|\DO(X_n=0)) = P(Y=1|X_n=0)$. The agent is then able to utilize outcomes from pure observations to deduce the impact of interventions, which largely improves the efficiency of learning.

The authors of \cite{lattimore2016causal} also provide a lower bound for the parallel bandit for a fixed $\pv$ by showing that there exists a reward function $r(\xv)$ yielding a simple regret $R_T$ such that
%
$
R_T   =\Omega \left(\sqrt{\frac{m(\mathbf{p})}{T}} \right).  
$

\subsection{Formulation of Causal Bandit Algorithms}
\label{sec:Regret Upper Bounds}
In order to study the simple regret in \eqref{eq:pure exploration regret}, one has to specify how the agent sequentially determines the action at round $t \in [T]$, and identifies the optimal action after $T$ rounds of experiments, having observed the past values of the reward $Y$, the arms' realization $\Xv$, and that of the context $S$.
Let us more formally introduce the algorithm determining the agent's action and identifying the optimal intervention. 
Let $\Xv^{(t)}$, $Y^{(t)}$, and $S^{(t)}$ be the realizations of the variables in the $t^{\rm th}$ round for $t\in [T]$.
%
%
Let the history of the context, reward, arms, and actions up to time $t$ be denoted as   
$\Hv_t = \lcb S^{(\ell)},\Xv^{(\ell)},Y^{(\ell)},A^{(\ell)} \rcb_{\ell\in[t]}$. 
%
%
An algorithm of the causal bandit problem (either $\mathsf{mc}$ or $\mathsf{nmc}$) $\phi$, is then defined as a collection of functions  
\ea{\lcb g^{(t)} \lb \Hv_{t-1}, W_A^{(t)} \rb \rcb _{t \in [T+1]},
\label{eq:algo}
}
where 
\ea{
g^{(t)} : \lb \Scal \times \Xcal^N \times \Ycal \times \Acal \rb^{t-1} \times [0,1] \goes \Acal,
\label{eq:algo2}
}
and $W_A^{(t)} \overset{\rm i.i.d.}{\sim}  \Ucal([0,1])$ are uniform random variables that characterize the randomness of the algorithm.
Here, for $t\in [T]$, $g^{(t)}$ is the function that determines the agent's action in the $t^{\rm th}$ experiment, while $g^{(T+1)}$ is the function that identifies an action $\widehat{A}_T$ as the optimal one based on the entire history after completing $T$ rounds of experiments.  
In \eqref{eq:algo2}, the set of actions $\Acal$ is specified as \eqref{eq:nmc} and \eqref{eq:mc} for the scenario of non-manipulable and manipulable context, respectively. 
Note that the algorithm $\phi$ does not depend on the distributions of the arms $\pv$, $\qv$, the distribution of the context, $\alpha$, or the reward function $r$, as these parameters are not provided to the agent and usually need to be learnt by the algorithm.

\subsection{Properties of KL Divergence}
\label{sec:Regret Lower Bounds}

When determining a lower bound on the regret, it is customary to consider the setting in which the arm distribution is fixed but the reward function is chosen by an adversary trying to maximize the simple regret. 
In particular, a lower bound on the simple regret can be obtained by designing a reward function for which finding the correct choice of the action $\Ah_T$ is particularly challenging.

In our converse proof, following \cite{lattimore2016causal}, we will crucially rely on the properties of the Kullback-Leibler (KL) divergence between two distributions $P$ and $Q$, defined as
\ea{
D_{\rm KL}(P_X,Q_X)=\sum_{ x \in \Xcal}  P_X(x)\log  \f {P_X(X)}{Q_X(x)}.
}
The conditional version of the KL divergence is defined as \ea{
D_{\rm KL} (P_{Y|X},Q_{Y|X})=\Ebb_{x \sim P_X} D_{\rm KL} (P_{Y|X=x},Q_{Y|X=x}).
}
The following two lemmas involving the KL divergence will be useful in the proof of lower bound.
\begin{lemma}
[Properties of KL divergence \cite{cover1999elements,tsybakov2008introduction}]
Let $P(x,y)$ and $Q(x,y)$ be two distributions over the support $\Xcal\times\Ycal$ with $\Xcal = \Ycal =\{0,1\}$, then the KL divergence satisfies
\ea{
D_{\rm KL} (P_{XY},Q_{XY}) = D_{\rm KL} (P_{X},Q_{X})  +D_{\rm KL} (P_{Y|X},Q_{Y|X}).
}
Additionally $D_{\rm KL}(P_X,Q_X)\geq 0$ and
\ea{
D_{\rm KL}(P_X,Q_X) \leq \chi^2(P_X,Q_X)
}
where $\chi^2(P_X,Q_X) = \frac{(p_X-q_X)^2}{q_X(1-q_X)}$, with $p_X = P_X(X=1)$, $q_X = Q_X(X=1)$.
\end{lemma}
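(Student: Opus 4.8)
The statement bundles three standard facts about Kullback--Leibler divergence, so the plan is to establish each by a short direct computation from the definition: I would handle the chain rule first, then non-negativity, and finally the $\chi^2$ bound.

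For the chain rule, I would start from the factorizations $P(x,y)=P_X(x)P_{Y|X}(y\mid x)$ and $Q(x,y)=Q_X(x)Q_{Y|X}(y\mid x)$, substitute them into $D_{\rm KL}(P_{XY},Q_{XY})=\sum_{x,y}P(x,y)\log\frac{P(x,y)}{Q(x,y)}$, and split the logarithm additively into $\log\frac{P_X(x)}{Q_X(x)}+\log\frac{P_{Y|X}(y\mid x)}{Q_{Y|X}(y\mid x)}$. Summing the first contribution over $y$ uses $\sum_y P_{Y|X}(y\mid x)=1$ and leaves exactly $D_{\rm KL}(P_X,Q_X)$; regrouping the second contribution as $\sum_x P_X(x)\,D_{\rm KL}(P_{Y|X=x},Q_{Y|X=x})$ reproduces $D_{\rm KL}(P_{Y|X},Q_{Y|X})$ by its definition as an expectation over $x\sim P_X$. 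No inequalities are needed here, only the definitions.

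For non-negativity, I would invoke Gibbs' inequality: writing $-D_{\rm KL}(P_X,Q_X)=\sum_x P_X(x)\log\frac{Q_X(x)}{P_X(x)}$ and applying Jensen's inequality with the concavity of $\log$ bounds this by $\log\sum_x Q_X(x)=\log 1=0$, giving $D_{\rm KL}(P_X,Q_X)\ge 0$. For the $\chi^2$ upper bound, I would use the elementary inequality $\log t\le t-1$ at $t=P_X(x)/Q_X(x)$, which yields $D_{\rm KL}(P_X,Q_X)\le \sum_x \frac{P_X(x)^2}{Q_X(x)}-1$. A direct expansion shows that the right-hand side equals $\sum_x \frac{(P_X(x)-Q_X(x))^2}{Q_X(x)}$, and specializing to the binary alphabet by summing the two terms $x\in\{0,1\}$ gives $(p_X-q_X)^2\bigl(\tfrac1{q_X}+\tfrac1{1-q_X}\bigr)=\frac{(p_X-q_X)^2}{q_X(1-q_X)}$, matching the stated form of $\chi^2(P_X,Q_X)$.

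Since all three parts are textbook identities, I do not anticipate a genuine obstacle; the only points requiring care are applying $\log t\le t-1$ in the correct direction so that the bound points the right way, and carrying out the binary $\chi^2$ algebra so as to land precisely on the denominator $q_X(1-q_X)$ rather than on a looser intermediate expression.
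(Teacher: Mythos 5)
Your proof is correct, and it is essentially the standard textbook argument: the paper itself does not prove this lemma but cites it to Cover--Thomas and Tsybakov, and your three steps (factor-and-split for the chain rule, Jensen/Gibbs for non-negativity, and $\log t \le t-1$ followed by the algebraic identity $\sum_x P_X(x)^2/Q_X(x) - 1 = \sum_x (P_X(x)-Q_X(x))^2/Q_X(x)$ for the $\chi^2$ bound) are exactly the derivations found in those references. Two small points worth noting: the inequality $D_{\rm KL}\le\chi^2$ as you derive it requires the natural logarithm (which is indeed the convention the paper uses, as seen in the $\exp(-D_{\rm KL})$ of Lemma~\ref{lemma:subset}), and both the Jensen step and the $\chi^2$ step implicitly assume $Q_X(x)>0$ on the support of $P_X$, i.e., $q_X\in(0,1)$, which is harmless here since the paper only applies the bound to reward functions valued in $(\tfrac12-\varepsilon,\tfrac12+\varepsilon)$.
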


\begin{lemma}[Lemma 2.6 in~\cite{tsybakov2008introduction}]\label{lemma:subset}
For two probability distributions $P$, $Q$, on the same measurable space, 
\begin{equation}
    P(\Bcal)+Q(\Bcal^c)\geq\frac{1}{2}\exp\left(-D_{\rm KL}(P,Q)\right),
\end{equation}
for every measurable subset $\Bcal$ and its complement $\Bcal^c$. 
\end{lemma}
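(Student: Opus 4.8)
The plan is to strip away the dependence on $\Bcal$ by a pointwise argument and then route through the Bhattacharyya/Hellinger affinity, which serves as a convenient bridge between a total-variation-type quantity and the KL divergence. Throughout, let $\mu$ be any common dominating measure for $P$ and $Q$ (for instance $\mu=P+Q$), and write $p=dP/d\mu$, $q=dQ/d\mu$ for the corresponding densities.

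\emph{Step 1 (reduction to the optimal test).} For any measurable $\Bcal$ I would lower bound each term by the pointwise minimum of the densities:
\[
P(\Bcal)+Q(\Bcal^c)=\int_{\Bcal} p\,d\mu+\int_{\Bcal^c} q\,d\mu\ \ge\ \int_{\Bcal}\min(p,q)\,d\mu+\int_{\Bcal^c}\min(p,q)\,d\mu=\int\min(p,q)\,d\mu.
\]
This eliminates $\Bcal$ entirely, so it suffices to show that the affinity $\kappa:=\int\min(p,q)\,d\mu$ satisfies $\kappa\ge\tfrac12 e^{-D_{\rm KL}(P,Q)}$.

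\emph{Step 2 (Cauchy--Schwarz bridge to the affinity).} Introduce $\rho:=\int\sqrt{pq}\,d\mu$. Writing $\sqrt{pq}=\sqrt{\min(p,q)}\,\sqrt{\max(p,q)}$ and applying Cauchy--Schwarz gives
\[
\rho\ \le\ \Big(\int\min(p,q)\,d\mu\Big)^{1/2}\Big(\int\max(p,q)\,d\mu\Big)^{1/2}.
\]
Since $\max(p,q)=p+q-\min(p,q)$ integrates to $2-\kappa\le 2$, this yields $\rho^2\le 2\kappa$, i.e. $\kappa\ge\tfrac12\rho^2$. \emph{Step 3 (Jensen bridge to KL).} Writing $\rho=\int p\,\sqrt{q/p}\,d\mu=\mathbb{E}_P\big[\exp(\tfrac12\log(q/p))\big]$ and using convexity of $\exp$, Jensen's inequality gives
\[
\rho\ \ge\ \exp\Big(\tfrac12\,\mathbb{E}_P\big[\log(q/p)\big]\Big)=\exp\Big(-\tfrac12 D_{\rm KL}(P,Q)\Big),
\]
so $\rho^2\ge e^{-D_{\rm KL}(P,Q)}$. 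Chaining Steps 1--3 yields $P(\Bcal)+Q(\Bcal^c)\ge\kappa\ge\tfrac12\rho^2\ge\tfrac12 e^{-D_{\rm KL}(P,Q)}$, which is the claim.

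The argument is elementary and no step poses a serious obstacle; the only point requiring care is the support bookkeeping in Step 3, where the ratio $q/p$ is undefined on $\{p=0\}$. I would handle this by restricting the integrals to $\{p>0,\,q>0\}$ (the integrand $\sqrt{pq}$ vanishes off this set in any case) and noting that $D_{\rm KL}(P,Q)=+\infty$ whenever $P$ is not absolutely continuous with respect to $Q$, in which case the right-hand side is $0$ and the inequality is vacuous. With that caveat, $\mathbb{E}_P[\log(q/p)]=-D_{\rm KL}(P,Q)$ and the chain closes cleanly.
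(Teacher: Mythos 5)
Your proof is correct, and it is precisely the canonical argument behind the cited result: the paper itself does not prove this lemma but imports it as Lemma~2.6 of \cite{tsybakov2008introduction}, whose proof proceeds exactly via your three steps --- reduction to $\int \min(p,q)\,d\mu$, the Cauchy--Schwarz bound $\int\min(p,q)\,d\mu \geq \frac{1}{2}\left(\int\sqrt{pq}\,d\mu\right)^2$, and Jensen's inequality linking the Hellinger affinity to $\exp\left(-\frac{1}{2}D_{\rm KL}(P,Q)\right)$. Your closing remark on support issues (treating $P\not\ll Q$ as the vacuous case $D_{\rm KL}=+\infty$) correctly disposes of the only measure-theoretic subtlety, so nothing is missing.
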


We are now ready to provide a proof for our main results, Thm.~\ref{thm:upper bound} and Thm.~\ref{thm:lower_bound}.
The proof of Thm. \ref{thm:upper bound K} is omitted as it can be proved in the same way as Thm. \ref{thm:upper bound}.

\section{Proof of Theorem \ref{thm:upper bound}
}
\label{sec:upper}

In this section we provide the proof sketch for Thm. \ref{thm:upper bound} for the non-manipulable case.
The complete proof can be found in App.~\ref{app:proof_upper}.
We begin by introducing two simple lemmas to further characterize $m(\vv)$ in \eqref{def:m}, the proof of which can be found in Apps.~\ref{app:exists} and~\ref{app:small}.
\begin{lemma}
\label{lem:exists}
The quantity $m(\vv)$ in \eqref{def:m} is well defined for all probability vectors $\vv$.
\end{lemma}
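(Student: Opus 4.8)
The plan is to recast the definition of $m(\vv)$ as the point where an increasing line overtakes a non-increasing step function, and then to check that this crossing point exists and is attained. Writing $b_n=\min\{v_n,\vo_n\}\in[0,\tfrac12]$, one sees that $n\in\mathcal I_s(\vv)$ exactly when $b_n<1/s$, equivalently (for $b_n>0$) when $s<1/b_n$, while every index with $b_n=0$ lies in $\mathcal I_s(\vv)$ for all $s>0$. Thus $f(s):=|\mathcal I_s(\vv)|=\sum_{n\in[N]}\mathbf 1_{\{s<1/b_n\}}$ (with the convention $1/0=+\infty$) is a non-increasing, integer-valued step function of $s$, bounded above by $N$, and right-continuous, since each indicator $\mathbf 1_{\{s<1/b_n\}}$ equals $1$ on the right-open interval $(0,1/b_n)$ and $0$ on $[1/b_n,\infty)$. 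With this notation the lemma reduces to showing that $A:=\{s\in\Rbb^+:s\geq f(s)\}$ is nonempty and that $\inf A\in A$, so that $m(\vv)=\min A$ is a well-defined positive real.

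First I would record nonemptiness together with a simple structural property of $A$. Because $f(s)\leq N$ for every $s$, any $s\geq N$ satisfies $s\geq N\geq f(s)$, so $[N,\infty)\subseteq A$ and $s^*:=\inf A$ is finite. Next, $A$ is an up-set: if $s\in A$ and $s'\geq s$, monotonicity of $f$ gives $s'\geq s\geq f(s)\geq f(s')$, hence $s'\in A$. It follows that $A$ equals either $[s^*,\infty)$ or $(s^*,\infty)$, and the only remaining task is to exclude the open case.

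To attain the infimum I would pick a sequence $s_k\downarrow s^*$ with $s_k\in A$, which exists since $s^*=\inf A$ and $A$ is an up-set. Each $s_k$ obeys $s_k\geq f(s_k)$; passing to the limit and invoking the right-continuity of $f$ at $s^*$ gives $s^*=\lim_k s_k\geq\lim_k f(s_k)=f(s^*)$, so $s^*\in A$. Hence $A=[s^*,\infty)$ and $m(\vv)=\min A=s^*$ is well defined. Finally, positivity is immediate: for every $s<1$ we have $1/s>1\geq\tfrac12\geq b_n$ for all $n$, so $f(s)=N\geq 1>s$, whence $s\notin A$; therefore $s^*\geq 1>0$ and $s^*\in\Rbb^+$.

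The only genuinely delicate point is the attainment of the infimum, i.e.\ that an actual minimum (not merely an infimum) exists; this hinges entirely on the right-continuity of $f$, which prevents the crossing $s=f(s)$ from being ``skipped over'' at a downward jump. By contrast, nonemptiness follows at once from $f\leq N$, and the up-set structure follows at once from the monotonicity of $f$.
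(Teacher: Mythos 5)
Your proof is correct and follows essentially the same route as the paper's: both establish that $f(s)=|\mathcal I_s(\vv)|$ is a non-increasing, right-continuous step function, observe that $\{s\in\Rbb^+:s\geq f(s)\}$ is a nonempty up-set (interval), and then use right-continuity to show the infimum is attained. Your indicator decomposition $f(s)=\sum_{n\in[N]}\mathbf 1_{\{s<1/b_n\}}$ is a slightly cleaner way to get right-continuity than the paper's explicit enumeration of jump points, and your sequential limit argument replaces the paper's proof by contradiction, but these are cosmetic differences in the same argument.
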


\begin{lemma}
\label{lem:small}
Suppose the entries of $\vv \in (0,1)^N$ are such that $0\leq v_1\leq v_2\leq \ldots \leq v_N\leq 1/2$, then $v_n\leq 1/m(\vv)$ for all $n  \in [\lceil m(\mathbf{v})\rceil ]$.
\end{lemma}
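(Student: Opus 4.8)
The plan is to exploit the monotonicity built into the definition of $m(\vv)$. Since $0 \le v_1 \le \cdots \le v_N \le 1/2$, we have $\min\{v_n, \vo_n\} = v_n$ for every $n$, so the index set collapses to $\mathcal I_s(\vv) = \{n \in [N] : v_n < 1/s\}$, and because the entries are sorted this is always a prefix $\{1,\ldots,k\}$ of $[N]$. I would then write $f(s) := |\mathcal I_s(\vv)|$ and record its two structural features: it is integer-valued and non-increasing in $s$ (a larger $s$ imposes the stricter threshold $1/s$), and it is right-continuous, with its downward jumps located precisely at the points $s = 1/v_n$ (since $v_n < 1/s \iff s < 1/v_n$).

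The heart of the argument is to extract information from the minimality in $m(\vv) = \min\{s : s \ge f(s)\}$. Relying on Lemma~\ref{lem:exists} for well-definedness, I would first note that the minimum is genuinely attained: the map $s \mapsto s - f(s)$ is strictly increasing and right-continuous, so $\{s : s \ge f(s)\}$ is a closed half-line $[m,\infty)$ with $m := m(\vv)$ satisfying $m \ge f(m)$. Minimality then forces $f(s) > s$ for every $s < m$. Letting $s \uparrow m$ and using that the step function $f$ is constant just to the left of $m$, this strict inequality survives in the limit and yields $f(m^-) \ge m$, where the left-limit counts the indices with $v_n \le 1/m$; precisely, $f(m^-) = |\{n : v_n \le 1/m\}|$.

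From here the conclusion is immediate. Since $f(m^-)$ is an integer satisfying $f(m^-) \ge m$, we obtain $f(m^-) \ge \lceil m \rceil$, i.e.\ at least $\lceil m \rceil$ of the entries satisfy $v_n \le 1/m$. Because the entries are arranged in increasing order, these must be the first $\lceil m \rceil$ of them, so $v_n \le 1/m$ for every $n \in [\lceil m \rceil]$, which is exactly the claim.

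I expect the only delicate point to be the boundary bookkeeping around $s = m$: one must carefully distinguish the value $f(m) = |\{n : v_n < 1/m\}|$ from the left-limit $f(m^-) = |\{n : v_n \le 1/m\}|$ (the gap being exactly the entries equal to $1/m$), and confirm that the strict inequality $f(s) > s$ for $s < m$ degrades only to the non-strict $f(m^-) \ge m$ in the limit — it is this non-strictness that makes the bound $v_n \le 1/m$, rather than $v_n < 1/m$, the sharp statement. Everything else reduces to routine monotonicity and the sortedness of $\vv$.
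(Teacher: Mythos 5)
Your proof is correct, and the delicate point you flag is handled properly: writing $m = m(\vv)$, since $f(s)=|\mathcal I_s(\vv)|$ is a non-increasing, integer-valued step function, it is constant on some interval $(m-\eta,m)$, so the strict inequality $f(s)>s$ there does pass to the left limit and gives $f(m^-)\geq m$, and $f(m^-)=|\{n: v_n\leq 1/m\}|$ is exactly the right count. Your route differs in execution from the paper's: the paper argues by contradiction, assuming $v_{\lceil m\rceil}>1/m$, writing $v_{\lceil m\rceil}=1/(m-\delta_1)$ with $\delta_1>0$, and splitting into the two cases $m-\delta_1\geq \lceil m\rceil-1$ and $m-\delta_1<\lceil m\rceil-1$; in each case it evaluates the counting function at a single well-chosen point ($s=m-\delta_1$, respectively $s=\lceil m\rceil-1$) and contradicts the minimality property that $s<m$ implies $|\mathcal I_s(\vv)|>s$. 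Both arguments thus rest on the same two ingredients --- minimality forcing $|\mathcal I_s(\vv)|>s$ strictly below $m$, and sortedness making $\mathcal I_s(\vv)$ a prefix of $[N]$ --- but yours is direct: it replaces the case analysis by a single limit $s\uparrow m$ and uses integrality of $f(m^-)$ to round $m$ up to $\lceil m\rceil$, which also makes transparent why the sharp conclusion is $v_n\leq 1/m$ rather than $v_n<1/m$. The paper's version avoids any limiting argument by working at concrete evaluation points, at the cost of the two-case bookkeeping; the two proofs are otherwise of comparable length and elementariness.
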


%
Lem.~\ref{lem:small} more precisely characterizes our earlier statement that $m(\mathbf{v})$ is a measure of the problem complexity. 
%
When all entries in $\vv$ are below $1/2$, then we have that the $\lceil m(\mathbf{v})\rceil$ smallest entries are at most $1/m(\vv)$.
%
%
Intuitively, this set of outcomes is hard to observe and, consequently, it is hard to estimate the corresponding reward. 
In order to obtain a good estimate for such events, we need to  conduct a number of interventions proportional to $m(\vv)$.
For this reason, $m(\vv)$ serves as a metric of the problem complexity.    
%
%

Next, we introduce a lemma characterizing the effect of an intervention on an arm of the HCB model.
\begin{lemma}\label{lemma:intervention}
For the Bernoulli HCB model, the following statement about interventions are true:
\begin{align}
    \mathbb P(Y=1|\DO(X_i=x)) =  \alpha P(Y=1|X_i=x,S=1)  +\overline{\alpha}P(Y=1|X_i=x,S=0),
  \label{eq:do1}
\end{align}
for $i\in [N]$, $x\in\mathcal X$. In addition, we have
\eas{
\label{eq:do2}
    &P\left(Y=1|\DO(\varnothing), X_i=x,S=s\right) \nonumber\\
    & \quad \quad = P(Y=1|\DO(X_i=x),S=s)\\
    & \quad \quad = P(Y=1|\DO(S=s),X_i=x).
    \label{eq:do3}
    }
with  $i\in [N]$, $x\in \mathcal X$, and $s \in \mathcal S$.
\end{lemma}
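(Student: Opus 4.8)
\textbf{Proof proposal for Lemma~\ref{lemma:intervention}.}

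The plan is to prove all three identities directly from the factorization in~\eqref{eq:joint} together with the rule that an intervention $\DO(Z=z)$ deletes the edges incoming to $Z$ and replaces the factor $P_{Z\mid\pa(Z)}$ by the point mass $\ones_{\{Z=z\}}$. The key structural fact I would exploit throughout is that in the HCB graph the reward $Y$ depends on the rest of the system only through the arm values $\Xv$; that is, $P_{Y\mid\Xv}$ is the same context-invariant factor in every intervened model. This means that once I know the joint law of $\Xv$ under a given intervention, the conditional law of $Y$ given $\Xv=\xv$ is unchanged, and all the differences between the stated quantities reduce to differences in how $\Xv$ is distributed.

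For~\eqref{eq:do1}, I would start by computing $P(\Xv=\xv\mid\DO(X_i=x))$. Under this intervention the factor $P_{X_i\mid S}$ is replaced by $\ones_{\{X_i=x\}}$, while the factors $P_S$ and $P_{X_n\mid S}$ for $n\neq i$ are retained. Marginalizing $S$ out and then multiplying by the untouched reward factor $P_{Y\mid\Xv}$, I would write
\begin{equation}
P(Y=1\mid\DO(X_i=x))=\sum_{s\in\{0,1\}}P(S=s)\,P(Y=1\mid X_i=x,S=s).
\end{equation}
Since $P(S=1)=\alpha$ and $P(S=0)=\overline\alpha$, this is exactly~\eqref{eq:do1}. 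Here $P(Y=1\mid X_i=x,S=s)$ denotes the observational conditional in which $X_i$ is clamped to $x$, the remaining arms are drawn from $P_{X_n\mid S=s}$, and $Y$ is then drawn from $P_{Y\mid\Xv}$; the point worth stating carefully is that conditioning on $S=s$ and intervening $\DO(X_i=x)$ yields the same distribution over the remaining arms, because in this DAG $X_i$ has no descendants other than $Y$ and hence deleting the edge $S\to X_i$ does not alter the law of $\Xv_{[N]\setminus\{i\}}$ given $S=s$.

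For the chain of equalities~\eqref{eq:do2}--\eqref{eq:do3}, the unifying idea is that conditioning on $S=s$ and/or on $X_i=x$ in the observational model induces, for the purpose of computing $P(Y=1\mid\cdots)$, the same distribution over the arms that drive $Y$ as the corresponding interventional models do. Concretely: (i) for the first equality I would argue that $\DO(X_i=x)$ followed by conditioning on $S=s$ produces the same law of $\Xv$ as the observational conditioning on both $X_i=x$ and $S=s$, since $X_i$ is a leaf among the arms (no arm is a child of $X_i$); (ii) for the second equality I would note that conditioning on $S=s$ versus intervening $\DO(S=s)$ are interchangeable because $S$ is a source node with no parents, so deleting its (empty) incoming edge set changes nothing and $P(S=s)$ only reweights rather than alters the downstream conditional. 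In each case the retained reward factor $P_{Y\mid\Xv}$ lets me pass from the arm law to the reward probability. The main obstacle is bookkeeping rather than depth: I must justify each do/see interchange via the specific graph topology (leaf-arm and source-context) rather than by a generic rule, so I would explicitly invoke that $X_i$ has no children among the arms and that $S$ has no parents, making these the two ``backdoor-free'' configurations where seeing equals doing.
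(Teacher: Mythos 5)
Your proposal is correct and takes essentially the same approach as the paper: both proofs reduce \eqref{eq:do1} to the truncated factorization of \eqref{eq:joint} under $\DO(X_i=x)$ combined with the two conditional independencies that $Y$ is independent of $S$ given $\Xv$ and that the arms are mutually independent given $S$, which is precisely what the paper's explicit sum manipulations encode. The only cosmetic difference is that you state these facts graphically (d-separation/back-door style) and spell out \eqref{eq:do2}--\eqref{eq:do3} explicitly, whereas the paper carries out the algebra for \eqref{eq:do1} and treats the remaining identities as analogous.
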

\begin{proof}
The proof of this lemma is provided in App.~\ref{app:intervention}.
Note that this lemma can be extended to the general HCB formulation. 
\end{proof}
From Lem.~\ref{lemma:intervention}, we see that intervening on the $i^{\rm th}$ arm, e.g., $\DO(X_i=1)$, has the effect of removing the effect of the stochasticity of that arm in the reward.
Also, having observed a certain realization of the context or the arm, we can interpret this outcome as arising from an intervention enforcing such realizations. 

This lemma is rather straightforward but it is the core of our  algorithm design. 
%
In particular, we see that the impact of the action $\DO(X_i=1)$ on the reward can be estimated from the estimates of $\alpha$, $P(Y=1|X_i=1,S=1)$, and $P(Y=1|X_i=1,S=0)$. 
In other words, we can estimate $P(Y=1|\DO(X_i=1))$ by utilizing samples collected through pure observation (corresponding to action $\DO(\varnothing )$) without actually conducting the action $\DO(X_i=1)$. 
This implies that $\DO(\varnothing )$ is very efficient, since the resulting observations can be used to 
simultaneously learn the impact of multiple actions.
Note that this observation is also valid for the 
parallel bandit model of \cite{lattimore2016causal}. 

Let us now introduce Alg.~\ref{alg:nmc} for the HCB-nmc model, which yields the simple regret \eqref{eq:upper bound nmc} in Thm.~\ref{thm:upper bound}.
The algorithm consists of two stages. In the first stage, we purely observe the system and estimate its parameters by averaging the random observations. 
In particular, we derive the estimates in lines 5--8 in Alg.~\ref{alg:nmc} for $\al,\pv,\qv$, 
$\mu_{sljk}:=P(Y=1|S=l,X_j=k)$, and $\mu_{\DO(\varnothing )}=P(Y=1|\DO(\varnothing )$.
These estimated quantities are indicated with a hat in the notation in the following paragraphs.

From the estimates $\hat{\pv}$ and $\hat{\qv}$, we next estimate $m(\pv)$ and $m(\qv)$ and decide which estimates can be considered reliable. 
For instance, if the number of occurrences of the event $\{S=1,X_j=1\}$ surpasses a threshold determined by $m(\phv)$, then the estimator $\hat{\mu}_{s1j1}$ can be considered sufficiently accurate.
Otherwise, we will derive a better estimate for this value in the second stage of the algorithm. 

In the second stage, the algorithm focuses on the values of $\mu_{sljk}$ that have not been estimated accurately enough in the first stage. For each of these values, we conduct intervention $\DO(X_j=k)$ to increase the chance of observing the corresponding event, thus resulting in a better estimate.  
From Lem.~\ref{lem:small} we see that, with high probability, the number of such values is proportional to $m(\pv)$ or $m(\qv)$. 
Hence, we are able to allocate a sufficient number of experiments to intervene each required variable such that the upper bound in \eqref{eq:upper bound nmc} of Thm.~\ref{thm:upper bound} can be satisfied.  
So far, we have sketched the proof of the upper bound for the non-manipulable case through constructing an algorithm satisfying the performance requirement. 
In Alg.~\ref{alg:nmc}, in lines 13 and 14 we define
\ea{
\Bcal(\vv,z)= \lcb j \in [N], v_j< \f 1 z \rcb 
\label{eq:bcal}
}
for $\vv \in (0,1)^N$, $z\in \mathbb R_+$.  That is, Eq.~\eqref{eq:bcal} indicates the entries of $\vv$ that are highly biased towards zero according to the threshold $1/z$.

After the estimates of each entry $\mu_{sljk}$ are sufficiently precise,  the estimate of $\muh_{\DO(X_j=1)}$ is obtained from $\alh$, $\muh_{s1j1}$, and $\muh_{s0j1}$.  See Alg.~\ref{alg:nmc} lines 22-24.
Finally, the algorithm outputs $\hat{A}_T \in \Acal^{\rm nmc}$, the action that corresponds to the largest estimated reward.

\begin{algorithm}
  \SetKwFunction{FMain}{Refine}
  \SetKwProg{Fn}{Function}{:}{}
  \Fn{\FMain{$\Bcal$,$s$,$x$,$\tau$,$d$}}{
  $i=1$\;
  
   \For {$j \in \Bcal$}{
\For { $t \in [\tau+(i-1)d/|\mathcal B|: \tau + id/|\Bcal|]$}{
choose $\DO(X_j=x)$ \;

}
$i_j=i$, $i=i+1$\;
}
\For {$j \in \Bcal$}{
         $c_j=\sum_{t \in [\tau+(i_j-1)d/|\mathcal B|:\tau+i_j d/|\mathcal B|]} \onev_{\{S^{(t)}=s \}}$ \;
         
         $f_j=\sum_{t \in [\tau+(i_j-1)d/|\mathcal B|:\tau+i_j d/|\mathcal B|]} \onev_{\{S^{(t)}=s,Y^{(t)}=1 \}}$ \;
        
        $u_j = f_j/c_j$\;
        }
        
  }

\KwResult{$\uv$}

\vspace{0.25cm}
\tcp{Stage 1} 
\vspace{0.25cm}

Fix $T'=T/5$

\For{ $t \in [T']$}{
choose $\DO(\varnothing )$ \;
}

estimate $\al$ as $\alh=\f {1} {T'} \sum_{t \in [T']} S^{(t)}$\;

estimate $p_i$ as $\ph_i=\f 1 {T' \alh} \sum_{t \in [T']} X_i^{(t)}S_i^{(t)}$\;

estimate $q_i$ as $\qh_i=\f 1 {T' (1-\alh)} \sum_{t \in [T']} X_i^{(t)}(1-S_i^{(t)})$\;

estimate $\mu_{\DO(\varnothing )}$ as  $\muh_{\DO(\varnothing )} =\f {1} {T'} \sum_{t \in [T']} Y^{(t)}$\;

estimate $\mu_{s1j1}$ 
as \\
$\muh_{s1j1}=\f 1 {T' \ph_j \alh} \sum_{t \in [T']}Y^{(t)}X_j^{(t)}S^{(t)}$\;

estimate $\mu_{s1j0}$  
as \\ 
$\muh_{s1j0}=\f 1 {T' (1-\ph_j) \alh} \sum_{t \in [T']}Y^{(t)}(1-X_j^{(t)})S^{(t)}$\;

estimate $\mu_{s0j1}$  
as  \\
$\muh_{s0j1}=\f 1 {T' \ph_j (1-\alh)} \sum_{t \in [T']}Y^{(t)}X_j^{(t)}(1-S^{(t)})$\;

estimate $\mu_{s0j0}$  
as \\
 $\muh_{s0j0}=\f 1 {T' (1-\ph_j) (1-\alh)} \sum_{t \in [T']}Y^{(t)}(1-X_j^{(t)})(1-S^{(t)})$\;

\vspace{0.25cm}
\tcp{Stage 2} 
\vspace{0.25cm}

let  $\hat{\mathcal B}_{11}=\Bcal(\phv,m(\phv))$,  $\hat{\mathcal B}_{10}= \Bcal(1-\phv,m(\phv))$ \;

let  $\hat{\mathcal B}_{01}=\Bcal(\qhv,m(\qhv))$,  $\hat{\mathcal B}_{00}= \Bcal(1-\qhv,m(\qhv))$ \;

$\uv_{11}$ = \FMain($ \hat{\mathcal B}_{11}$, $1$, 1, $T'$, $T'$) \;

$\uv_{10}$ = \FMain($ \hat{\mathcal B}_{10}$, 1, $0$, $2T'$, $T'$) \;

$\uv_{01}$ = \FMain($ \hat{\mathcal B}_{01}$, 0, $1$, $3T'$, $T'$) \;

$\uv_{10}$ = \FMain($ \hat{\mathcal B}_{10}$, 0, $0$, $4T'$, $T'$) \;

\For{ $l\in\Scal,k\in\Xcal,$}{
update $\muh_{sl jk}$  with $\uv_{lk}$ for $j\in\hat{\mathcal B}_{lk}$ \;
}
\For{ $j \in [N]$}{
 $\muh_{\DO(X_j=1)}=\alh \muh_{s1j1}+(1-\alh)\muh_{s0j1}$ \;
 
 $\muh_{\DO(X_j=0)}=\alh \muh_{s1j0}+(1-\alh)\muh_{s0j0}$ \;
}

output 
$\hat{A}_T = \argmax_{a\in \mathcal A^{\textsf{nmc}}}   \muh_{a}$

\caption{HCB-nmc algorthm used in Thm.~\ref{thm:upper bound}.}
\label{alg:nmc}
\end{algorithm}

For the scenario with manipulable context, we introduce Alg.~\ref{alg:mc} that achieves the upper bound \eqref{eq:upper bound mc} of Thm.~\ref{thm:upper bound} in App.~\ref{app:proof_upper}. 
This algorithm differs from  Alg.~\ref{alg:nmc} in that a portion of the interventions is dedicated to estimating $\pv$, $\qv$, and values of $P(Y=1|S=l,X_j=k)$ by manipulating the context. 
Further discussion is relegated to the appendix.

\section{Proof of Theorem \ref{thm:lower_bound}}
\label{sec:lower}

In this section we prove the lower bound for the hierarchical model in  Thm. \ref{thm:lower_bound}.
We begin by introducing the following lemmas: their proofs are provided in Apps.~\ref{app:kl} and \ref{app:1}.

\begin{lemma}\label{lemma:kl}
Fix an algorithm $\phi$. Fix system parameters $(\alpha,\mathbf{p},\mathbf{q})$. Let $r_0$, $r_i$ be two reward functions. Let $P_j$ be the probability distribution of $\Hv_T$ when the system parameters are $(\alpha,\mathbf{p},\mathbf{q},r_j)$ and the algorithm is $\phi$, for $j\in\{0,i\}$. The KL divergence between the probability distributions $P_0$ and $P_i$ can be expanded as 
\begin{align}
    &D_{\rm KL}(P_0(\Hv_T),P_i(\Hv_T)) \nonumber\\
    = &\sum_{t\in[T]} \sum_{\mathbf{x}^{(t)}\in\Xcal^{N}} P_0\left(\mathbf{x}^{(t)}\right)\Bigg(r_0\left(\mathbf{x}^{(t)}\right)\log\frac{r_0\left(\mathbf{x}^{(t)}\right)}{r_i\left(\mathbf{x}^{(t)}\right)}+\left(1-r_0\left(\mathbf{x}^{(t)}\right)\right)\log\frac{1-r_0\left(\mathbf{x}^{(t)}\right)}{1-r_i\left(\mathbf{x}^{(t)}\right)}\Bigg).
\end{align}
\end{lemma}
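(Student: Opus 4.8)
The plan is to exploit the fact that the two systems share everything except the reward function: they have identical context distribution $\alpha$, identical conditional arm distributions $\mathbf{p},\mathbf{q}$, and are run with the same fixed algorithm $\phi$. Consequently the joint law of the history $\Hv_T$ factorizes round-by-round into transition kernels that are pairwise identical under $P_0$ and $P_i$, save for the single kernel governing the reward. The natural tool is therefore the chain rule for KL divergence applied to this temporal factorization.

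First I would write, for each round $t\in[T]$, the one-step law of $(A^{(t)},S^{(t)},\mathbf{X}^{(t)},Y^{(t)})$ conditioned on the past $\Hv_{t-1}$ as a product of three factors: (i) the action kernel $A^{(t)}\mid \Hv_{t-1}$, which is determined by $g^{(t)}$ and the independent randomization $W_A^{(t)}$ and hence does not depend on $r$; (ii) the context-and-arm kernel $(S^{(t)},\mathbf{X}^{(t)})\mid(\Hv_{t-1},A^{(t)})$, which is dictated by the intervention implied by $A^{(t)}$ together with $(\alpha,\mathbf{p},\mathbf{q})$ and is likewise $r$-independent; and (iii) the reward kernel $Y^{(t)}\mid\mathbf{X}^{(t)}=\mathbf{x}^{(t)}$, which is $\mathrm{Bernoulli}(r_j(\mathbf{x}^{(t)}))$ and is the only factor in which $P_0$ and $P_i$ differ. (That the reward depends only on $\mathbf{x}^{(t)}$, and not on the action taken, is exactly the content of Lem.~\ref{lemma:intervention}.) Applying the chain rule for KL divergence to this factorization, the contributions of factors (i) and (ii) vanish: for each fixed realization of the conditioning variables the two kernels coincide, so their conditional KL is zero pointwise and therefore zero in expectation under $P_0$.

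Only the reward term survives, giving
\begin{align}
D_{\rm KL}(P_0(\Hv_T),P_i(\Hv_T)) = \sum_{t\in[T]} \mathbb{E}_{P_0}\Big[ D_{\rm KL}\big(\mathrm{Bernoulli}(r_0(\mathbf{X}^{(t)})),\mathrm{Bernoulli}(r_i(\mathbf{X}^{(t)}))\big)\Big].
\end{align}
Substituting the closed form of the Bernoulli KL divergence and writing out the expectation over $\mathbf{X}^{(t)}$ under its $P_0$-marginal $P_0(\mathbf{x}^{(t)})$ yields precisely the claimed identity. The one point requiring care is the bookkeeping in the chain-rule step: although the marginal law of $\Hv_{t-1}$ genuinely differs between $P_0$ and $P_i$ (past rewards are distributed differently), this discrepancy is correctly absorbed by taking the outer expectation under $P_0$, while the inner conditional KL of the action and context/arm kernels is zero for every fixed history. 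Making this distinction explicit---rather than incorrectly arguing that the histories are identically distributed---is the main subtlety, after which the computation is routine.
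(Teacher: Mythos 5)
Your proposal is correct and takes essentially the same route as the paper's proof: a chain-rule decomposition of the KL divergence over rounds, factorization of each one-step conditional law into the action kernel, the context/arm kernel, and the reward kernel, followed by the observation that only the reward kernel (Bernoulli with parameter $r_j(\mathbf{x}^{(t)})$) differs between $P_0$ and $P_i$, so all other terms cancel in the log-ratio. Your explicit remark that the differing marginals of $\Hv_{t-1}$ are absorbed by the outer expectation under $P_0$ is exactly the bookkeeping the paper performs implicitly in its recursive expansion.
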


\begin{lemma}\label{lem:1}
Fix $\alpha\in (0,1)$, and $\mathbf{p},\mathbf{q}\in (0,1)^{N}$ with $p_1\leq\ldots\leq p_{N}\leq\frac{1}{2}$ and $m(\pv)>2$.
Let $\pi:[\lceil m(\pv)\rceil]\mapsto [\lceil m(\pv)\rceil]$ be the permutation that sorts $(q_1,\ldots,q_{\lceil m(\pv)\rceil })$ in increasing order, i.e., $q_{\pi(1)}\leq \ldots\leq q_{\pi(\lceil m(\pv)\rceil )}$.
Define 
\begin{equation}\label{def:X_star}
    \Xcal_i^* = \left\{\mathbf{x}: x_i = 1, x_\ell = 0, \ell\in [\lceil m(\pv) \rceil]\backslash\{i\}\right\}
\end{equation}
for each $i\in [\lceil m(\pv) \rceil]$, and define 
$
\mathcal I = \left\{\pi(j), 1\leq j\leq \left\lfloor\frac{m(\pv) }{2}\right\rfloor\right\}.
$
For each $i\in [\lceil m(\pv) \rceil]$, we have 
\begin{equation}
    \frac{\alpha}{e}\leq P\lb \Xv\in \Xcal_{i}^*|\DO(X_{i}=1) \rb \leq 1,
\end{equation}
and for each $i\in \mathcal I$, we have 
$
    P\lb \Xv\in \Xcal_{i}^*|a\rb \leq \frac{1}{m(\pv) },
$
for any action $a\neq \DO(X_i=1)$. 
\end{lemma}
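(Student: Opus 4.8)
The plan is to evaluate each conditional probability explicitly by conditioning on the context $S$ and using the conditional independence of the arms in \eqref{eq:joint}. Under $\DO(X_i=1)$ the edge into $X_i$ is deleted, $S$ retains its marginal, and every other arm $X_\ell$ retains its law $P_{X_\ell\mid S}$; writing $\{\Xv\in\Xcal_i^*\}=\{X_i=1\}\cap\{X_\ell=0,\ \ell\in[\lceil m(\pv)\rceil]\setminus\{i\}\}$ and noting that $X_i=1$ is forced, I would obtain
\begin{equation}
P\lb \Xv\in\Xcal_i^*\mid\DO(X_i=1)\rb=\al\prod_{\ell\in[\lceil m(\pv)\rceil]\setminus\{i\}}(1-p_\ell)+\alo\prod_{\ell\in[\lceil m(\pv)\rceil]\setminus\{i\}}(1-q_\ell).
\end{equation}
Coordinates outside $[\lceil m(\pv)\rceil]$ are unconstrained by $\Xcal_i^*$ and integrate out. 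The expressions for the other actions follow identically, with the coordinate forced by $a$ substituted accordingly.

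For the first inequality I would drop the nonnegative $\qv$-term and keep only the $S=1$ branch, leaving the lower bound $\al\prod_\ell(1-p_\ell)$. Since $i\in[\lceil m(\pv)\rceil]$ and $p_1\le\cdots\le p_N\le 1/2$, Lem.~\ref{lem:small} gives $p_\ell\le 1/m(\pv)$ for all $\ell\in[\lceil m(\pv)\rceil]$, so the product of the $\lceil m(\pv)\rceil-1$ factors is at least $\lb 1-1/m(\pv)\rb^{\lceil m(\pv)\rceil-1}$. The claim $\al/e$ then reduces to the elementary estimate $(1-1/m)^{\lceil m\rceil-1}\ge 1/e$, which follows from $\log(1+u)\le u$ modulo the minor bookkeeping introduced by the ceiling. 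The upper bound $1$ is immediate, as the quantity is a probability.

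For the second inequality I would split over the admissible actions $a\neq\DO(X_i=1)$. The probability is zero when $a=\DO(X_i=0)$, and also when $a=\DO(X_j=1)$ with $j\in[\lceil m(\pv)\rceil]\setminus\{i\}$, since $\Xcal_i^*$ forces $x_j=0$. Otherwise $X_i$ is not intervened on: for $a=\DO(S=1)$ the probability equals the pure $\pv$-branch $p_i\prod_\ell(1-p_\ell)\le p_i\le 1/m(\pv)$; for $a=\DO(S=0)$ it equals the pure $\qv$-branch; and for $a=\DO(\varnothing)$ or $a=\DO(X_j=x)$ with $j\neq i$ it is a convex combination $\al(\pv\text{-branch})+\alo(\qv\text{-branch})$, possibly with one factor removed. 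It thus suffices to bound each branch by $1/m(\pv)$. The $\pv$-branch is at most $p_i\le 1/m(\pv)$. For the $\qv$-branch $q_i\prod_{\ell\in[\lceil m(\pv)\rceil]\setminus\{i\}}(1-q_\ell)$, I would use that $i\in\mathcal I$ lies among the $\lfloor m(\pv)/2\rfloor$ smallest $q$-values, so $q_\ell\ge q_i$ for every $\ell$ in the complementary upper half $\mathcal J=[\lceil m(\pv)\rceil]\setminus\mathcal I$; hence $q_i\prod_{\ell\in\mathcal J}(1-q_\ell)\le q_i(1-q_i)^{|\mathcal J|}\le \max_{b\in[0,1]}b(1-b)^{|\mathcal J|}$. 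Optimizing $b(1-b)^{k}$ at $b=1/(k+1)$, together with $|\mathcal J|=\lceil m(\pv)\rceil-\lfloor m(\pv)/2\rfloor\ge m(\pv)/2$, then yields the bound $1/m(\pv)$.

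The main obstacle is precisely this $\qv$-branch estimate: the values $q_\ell$ are not controlled by $m(\pv)$ in any way, so the bound must come entirely from the combinatorial fact that $i$ sits in the small half $\mathcal I$, combined with the extra factor $(k/(k+1))^{k}\approx e^{-1}$ present in $\max_b b(1-b)^{k}$. The delicate point is to verify $(k+1)\,e^{k/(k+1)}\ge m(\pv)$ for every $m(\pv)>2$ --- in particular for the small cases $m(\pv)\in\{3,4\}$ where $k=|\mathcal J|$ can be as small as $2$, and for the variant in which one factor is lost because $a=\DO(X_j=0)$ with $j\in\mathcal J$. These boundary cases, rather than the main argument, are where the constants are tightest; the first inequality is comparatively routine once the ceiling bookkeeping in $(1-1/m)^{\lceil m\rceil-1}\ge 1/e$ is handled.
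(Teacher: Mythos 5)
Your proposal follows essentially the same route as the paper's proof: the same explicit factorization obtained by conditioning on $S$ and using conditional independence, the same first-part bound $\alpha\prod_{\ell}(1-p_\ell)\ge\alpha(1-1/m(\pv))^{\lceil m(\pv)\rceil-1}\ge\alpha/e$ via Lem.~\ref{lem:small}, and the same action-by-action case analysis in which the $\qv$-branch is controlled by restricting the product to the upper half of the sorted $q$-values and maximizing $b(1-b)^{k}$ at $b=1/(k+1)$. The boundary cases you flag (small $m(\pv)$ and the lost factor under $\DO(X_j=0)$ with $j$ in the upper half) are precisely the ones the paper disposes of using the hypothesis $m(\pv)>2$, which guarantees $\lceil m(\pv)\rceil-\lfloor m(\pv)/2\rfloor-1\ge 1$, so your plan is complete as stated.
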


Next we will use the definition of the set $\Xcal_i^*$ in Lem.~\ref{lem:1} to construct the reward function yielding the desired lower bound as follows. 
Take $m(\pv) \geq m(\qv)$ and fix a constant-valued reward function $r_0$.
For each $i$, we introduce another reward function $r_i$, which is greater than $r_0$ by a positive value $\varepsilon$ if $\Xv$ falls into the set $\Xcal_i^*$, while $r_i$ equals $r_0$ otherwise. 
Hence, under reward function $r_i$, the optimal action is the one that maximizes the probability of $\Xv$ reaching the target set $\mathcal X_i^*$.
With Lem.~\ref{lem:1}, we see that the optimal action corresponding to reward function $r_i$ is $\DO(X_i=1)$. Under this optimal action, the probability that $\Xv\in \Xcal_i^*$ is at least $\alpha/e$, while the probability of reaching this set under any other action is at most $1/m(\pv)$. 
Hence, the regret of identifying the wrong arm when the reward function is $r_i$ is at least $\varepsilon\left(\frac{\alpha}{e}-\frac{1}{m(\pv) }\right)$.  

Next, we introduce the probability distribution of the entire history $\Hv_T$ corresponding to $r_i$, denoted by $P_i$, and that corresponding to $r_0$, denoted by $P_0$. We show that the KL divergence between these two distributions is upper bounded by a constant.

With this upper bound on $D_{\rm KL}(P_0,P_i)$, we can derive a lower bound for $P_0(\mathcal B)+P_i(\mathcal B^c)$ for any measurable event $\mathcal B$ according to Lem.~\ref{lemma:subset}.
In particular, let $\mathcal B_i$ be the event that the algorithm identifies $\DO(X_i=1)$ as the optimal intervention. It can be shown that there exists $i'\in [N]$ such that $P_{i'}\left(\mathcal B_{i'}^c\right)$ is lower bounded by a positive constant value. In other words, there exists a reward function $r_{i'}$ such that the probability of the agent identifying the incorrect action is at least a positive constant.
Since we have already shown a lower bound for the regret assuming a wrong action is identified by the agent, 
we have completed the proof that there exists a reward function $r_{i'}$ such that the regret is at least $\varepsilon\left(\frac{\alpha}{e}-\frac{1}{m(\pv) }\right)$ multiplied by a constant.

\section{Conclusion}
\label{sec:conclusion}
In this paper, we proposed and studied the hierarchical causal bandit (HCB) problem as a step towards understanding general causal bandits with cross-arm dependencies and interactions. Through upper and lower bounds on the simple regret, our results suggest that the critical determinant of regret is the number of biased arms under the worst context, and not the number of biased arms under the marginal arm distribution. In a way, our results can be interpreted as saying that overall hardness is the average (upper bound) or maximum (lower bound) hardness of the sub-problems, and not the hardness of the ``average'' sub-problem. 

The above observations also reveal a practical fragility of the parallel causal bandit model in the face of non-independent arms. That is,  the simple regret predictions offered by the parallel bandit model can be highly inaccurate in an environment where cross-dependencies are present. One might argue that the comparison is not fair since the parallel bandit model is intended for independent arms (i.e., with a single context). Nevertheless, it is still interesting to see a large  discrepancy as soon as we incorporate two possible contexts, a seemingly moderate number. 

There are several interesting open questions: the dependencies on $m$ in our upper and lower bounds do not fully agree, and this is in part due to the fact that the upper bound is still $\sqrt{\log(T)}$ factor greater than the lower bound. Closing this gap would help us sharpen our understanding of what drives simple regret in hierarchical bandits. Similarly, the bounds for the case of a $K$-context causal bandit with $K>2$ are still quite loose.  On the modeling end, there clearly remains a sizable complexity gap between HCB and a full-blown structural causal model. We are hopeful that the HCB model can be extended and augmented to gradually close this gap, by, for instance, considering multiple contextual nodes, or multi-layer hierarchies (the current model has only 1 layer).

\bibliographystyle{acm}
\bibliography{causal}

\newpage
\onecolumn
\section*{Appendix}

\appendix
Recall that
\begin{equation}
    \mu_a = \mathbb E[Y|A=a]
\end{equation}
is defined as the expected reward when the agent chooses action $a$. Note also that $a^* = \max_{a\in\mathcal A}\mu_a$, and $\mu^* = \max\mu_a = \mu_{a^*}$.  For $x\in \mathbb R$, $\delta\in \mathbb R_+$, let $[x\pm \delta] = [x-\delta, x+\delta]$.  
For $i,j\in \mathbb Z_+$, $i<j$, let $[i:j] =\{i,\ldots,j\}$, and when $i=1$, we introduce the shorthand $[j]= [1:j]$. 
Recall that 
\begin{equation}\label{eq:musljk}
\mu_{sljk} = P(Y=1|X_j=k,S=l),
\end{equation}
for $j\in [N]$, and $l\in\Scal$, $k\in\Xcal$.
For an event $B$, recall that $\mathbf{1}_B$ is the indicator function that returns 1 if $B$ is true and 0 otherwise. 

We will make repeated use of the following error term: for $x,y,t\in \Rbb^+$, let 
\begin{equation}
\varepsilon_{x,y,t} =\sqrt{\frac{x\log (yt)}{t}}.
\end{equation}
Note that $\varepsilon $ is  increasing in $x$ and $y$ and  decreasing in $t$. While it is important for our bounds to keep track of dependencies of $\varepsilon_{x,y,t}$ on its input values $x,y$ and $t$, it is worth noting that, for obtaining the main insights from the proof, it suffices to view $\varepsilon_{x,y,t}$ as a suitably small constant. When appropriate, we shall highlight in context the exact nature of dependencies. 
We will also use the shorthand $m_1 = m(\mathbf{p})$, and $m_0 = m(\mathbf{q})$. 
Finally, since the underlying causal network is time-homogeneous, we may sometimes drop the time index in the variables when there is no ambiguity.

\section{Proof of Theorem \ref{thm:upper bound}}\label{app:proof_upper}

Now we formally prove Theorem \ref{thm:upper bound}: first, we consider the case that the context node cannot be intervened by the agent in Sec. \ref{sec:pf:nmc}, then the case in which it can be manipulated in Sec. \ref{sec:Manipulable context node}. 

\subsection{Non-manipulable context node}\label{sec:pf:nmc}
\begin{proof}
With Lem.~\ref{lemma:intervention}, the expected reward conditioned on an action $a$, $\mu_a$, can be expressed as a function of $\alpha$ and the values $\mu_{sljk}$, $j\in[N]$, $l\in \mathcal S$, $k\in\mathcal X$. Recall the definition in Eq.~\eqref{eq:musljk}, $\mu_{sljk}$ is the expected reward conditioned on the event $\{S=l$, $X_j=k\}$, which may be estimated through pure observation. Hence, we would like to design an algorithm that estimates $\mu_a$, $a\in \mathcal A$, through estimating $\mu_{sljk}$ and $\alpha$. In particular, we introduce Alg.~\ref{alg:nmc}, a two-stage algorithm that efficiently estimates $\mu_a$ for each action $a$, satisfying the upper bound given in Thm.~\ref{thm:upper bound}. We will show that under this algorithm, for many actions $a$, $\mu_a$ can be accurately estimated simultaneously with pure observation. This significantly improves the efficiency of the algorithm compared to the alternative, where the agent experiments with every action $a$ to estimate each $\mu_a$ separately.

\noindent
\underline{\bf Stage 1:}
In the first stage, we conduct a total of $T' = T/5$ experiments when the agent simply observes the system, or equivalently, conducts the empty intervention $\DO(\varnothing)$.
In this stage, the agent estimates the parameter $\alpha$, the vectors $\mathbf{p}$, $\mathbf{q}$, their $m$ values, $m_1 = m(\pv)$, and $m_0 = m(\qv)$, and values $\mu_{sljk}$. Their corresponding  estimators are denoted by $\hat{\alpha}$, $\hat{\pv}$, $\hat{\qv}$, $\hat{m}_1 = m(\hat{\pv})$, $\hat{m}_2 = m(\hat{\qv})$, and $\hat{\mu}_{sljk}$. 
Since the estimator $\hat{\mu}_{sljk}$ can only be calculated when the event $\{S=l,X_j=k\}$ occurs, this estimator can only be accurate if the corresponding event takes place for a sufficient number of times.  In particular, we will introduce a threshold determined by $\hat{m}_1$ (for $k=1$) and $\hat{m}_0$ (for $k=0$). If the number of times that the event occurs surpasses this threshold, the accuracy of $\hat{\mu}_{sljk}$ is good enough. In this case, we accept this estimation, and refer to $j$ as an ``easy" entry to estimate. 
Otherwise, we will not accept this estimation. The entry $j$ is then a ``hard“ entry to estimate and we will refine this estimate in Stage 2. In addition, we point out that the expected reward under the empty intervention, $\mu_{\DO(\varnothing)}$, can also be estimated in this stage as we conduct $T'$ steps of pure observation. 

Now we focus on the estimators of $\alpha$, $\pv=(p_1,\ldots,p_N)$, and $\mu_{s1j1}$, $j\in [N]$, while the analysis of $\qv$ and the other values of $\mu_{sljk}$ follows analogously. 
Recall the following estimators defined in Alg.~\ref{alg:nmc}:
\begin{align}
\hat{\alpha}=& \frac{1}{T'}\sum_{t\in[T']}S^{(t)}, \quad \mbox{(line 5 of Alg.~\ref{alg:nmc}),} \label{def:alphahat}\\
\hat{p}_i =& \frac{\sum_{t\in [T']}X_j^{(t)}S^{(t)}}{\sum_{t\in [T']}S^{(t)}}, \quad \mbox{(line 6)},\label{def:pihat} \\
\muh_{\DO(\varnothing )} =& \f {1} {T'} \sum_{t \in [T']} Y^{(t)}, 
\quad \mbox{(line 8)}, \label{def:muhatobserve}\\
\hat{\mu}_{s1j1} =& \frac{\sum_{t\in [T']}Y^{(t)}X_j^{(t)}S^{(t)}}{\sum_{t\in [T']}X_j^{(t)}S^{(t)}}, \quad \mbox{(line 9)}. \label{def:muhats1j1}
\end{align}  

Here, if the denominator of an estimator defined above happens to be zero, we simply set the estimator to be zero.
We introduce the following two lemmas that will be useful for subsequent proofs. In Lem.~\ref{lem:additional1}, we show that the estimators $\hat{\alpha}$, $\hat{\pv}$, $\muh_{\DO(\varnothing)}$, and $\hat{\mu}_{s1j1}$ all concentrate in an appropriate sense, and in Lem.~\ref{lem:additional2}, we show that the estimator $\hat{m}_1$ is accurate up to a constant factor with high probability, which can be derived similar to Lem.~8 in~\cite{lattimore2016causal}.
We define the following two events that will be useful in these two lemmas. 
Let $E_{\pv}$ be the event that every $\hat{p}_i$ is accurate up to a sufficiently small error: 
\begin{equation}
    E_{\pv} = \{\hat{p}_i\in \left[\left(1\pm\varepsilon_{{27}/{(\alpha p_i)},2N,T'}\right)p_i\right],\ \forall i\in [N]\}. 
\end{equation}
Similarly, define an event $E_{\overline{\pv}}$ as follows,
\begin{equation}
    E_{\overline{\pv}} = \{1-\hat{p}_i\in \left[\left(1\pm\varepsilon_{{27}/{(\alpha \overline{p}_i)},2N,T'}\right)\overline{p}_i\right],\ \forall i\in [N]\}. 
\end{equation}
 The proofs of these two lemmas are given in Apps.~\ref{app:additional1} and \ref{app:additional2}, respectively.
\begin{lemma}\label{lem:additional1}
The following concentration inequalities hold. 
\begin{enumerate}
    \item For estimator $\hat{\alpha}$, 
    \begin{equation}\label{eq:alpha}
     P\left(\left|\hat{\alpha}-\alpha\right|\leq 
     \varepsilon_{3\alpha,2,T'}\right)\geq 1-\frac{1}{T'}.
\end{equation}
\item For sufficiently large $T'$ such that $T'> \frac{27\log (2NT')}{\alpha}$, the events $E_{\pv}$ and $E_{\overline{\pv}}$ satisfy 
\begin{equation}\label{eq:pj1}
    P\left(E_{\pv}\right) \geq 1-\frac{2}{T'}, \quad P\left(E_{\overline{\pv}}\right) \geq 1-\frac{2}{T'}.  
\end{equation}

\item For the estimator $\muh_{\DO(\varnothing)}$, 
\begin{equation}\label{eq:observe}
     P\left(\left|\hat{\mu}_{\DO(\varnothing)}-\mu_{\DO(\varnothing)}\right|\geq \varepsilon_{3,2,T'}\right)\leq \frac{1}{T'}.
\end{equation}
\item For the estimator $\muh_{s1j1}$, 
\begin{equation}\label{eq:mus1j1}
    P\left(\hat{\mu}_{s1j1}\in\left[  \mu_{s1j1} \pm\varepsilon_{27/(\alpha p_j),2N,T'}\right]\right)\geq 1-\frac{2}{NT'}, \quad j\in[N]. 
\end{equation}
\end{enumerate}
\end{lemma}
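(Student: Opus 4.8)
The plan is to treat all four statements as concentration inequalities for empirical averages, using a Chernoff/Hoeffding bound whose exponent constant is engineered to cancel exactly against the constants ($3$, $27$) and the alphabet/horizon factors ($2$, $2N$) sitting inside $\varepsilon_{x,y,t}=\sqrt{x\log(yt)/t}$. The two direct averages, $\hat\alpha$ and $\hat\mu_{\DO(\varnothing)}$, are handled first; the two ratio estimators, $\hat p_i$ and $\hat\mu_{s1j1}$, are then reduced to direct averages by conditioning on their denominators.

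For part~1, note that $T'\hat\alpha=\sum_{t\in[T']}S^{(t)}$ is a sum of $T'$ i.i.d.\ Bernoulli$(\alpha)$ variables, so the multiplicative Chernoff bound $P(|\hat\alpha-\alpha|\ge\delta\alpha)\le 2\exp(-\delta^2\alpha T'/3)$ applies. Setting $\delta\alpha=\varepsilon_{3\alpha,2,T'}$ makes the exponent equal $\log(2T')$, so the right-hand side is exactly $2\exp(-\log(2T'))=1/T'$, which is the claim. Part~3 is identical in spirit: $\hat\mu_{\DO(\varnothing)}$ is the empirical mean of the i.i.d.\ Bernoulli$(\mu_{\DO(\varnothing)})$ rewards $Y^{(t)}$, and since $\mu_{\DO(\varnothing)}\le 1$ we have $\varepsilon_{3,2,T'}\ge\varepsilon_{3\mu_{\DO(\varnothing)},2,T'}$; applying the part~1 computation with this smaller multiplicative deviation and then enlarging the error bar to $\varepsilon_{3,2,T'}$ gives the bound $1/T'$.

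For parts~2 and~4 I would condition on the denominators. Write $D=\sum_{t}S^{(t)}$; conditionally on the set $\{t:S^{(t)}=1\}$, the values $\{X_i^{(t)}:S^{(t)}=1\}$ are i.i.d.\ Bernoulli$(p_i)$ by the factorization \eqref{eq:joint}, so $\hat p_i$ is the empirical mean of $D$ such samples. I would (i) lower-bound $D\ge T'\alpha/2$ with probability $\ge 1-1/T'$ via the Chernoff lower tail, where the hypothesis $T'>27\log(2NT')/\alpha$ is precisely what forces this failure probability below $1/T'$; and (ii) on that event apply the multiplicative Chernoff bound to each $\hat p_i$ with deviation $\varepsilon_{27/(\alpha p_i),2N,T'}$, the constant $27$ being chosen so that the exponent is at least $\tfrac{27}{6}\log(2NT')\ge\log(2NT')$ even after losing the factor $1/2$ from the denominator, yielding a per-coordinate failure probability $\le 1/(NT')$. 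A union bound over the $N$ coordinates, together with the $1/T'$ from the denominator event, produces the stated $2/T'$ for $E_{\pv}$; the bound for $E_{\overline{\pv}}$ follows from the identical argument applied to the complementary counts $\{X_i^{(t)}=0,S^{(t)}=1\}$, which estimate $\overline p_i$. Part~4 is the same two-step argument with denominator $M_j=\sum_t X_j^{(t)}S^{(t)}$ and, conditionally on $\{t:X_j^{(t)}=S^{(t)}=1\}$, the i.i.d.\ Bernoulli$(\mu_{s1j1})$ rewards $\{Y^{(t)}\}$ on those rounds; since the deviation here is additive I would use Hoeffding, and no union over $N$ is needed, giving the per-$j$ failure probability $2/(NT')$.

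The main obstacle is the small-denominator regime of parts~2 and~4: when $p_i$ (resp.\ $p_j$) is tiny, the relevant count can be small or zero, the estimator is set to $0$ by convention, and the conditioning argument above breaks down. The clean way around this is a case split. If $\varepsilon_{27/(\alpha p_j),2N,T'}\ge 1$, then the claimed inclusion $\hat\mu_{s1j1}\in[\mu_{s1j1}\pm\varepsilon_{27/(\alpha p_j),2N,T'}]$ holds trivially because both quantities lie in $[0,1]$; otherwise $\varepsilon<1$ forces $\alpha p_j T'>27\log(2NT')$, so the mean $T'p_j\alpha$ of $M_j$ is large and the Chernoff lower tail again rules out a small denominator with the required probability. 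The only genuine work is the bookkeeping of constants so that each of the two error sources lands below half of the target failure probability; this is routine but must be executed carefully to justify the exact constant $27$ and the factors $2$ and $2N$.
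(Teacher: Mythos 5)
Your parts~1 and~3 are exactly the paper's argument (multiplicative Chernoff applied directly to $\sum_t S^{(t)}$ and $\sum_t Y^{(t)}$), and in parts~2 and~4 your decomposition differs from the paper's only mildly: you condition on the realized denominator and lower-bound it once via a Chernoff lower tail, whereas the paper concentrates the numerator $\sum_t X_i^{(t)}S^{(t)}$ (an i.i.d.\ Bernoulli$(\alpha p_i)$ sum, Eq.~\eqref{eq:xs}) and the denominator $\sum_t S^{(t)}$ separately around their means and combines them by a union bound. In the regime $\min_i \alpha p_i T'\gtrsim \log(2NT')$ both routes give the stated constants, and your case split for part~4 is correct --- indeed it repairs a step the paper silently skips, since the bars in \eqref{eq:mus1j1} are \emph{additive} on $[0,1]$-valued quantities, so $\varepsilon_{27/(\alpha p_j),2N,T'}\geq 1$ makes that inclusion vacuous and $\varepsilon_{27/(\alpha p_j),2N,T'}<1$ forces $\alpha p_jT'>27\log(2NT')$, where your conditioning argument is valid.

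The genuine gap is part~2 in the small-$p_i$ regime. You assert that the case split resolves ``the small-denominator regime of parts~2 and~4,'' but you only verify it for part~4, and it only works there. The event $E_{\pv}$ is \emph{multiplicative}: it demands $\hat{p}_i\leq(1+\varepsilon_i)p_i$ with $\varepsilon_i p_i=\sqrt{27p_i\log(2NT')/(\alpha T')}$, and when $\varepsilon_i\geq 1$ this is a nontrivial upper-tail constraint, not a vacuous one. Moreover, no alternative argument can close this case, because the claim itself fails there: take all arms identical with $\lambda:=\alpha p_iT'\approx 4/\bigl(27\log(2NT')\bigr)$; then the failure threshold $(1+\varepsilon_i)p_i\sum_t S^{(t)}\approx \lambda+\sqrt{27\lambda\log(2NT')}\approx 2$, so arm $i$ violates $E_{\pv}$ as soon as $\sum_t X_i^{(t)}S^{(t)}\geq 3$, an event of probability $\Theta(\lambda^3)=\Theta\bigl(1/\log^3(2NT')\bigr)$ --- polynomially, not exponentially, small in $\log(2NT')$, hence far larger than the $O(1/(NT'))$ per-arm budget the union bound requires. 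Since the arms are conditionally independent given the $S^{(t)}$'s, taking $T'$ at its minimum allowed value $\approx 27\log(2NT')/\alpha$ and letting $N\to\infty$ makes the expected number of violating arms diverge, so $P(E_{\pv}^c)\to 1$ while the claimed bound $2/T'\to 0$. To be fair, you share this hole with the paper: Eq.~\eqref{eq:xs} applies the Chernoff form $2\exp(-\delta^2\mu/3)$ with $\delta=\sqrt{3\log(2NT')/(\alpha p_iT')}$, which exceeds $1$ precisely in this regime, outside that form's validity (the same caveat afflicts part~1 when $\alpha\ll\log(2T')/T'$, but there your proof and the paper's coincide). A version of part~2 that is actually provable --- and suffices for the downstream Lemmas~\ref{lem:additional2} and~\ref{lem:pjtom1} --- either adds the hypothesis $\min_i p_i\geq c\log(2NT')/(\alpha T')$, or replaces the bars in $E_{\pv}$ by Bernstein-type ones, $|\hat{p}_i-p_i|\lesssim \varepsilon_i p_i+\log(2NT')/(\alpha T')$; under either modification your conditioning argument goes through verbatim, but as written your claim that the case split is ``the clean way around'' the obstacle for part~2 is wrong.
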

\begin{lemma}\label{lem:additional2}
If events $E_{\pv}$ and $E_{\overline{\pv}}$ are simultaneously true, then for all sufficiently large $T'$ such that $T'> \f {108m_1\log(2NT')}{\alpha}$, $\hat{m}_1$ satisfies that 
\begin{equation}\label{eq:m1bound0}
     \frac{2m_1}{3}\leq\hat{m}_1\leq 2m_1.
\end{equation}
\end{lemma}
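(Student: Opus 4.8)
The plan is to show that, whenever $E_{\pv}$ and $E_{\overline{\pv}}$ hold, each coordinatewise bias $b_n := \min\{p_n,\bar p_n\}$ is reproduced by its empirical counterpart $\hat b_n := \min\{\hat p_n, 1-\hat p_n\}$ up to a small \emph{additive} error, and then to transfer this coordinatewise control to the level sets $\mathcal I_s(\cdot)$ that define $m$. First I would record the abbreviation $\theta := 27\log(2NT')/(\alpha T')$, so that the assumption $T' > 108 m_1\log(2NT')/\alpha$ reads exactly as $\theta < 1/(4m_1)$; I would also note that $m$ depends on $\pv$ only through the multiset $\{b_n\}_{n\in[N]}$, since $\mathcal I_s(\pv) = \{n : b_n < 1/s\}$, so $m(\pv)=m_1$ is a function of the $b_n$'s alone.

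The key estimate is the additive perturbation bound. Under $E_{\pv}$ one has $|\hat p_n - p_n| \le \varepsilon_{27/(\alpha p_n),2N,T'}\, p_n = \sqrt{\theta p_n}$, and under $E_{\overline{\pv}}$, using $1-\hat p_n - \bar p_n = p_n - \hat p_n$, one has $|\hat p_n - p_n| \le \sqrt{\theta \bar p_n}$; taking the smaller of the two yields $|\hat p_n - p_n| \le \sqrt{\theta b_n}$. Since $\min\{\cdot,\cdot\}$ is $1$-Lipschitz and both coordinates move by the same amount, this gives $|\hat b_n - b_n| \le \sqrt{\theta b_n} < \tfrac12\sqrt{b_n/m_1}$ for every $n$. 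The point of keeping the error additive rather than multiplicative is that it stays controlled even for very small $p_n$, where the multiplicative factor $\varepsilon_{27/(\alpha p_n),2N,T'}$ blows up; this is the step I expect to matter most.

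For the upper bound $\hat m_1 \le 2m_1$, I would establish the inclusion $\mathcal I_{2m_1}(\hat\pv) \subseteq \mathcal I_{m_1}(\pv)$. If $n\in\mathcal I_{2m_1}(\hat\pv)$ then $\hat b_n < 1/(2m_1)$; assuming toward a contradiction that $b_n \ge 1/m_1$ gives $\sqrt{b_n m_1}\ge 1$, hence $\hat b_n > b_n - \tfrac12\sqrt{b_n/m_1} = b_n\bigl(1 - \tfrac{1}{2\sqrt{b_n m_1}}\bigr) \ge b_n/2 \ge 1/(2m_1)$, a contradiction. Thus $b_n < 1/m_1$, i.e. $n\in\mathcal I_{m_1}(\pv)$. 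Since $|\mathcal I_{m_1}(\pv)| \le m_1$ by the definition of $m_1=m(\pv)$, we get $|\mathcal I_{2m_1}(\hat\pv)| \le m_1 \le 2m_1$, so $s=2m_1$ satisfies $s \ge |\mathcal I_s(\hat\pv)|$, which forces $m(\hat\pv)\le 2m_1$.

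For the lower bound $\hat m_1 \ge 2m_1/3$, I would instead exhibit enough coordinates that are biased for $\hat\pv$. By Lem.~\ref{lem:small} applied to the sorted bias vector, at least $\lceil m_1\rceil$ indices satisfy $b_n \le 1/m_1$; for each such index the perturbation bound gives $\hat b_n < b_n + \tfrac12\sqrt{b_n/m_1} \le 1/m_1 + 1/(2m_1) = 3/(2m_1)$ (the strict inequality coming from $\theta < 1/(4m_1)$), hence $n\in\mathcal I_{2m_1/3}(\hat\pv)$. Consequently $|\mathcal I_{2m_1/3}(\hat\pv)| \ge \lceil m_1\rceil \ge m_1$, and for any $s < 2m_1/3$ monotonicity of $s\mapsto|\mathcal I_s(\hat\pv)|$ yields $|\mathcal I_s(\hat\pv)| \ge |\mathcal I_{2m_1/3}(\hat\pv)| \ge m_1 > s$, so no such $s$ satisfies the defining inequality of $m$, giving $m(\hat\pv)\ge 2m_1/3$. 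The delicate point I would watch throughout is the strict-versus-nonstrict boundary: the lower bound must count indices with $b_n \le 1/m_1$ (of which Lem.~\ref{lem:small} guarantees at least $\lceil m_1\rceil$) rather than the possibly far smaller set $\mathcal I_{m_1}(\pv) = \{n : b_n < 1/m_1\}$, which can have as few as a constant fraction of $m_1$ elements when many coordinates sit exactly at the threshold.
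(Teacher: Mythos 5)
Your proof is correct and follows essentially the same route as the paper's: the additive perturbation bound $|\hat p_n - p_n| \le \sqrt{\theta\, p_n} < \tfrac12\sqrt{p_n/m_1}$, the upper bound via the inclusion $\mathcal I_{2m_1}(\hat{\pv}) \subseteq \mathcal I_{m_1}(\pv)$ (the paper states this as its contrapositive, that entries with $\min\{p_j,\overline{p}_j\}\ge 1/m_1$ keep $\min\{\hat p_j, 1-\hat p_j\}\ge 1/(2m_1)$), and the lower bound via Lem.~\ref{lem:small} showing at least $\lceil m_1\rceil$ indices land in $\mathcal I_{2m_1/3}(\hat{\pv})$. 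Your unified treatment of the bias $\min\{p_n,\overline{p}_n\}$ via Lipschitzness of $\min$, and your explicit monotonicity argument in the final step, are only stylistic refinements of the same argument.
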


Note that with Eq.~\eqref{eq:pj1} and the union bound, the probability that both $E_{\pv}$ and $E_{\overline{\pv}}$ are true is at least $1-4/T'$.
Combining with Lem.~\ref{lem:additional2}, we have 
\begin{equation}\label{eq:m1bound1}
     P\left(\frac{2m_1}{3}\leq\hat{m}_1\leq 2m_1\right)\geq 1-\frac{4}{T'}.
\end{equation}
We now use the above two lemmas to bound the accuracy of $\muh_{s1j1}$, the estimator for the expected reward when both the context $S$ and the variable $X_j$ is $1$.  In particular, we show that the estimator $\muh_{s1j1}$ is sufficiently accurate when $\hat{p}_j\geq 1/\hat{m}_1$. 
Recall that 
$$ \hat{\mathcal B}_{11} = \left\{j\in [N]:\hat{p}_j< \frac{1}{\hat{m}_1}\right\}$$
which is first defined in line 13 of Alg.~\ref{alg:nmc}.
This is the set of indices of entries in $\hat{\pv}$ that are highly biased towards zero.
Fix $j$ to be an arbitrary index in  $[N]\setminus \hat{\mathcal B}_{11}$. Consequently, we have that $\hat{p}_j\geq 1/\hat{m}_1$. We first show that if the estimated entries $\hat{p}_j$ are not highly biased to zero, then neither will be the underlying true values $p_j$. In particular, since $\hat{p}_j$ and $\hat{m}_1$  are close to $p_j$ and $m_1$, respectively, we would expect that $p_j$ should be on the order of $1/m_1$ or greater for $j \in [N]\setminus \hat{\mathcal B}_{11}$. The following lemma makes this notion precise:

\begin{lemma} 
\label{lem:pjtom1}
We have that 
\begin{equation}\label{eq:pj2}
P
\left(p_j\geq\frac{1}{4m_1},\quad \forall j\in [N]\setminus\hat{\mathcal B}_{11} \right)  \geq 1-\frac{4}{T'}. 
\end{equation}
\end{lemma}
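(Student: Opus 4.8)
The plan is to prove the contrapositive, conditional on a high-probability ``good event.'' Concretely, I would condition on $E_{\pv}\cap E_{\overline{\pv}}$, which by the union bound together with \eqref{eq:pj1} occurs with probability at least $1-4/T'$, and which moreover triggers Lem.~\ref{lem:additional2}, so that the bound $\hat{m}_1\leq 2m_1$ holds simultaneously (this is exactly the content of \eqref{eq:m1bound1}). On this event I get two facts to use at once: the one-sided multiplicative concentration $\hat{p}_j\leq (1+\varepsilon_{27/(\alpha p_j),2N,T'})p_j$ coming from the definition of $E_{\pv}$, and the upper estimate $\hat{m}_1\leq 2m_1$.

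Next I would fix an arbitrary $j\in[N]\setminus\hat{\mathcal B}_{11}$. By the definition of $\hat{\mathcal B}_{11}$ in line 13 of Alg.~\ref{alg:nmc}, this means $\hat{p}_j\geq 1/\hat{m}_1$, and combining with $\hat{m}_1\leq 2m_1$ gives $\hat{p}_j\geq 1/(2m_1)$. I would then argue by contradiction: suppose $p_j<1/(4m_1)$. Rewriting the multiplicative error in additive form, $\hat{p}_j-p_j\leq \varepsilon_{27/(\alpha p_j),2N,T'}\,p_j=\sqrt{27\,p_j\log(2NT')/(\alpha T')}$, and since this absolute error is increasing in $p_j$, the assumption $p_j<1/(4m_1)$ yields $\hat{p}_j< \frac{1}{4m_1}+\sqrt{\frac{27\log(2NT')}{4m_1\alpha T'}}$.

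The final step is to verify that the standing time-horizon hypothesis forces the square-root term strictly below $1/(4m_1)$. Squaring, the inequality $\sqrt{27\log(2NT')/(4m_1\alpha T')}<1/(4m_1)$ is equivalent to $T'>108\,m_1\log(2NT')/\alpha$, which is precisely the assumption under which Lem.~\ref{lem:additional2} was invoked. Hence $\hat{p}_j<1/(2m_1)$, contradicting $\hat{p}_j\geq 1/(2m_1)$. Therefore $p_j\geq 1/(4m_1)$ for every $j\in[N]\setminus\hat{\mathcal B}_{11}$ on the good event, and the claimed bound \eqref{eq:pj2} follows with probability at least $1-4/T'$.

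The only genuine subtlety — and the step I would watch most carefully — is the $p_j$-dependence buried inside $\varepsilon_{27/(\alpha p_j),2N,T'}$: the relative error blows up as $p_j\to 0$, so one cannot naively control $\hat{p}_j$ by a fixed multiple of $p_j$. The maneuver that makes everything go through is to pass from relative to absolute error, $\varepsilon_{27/(\alpha p_j),2N,T'}\,p_j=\sqrt{27\,p_j\log(2NT')/(\alpha T')}$, which is monotone increasing in $p_j$ and therefore bounded cleanly under the hypothesis $p_j<1/(4m_1)$; this monotonicity is also exactly what makes the $108\,m_1$ threshold emerge without slack.
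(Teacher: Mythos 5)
Your proof is correct and follows essentially the same route as the paper's: condition on $E_{\pv}\cap E_{\overline{\pv}}$ (probability at least $1-4/T'$), invoke Lem.~\ref{lem:additional2} to get $\hat{m}_1\leq 2m_1$ and hence $\hat{p}_j\geq 1/(2m_1)$ for $j\notin\hat{\mathcal B}_{11}$, and play this against the concentration of $\hat{p}_j$ under the threshold $T'>108m_1\log(2NT')/\alpha$. The only difference is cosmetic---the paper combines the two bounds into $\frac{1}{2m_1}\leq p_j+\frac{1}{2}\sqrt{p_j/m_1}$ and solves the quadratic, whereas you run a contradiction using monotonicity of the absolute error $\sqrt{27p_j\log(2NT')/(\alpha T')}$ in $p_j$; your explicit passage from relative to absolute error is in fact handled more carefully than in the paper's own write-up, which conflates the two in its intermediate display.
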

The proof of this lemma can be found in App.~\ref{app:pjtom1}.

By substituting the bounds on $p_j$ in Lem.~\ref{lem:pjtom1} into Eq.~\eqref{eq:mus1j1} and applying the union bound, we obtain the following characterization of  $\hat{\mu}_{s1j1}$: 
\begin{equation}\label{eq:B}
    P\left(\hat{\mu}_{s1j1}\in \left[ \mu_{s1j1} \pm\varepsilon_{108m_1/\alpha,2N,T'}\right],\ \forall j\in[N]\setminus\hat{\mathcal B}_{11}\right)\geq 1-\frac{6}{T'},
\end{equation}
for all large $T'$ where $T' > \frac{108m_1\log (2NT')}{\alpha}$.
In other words, we have shown that the set $[N]\setminus \hat{\mathcal B}_{11}$ includes the ``easy'' entries in $\mu_{s1j1}$, which can be well estimated via pure observation with the error bound in Eq.~\eqref{eq:B}. 
We omit the discussions of $\muh_{sljk}$ for $(l,k)=(1,0), (0,1), (0,0)$ as they follow the same argument as above by replacing $\alpha$, $\pv$, and $m_1$, with $\overline{\alpha}$, $\qv$, and $m_0$, respectively.

\noindent
\underline{\bf Stage 2:} 
Since the ``easy'' entries have already been well estimated by pure observation in the previous stage, in this stage, we focus on the ``hard'' entries $j\in \hat{\Bcal}_{11}$. 
In order to obtain sufficiently accurate estimates, we conduct a total of $4T'$ non-empty interventions targeted at these ``hard'' entries, where $T'=T/5$.

To this end, we will estimate $ \mu_{s1j1} $ for $j\in\hat{\Bcal}_{11}$ using $T'$ experiments.  
For each $j\in\hat{\mathcal B}_{11}$, we set the action to be $\DO(X_j=1)$ over ${T'}/{|\hat{\mathcal B}_{11}|}$ experiments.
Recall the definition of $\hat{m}_1$, we have 
$
\hat{\Bcal}_{11}\leq \hat{m}_1,
$
i.e., the number of ``hard'' entries in $\mu_{s1j1}$ is at most $\hat{m}_1$. It follows that the number of active interventions targeted at each ``hard'' entry is at least $T'/\hat{m}_1$. 
Let $\mathcal T_{j,1}\subseteq [T]$ be the subset of the time steps when the agent chooses action $\DO(X_j=1)$, i.e,  $A^{(t)}=\DO(X_j=1)$ for $t\in \mathcal T_{j,1}$. As indicated in line 15 of Alg.~\ref{alg:nmc}, 
for each $j\in \hat{\Bcal}_{11}$, we introduce a refined estimator
\begin{equation}\label{eq:stage2-1}
    \hat{\mu}_{s1j1}^{(r)} = \frac{\sum_{t\in\mathcal T_{j,1}} Y^{(t)}S^{(t)}}{\sum_{t\in\mathcal T_{j,1}} S^{(t)}}. 
\end{equation}

Similar to the analysis in Stage 1, we combine Chernoff bounds and the bound on $\hat{m}_1$ in Eq.~\eqref{eq:m1bound1} to derive the concentration inequality. 
By utilizing the union bound, we have 
\begin{equation}\label{eq:stage2-2}
    P\left( \hat{\mu}_{s1j1}^{(r)}\in \left[ \mu_{s1j1} +\varepsilon_{108m_1/\alpha,2N,T'}\right],\ \forall j\in\hat{\mathcal B}_{11}\right)\geq 1-\frac{4}{T'}-\frac{4m_1}{T'},
\end{equation}
when $T'$ satisfies 
\begin{equation}\label{eq:Tlower}
T'> \frac{108m_1\log (2NT')}{\alpha}.
\end{equation}
Together with the bound on $\hat{\alpha}$ in Eq.~\eqref{eq:alpha},  by union bound we have the following concentration inequality on the value of $\muh_{s1j1}^{(r)}\hat{\alpha}$:
\begin{equation}\label{eq:stage2-3}
    P\left(\left|\hat{\mu}^{(r)}_{s1j1}\hat{\alpha}- \mu_{s1j1} \alpha\right|\leq \varepsilon_{165\alpha m_1,2N,T'},\ \forall j\in\hat{\mathcal B}_{11}\right)\geq 1-\frac{5}{T'}-\frac{4m_1}{T'},
\end{equation}
when $T'$ satisfies Eq.~\eqref{eq:Tlower}. 
Combining the above bound for the ``hard'' entries with that for the ``easy'' entries in Eq.~\eqref{eq:B}, we have  
\begin{equation}\label{eq:hard11}
    P\left(\left|\hat{\mu}'_{s1j1}\hat{\alpha}- \mu_{s1j1} \alpha\right|\leq \varepsilon_{165\alpha m_1,2N,T'},\ \forall j\in [N]\right)\geq 1-\frac{7}{T'}-\frac{4m_1}{T'},
\end{equation}
where 
\begin{equation}
    \hat{\mu}'_{s1j1} = \muh_{s1j1} \mbox{ for }j \in [N]\setminus \hat{\Bcal}_{11}, \mbox{ and }   \hat{\mu}'_{s1j1} = \muh_{s1j1}^{(r)}  \mbox{ for} j\in \hat{\Bcal}_{11}.
\end{equation}
Other terms $\hat{\mu}'_{sljk}$ can be defined in the same way.
Analogously, we can estimate the other ``hard'' entries by conducting interventions targeting them. For example, we have 
\begin{equation}\label{eq:hard01}
    P\left(\left|\hat{\mu}_{s0j1}'\overline{\hat{\alpha}}- \mu_{s0j1} \overline{\alpha}\right|\leq \varepsilon_{165\overline{\alpha} m_0,2N,T'},\ \forall j\in [N]\right)\geq 1-\frac{7}{T'}-\frac{4m_0}{T'}.
\end{equation}

With Lem.~\ref{lemma:intervention}, we can obtain an estimator for $\mu_{\DO(X_j=1)}$ as
\begin{equation}
 \hat{\mu}_{\DO(X_j=1)} = \muh_{s1j1}'\hat{\alpha}+\muh_{s0j1}'\overline{\hat{\alpha}}.
\end{equation}
Similarly, we can derive an estimator for $\mu_a$, denoted by $\hat{\mu}_a$, for each $a\in \mathcal A^{\textsf{nmc}}$. 
Therefore, 
from Eqs.~\eqref{eq:hard11} and \eqref{eq:hard01}, and the union bound, we have
\begin{equation}\label{eq:mua}
    P\left(|\hat{\mu}_a -\mu_a|\leq \delta,\ \forall a\in\Acal\right)\geq 1-\frac{140+40(m_1+m_0)}{T},
\end{equation}
where 
\begin{equation}
\delta = 5\sqrt{33}\left(\sqrt{m_1\alpha}+\sqrt{m_0\alb}\right)\sqrt{\frac{\log (NT)}{T}},
\end{equation}
when $T$ is sufficiently large such that $T> 540\max\left\{\frac{m_1}{\alpha},\frac{m_0}{\alb}\right\}\log (NT)$.
Here, we substituted $T' = T/5$.

Now we are ready to obtain the upper bound on the simple regret for Alg.~\ref{alg:nmc}.
Recall that $\hat{A}_T$ is the action chosen by the agent, while $a^*$ is the underlying optimal action. As a result, we have that $\muh_{\hat{A}_T}\geq \muh_a$, and $\mu^* = \mu_{a^*}\geq \mu_a$, for all $a\in \mathcal A^{\textsf{nmc}}$.  In particular, according to  Eq.~\eqref{eq:mua}, the event of all $\muh_a$ being accurate up to an error of $\delta$ occurs with probability at least $1-\frac{140+40(m_1+m_0)}{T}$. If this event occurs, we have 
\begin{equation}
\mu_{\hat{A}_T}\geq \muh_{\hat{A}_T}-\delta\geq \muh_{a^*}-\delta\geq \mu_{a^*}-2\delta = \mu^* -2\delta. 
\end{equation}
In other words, if this event occurs, the regret is at most $2\delta$. On the other hand, if this event does not occur, the regret is at most 1. Therefore, we have
\begin{align}
    R_T^{\textsf{nmc}} 
    \leq & 2\delta +\frac{140+40(m_1+m_0)}{T}\nonumber\\
    = & 10\sqrt{33}\left(\sqrt{m_1\alpha}+\sqrt{m_0\alb}\right)\sqrt{\frac{\log (NT)}{T}}+\frac{140+40(m_1+m_0)}{T}\nonumber\\
    \leq & 61\left(\sqrt{m_1\alpha}+\sqrt{m_0\alb}\right)\sqrt{\frac{\log (NT)}{T}}\nonumber\\
    \leq & 
    122 \sqrt{\frac{(m_1\alpha+m_0\alb)\log(NT)}{T}}, 
    \label{eq:Rtnmc}
\end{align}
where the last step follows from the Cauchy-Schwartz inequality. The derivation in \eqref{eq:Rtnmc} proves  Eq.~\eqref{eq:upper bound nmc} in Thm.~\ref{thm:upper bound}.
\end{proof}

\subsection{Manipulable context node}
\label{sec:Manipulable context node}
Now we study the scenario when the context node can be intervened by the agent.

\setcounter{AlgoLine}{0}
\begin{algorithm}
\SetAlgoLined
 
\KwResult{Choice  $\Ah_T=a$ with $a \in \Acal^{\rm mc}$ for the HCB-mc model}

Fix $T'=T/15$, $T'' = T/5$. 

\vspace{0.25cm}
\tcp{Stage 1} 
\vspace{0.25cm}

\For{ $t \in [T']$}{
choose $\DO(\varnothing )$ \;
}
estimate $\mu_{\DO(\varnothing )}$, $\al$ as in Alg.~\ref{alg:nmc}

\vspace{0.25cm}
\tcp{Stage 2} 
\vspace{0.25cm}

\For{ $t \in [T'+1:2T']$}{
choose $\DO(S=1)$ \;
}

estimate $p_j$ as $\ph_j = \f 1 {T'} \sum_{t \in [T'+1:2 T']} X_j^{(t)}$\;

estimate $\mu_{s1j1}$, $\mu_{s1j0}$, $\mu_{\DO(S=1)}$ as $\muh_{s1j1}$, $\muh_{s1j0}$, $\muh_{\DO(S=1)}$\;

\For{ $t \in [2T'+1:3T']$}{
choose $\DO(S=0)$ \;
}

estimate $q_j$ as $\qh_j = \f 1 {T'} \sum_{t \in [2T'+1:3 T']} X_j^{(t)}$\;

estimate $\mu_{s0j1}$,  $\mu_{s0j0}$, $\mu_{\DO(S=0)}$ as  $\muh_{s0j1}$,  $\muh_{s0j0}$, $\muh_{\DO(S=0)}$\;

\vspace{0.25cm}
\tcp{Stage 3} 
\vspace{0.25cm}

let  $\Bcalh_{11}$, $\Bcalh_{10}$, $\Bcalh_{01}$, $\Bcalh_{00}$ be defined as in Alg.~\ref{alg:nmc}

$\uv_{11}$ = \FMain($\hat{\mathcal B}_{11}$, $1$, 1, $3T'$, $T'$) \;

$\uv_{10}$ = \FMain($ \hat{\mathcal B}_{10}$, 1, $0$, $6T'$, $T'$) \;

$\uv_{01}$ = \FMain($\hat{\mathcal B}_{01}$, 0, $1$, $9T'$, $T'$) \;

$\uv_{00}$ = \FMain($ \hat{\mathcal B}_{10}$, 0, $0$, $12T'$, $T'$) \;

\For{ $l\in\Scal,k\in\Xcal,$}{
update $\muh_{sl jk}$  with $\uv_{lk}$ for $j\in\hat{\mathcal B}_{lk}$ \;
}
\For{ $j \in [N]$}{
 $\muh_{\DO(X_j=1)}=\alh \muh_{s1j1}+(1-\alh)\muh_{s0j1}$ \;
 
 $\muh_{\DO(X_j=0)}=\alh \muh_{s1j0}+(1-\alh)\muh_{s0j0}$ \;
}

output 
$\hat{A}_T = \argmax_{a\in \mathcal A^{\textsf{mc}}}   \muh_{a}$

\caption{HCB-mc algorithm used in  Thm.~\ref{thm:upper bound}. The function {\tt{Refine}} is defined as in Alg.~\ref{alg:nmc}. }

\label{alg:mc}
\end{algorithm}

\begin{proof}

Since the agent is now able to manipulate the context node, we would like to utilize this additional action to further improve learning efficiency. 
To this end, we introduce Alg.~\ref{alg:mc}, a three-stage algorithm that estimates $\alpha$, $\pv$, $\qv$, $\mu_{\DO(\varnothing)}$, and $\mu_{\DO(S=l)}$, $\mu_{sljk}$, for all $j\in [N]$, $l\in\Scal$, $k\in\Xcal$. Here we provide an overview of the algorithm:
\begin{enumerate}
    \item In the first stage, the agent purely observes the system and estimates $\alpha$, $\mu_{\DO(\varnothing)}$. 
    \item In the second stage,  we conduct intervention $\DO(S=1)$ for the first half of the stage, obtaining estimates for $\mathbf{p}$, $m_1$, $\muh_{\DO(S=1)}$, and $\mu_{s1j1}$, $\mu_{s1j0}$ for $j\in[N]$. For the second half of this stage, conduct $\DO(S=0)$ and estimate the rest of the parameters.
    \item In the third stage, refine the estimates for the ``hard" entries by conducting interventions targeting them. This stage follows the same process as Stage 2 of Alg.~\ref{alg:nmc}.
\end{enumerate}
Here, let us   elaborate further on  the design of Stage 2 in Alg.~\ref{alg:mc}, as the analysis of Stages 1 and 3 are similar to Stages 1 and 2 of Alg.~\ref{alg:nmc}, respectively.

For the first half of Stage 2, we conduct action $\DO(S=1)$. 
Note that the value $\mu_{s1j1}$ is estimated when the event $\{X_j=1, S=1\}$ occurs. 
 When the intervention $S=1$ is performed, $\mu_{s1j1}$ is estimated whenever $\{X_j=1\}$ occurs. 
 Let $\alpha\leq 1/2$ for this discussion: with this assumption, the intervention $S=1$ 
 increases the chance of observing event $\{S=1,X_j=1\}$ compared to pure observation. 
 In particular, if $\alpha$ is highly biased towards zero, i.e., $\alpha\ll 1/2$, the intervention on $S=1$ significantly increases the efficiency of estimating $\mu_{s1j1}$. 

Similar to the non-manipulable-context case, we introduce a threshold determined by $m_1$. If the number of occurrences of event $\{X_j=1\}$ surpasses this threshold, we accept the estimate of $\mu_{s1j1}$ and categorize it as an ``easy'' entry; otherwise, we regard it as a ``hard'' entry and will refine its estimation in the next stage. Similar analysis goes with $\mu_{s1j0}$.

For the second half of this stage, we conduct action $\DO(S=0)$, and estimate $\mathbf{q}$, $m_0$, $\muh_{\DO(S=0)}$, and $\mu_{s0j1}$, $\mu_{s0j0}$ for $j\in[N]$. The analysis for the second half of Stage 2 is analogous to that of the first stage. 

Finally, for the third stage, we conduct interventions targeted to each of the ``hard'' entries and aims at refining the estimates obtained in the previous stage.
This is analogous to what we did in Stage 2 for the $\textsf{nmc}$ scenario (see Eqs.~\eqref{eq:stage2-1}, \eqref{eq:stage2-2}, and \eqref{eq:stage2-3}): accordingly the analysis following similar steps to those in the $\textsf{nmc}$ model may be omitted in the proof.  

\noindent
\underline{\bf Stage 1:}
In this stage, we estimate $\alpha$ and $\mu_{\DO(\varnothing)}$ using $T' = T/15$ experiments of pure observation. By Chernoff bound, we have the same concentration inequalities as Eqs.~\eqref{eq:alpha} and \eqref{eq:observe}, i.e., 
\begin{align}
     P\left(\left|\hat{\alpha}-\alpha\right|  \geq \varepsilon_{3\alpha,2,T'} \right)\leq \frac{1}{T'} \\
     P\left(\left|\hat{\mu}_{\DO(\varnothing)}-\mu_{\DO(\varnothing)}\right|  \geq \varepsilon_{3,2,T'}\right)\leq \frac{1}{T'},
\end{align}
where $\hat{\alpha}$ and $\hat{\mu}_{\DO(\varnothing)}$ are defined in the same way as in the $\textsf{nmc}$ case. 

\noindent 
\underline{\bf Stage 2:} In this stage we conduct a total of $2T'$ experiments where $T'=T/15$. For the first half of Stage 2, conduct action $\DO(S=1)$ for $T'$ experiments. 
We can derive an estimate for each $p_i$, with
$$\hat{p}_i = \frac{1}{T'}\sum_{t\in [T'+1:2T']}X_i^{(t)} \quad \mbox{(line 9 of Alg.~\ref{alg:mc}}),$$
for $i\in[N]$, and where $\hat{\mathbf{p}} = \left(\hat{p}_1,\ldots,\hat{p}_{N}\right)$. 
In addition, in this stage we define estimators
\begin{equation}
\hat{\mu}_{\DO(S=1)} = \frac{1}{T'}\sum_{t\in [T'+1:2T']}Y^{(t)},
\end{equation} 
\begin{equation}
\hat{\mu}_{s1j1} = \frac{\sum_{t\in [T'+1:2T']}Y^{(t)}X_j^{(t)}}{\sum_{t\in [T'+1:2T']}X_j^{(t)}} \quad \mbox{(line 10)},
\end{equation}
Using the Chernoff bound, union bound, and the definition of $\hat{m}_1$, we have that,
when $T'$ is sufficiently large such that 
\begin{equation}\label{eq:condition_T}
   T'>48m_1\log(2NT'), 
\end{equation}
we have 
\begin{equation}\label{eq:union pj}
    P\left(\left|\hat{p}_j-p_j\right|\leq \varepsilon_{{3p_j},2N,T'},\ \forall j\in[N]\right) \geq 1- \frac{1}{T'},
\end{equation}
\begin{equation}
     P\left(\left|\hat{\mu}_{\DO(S=1)}-\mu_{\DO(S=1)}\right|\geq \varepsilon_{3,2,T'}\right)\leq \frac{1}{T'},
\end{equation}
\begin{equation}
    P\left(\frac{2m_1}{3}\leq \hat{m}_1\leq 2m_1\right)\geq 1-\frac{2}{T'},
\end{equation}
and 
\begin{equation}
    P\left( \hat{\mu}_{s1j1}\in \left[ \mu_{s1j1} \pm
    \varepsilon_{{135m_1}/{2},N,T'}\right],\quad \forall j\in[N]\setminus\hat{\mathcal B}_{11}\right)\geq 1-\frac{4}{T'}.
        \label{eq:me1j1}
\end{equation}
In Eq.~\eqref{eq:me1j1}, we have derived an estimate for the ``easy'' entries in $\mu_{s1j1}$. We can derive an estimate for the ``easy'' entries of $\mu_{s1j0}$ at the same time in the same way. Note that here, the $\varepsilon$ value does not depend on $\alpha$, Compared to the $\textsf{nmc}$ case, we see that the ability of intervening $S$ makes the error in this stage significantly smaller when $\alpha$ is highly biased to zero. 

For the second half of the Stage 2, we conduct action $\DO(S=0)$ for $T'$ times. The corresponding estimators can be studied in a manner analogous to that in the proof of the non-manipulable case.

\noindent
\underline{\bf Stage 3:} 
In this stage we conduct a total of $4T''$ experiments where $T''= T/5$, refining the estimates for the ``hard'' entries in $\mu_{sljk}$. Denote by $\muh^{(r)}_{sljk}$ the refined estimators. 
First, we use $T_3$ experiments to estimate $ \mu_{s1j1} $ for $j\in \hat{\mathcal B}_{11}$. Since the analysis is the same as Stage 2 in the $\textsf{nmc}$ case, we omit the intermediate steps, and present the result as follows:
\begin{equation}
    P\left( \hat{\mu}^{(r)}_{s1j1}\in \left[ \mu_{s1j1} \pm \varepsilon_{{54m_1}/{\alpha},2N,T''}\right],\quad \forall j\in \hat{\mathcal B}_{11}\right)\geq 1-\frac{2}{T'}-\frac{4m_1}{T''},
\end{equation}
when $T''$ satisfies $T''>\frac{54m_1\log (2NT'')}{\alpha}$.
Therefore, we can derive the following bound for $\muh_a$: 
\begin{align}
     P\left(|\hat{\mu}_a -\mu_a|\leq 15\sqrt{3}\left(\sqrt{m_1\alpha}+\sqrt{m_0\alb}\right)\sqrt{\frac{\log (NT)}{T}},\ \forall a\in\Acal\right) \geq  1-\frac{330+40(m_1+m_0)}{T},
\end{align}
when $T> 270\max\left(\frac{m_1}{\alpha},\frac{m_0}{\alb}\right)\log (NT)$. Now suppose $T$ satisfies $T\geq 540\max\left(\frac{m_1}{\alpha},\frac{m_0}{\alb}\right)\log (NT)$. 
Following the same steps as those leading to Eq.~\eqref{eq:Rtnmc}, we can obtain the regret upper bound: 
\begin{align}
    R_T 
    \leq & 
    116 \sqrt{\frac{(m_1\alpha+m_0\alb)\log(NT)}{T}}
\end{align}
as in Eq.~\eqref{eq:upper bound mc} in Thm.~\ref{thm:upper bound}.
\end{proof}

\section{Proof of Theorem \ref{thm:lower_bound}}
\label{app:lower_bound}

In this section, let us prove the lower bound in Theorem \ref{thm:lower_bound}. 
Recall that, as in Sec. \ref{sec:Bernoulli HCB}, we define
$
\mathbf{p} = (p_1,\ldots,p_{N}),
$
and
$
\mathbf{q} = (q_1,\ldots,q_{N}). 
$
Additionally, without loss of generality, we assume in the following  that 
\ea{p_1\leq  \ldots \leq p_{N}\leq \frac{1}{2}.
\label{eq:ordered}
}
Note that no such assumption is made for the $q_i$ sequence. 
Recall from in Lem.~\ref{lem:small}, we showed that $m_1$ characterizes the number of highly-biased entries in $\pv$.
In particular, we have $p_1\leq  \ldots \leq p_{\lceil m_1\rceil }\leq \frac{1}{m_1}$ when  the assumption in \eqref{eq:ordered} holds. 

\begin{proof}
In this section, we will first focus on proving that $$R_T\geq \frac{1}{127}\sqrt{\frac{m_1\alpha^2}{T}}, \quad \mbox{for $m_1\geq \tau_1(\alpha)$}.$$ 
The case of $m_0\geq \tau_0(\alpha)$ can be proved following similar steps as the case of $m_1\geq \tau_1(\alpha)$. The case where $m_1\geq \tau_1(\alpha)$ and $m_0\geq \tau_0(\alpha)$ simultaneously is the intersection of the two cases above, which implies that the lower bounds  $R_T\geq\frac{1}{127}\sqrt{m_1\alpha^2/T}$ and $R_T\geq \frac{1}{127}\sqrt{m_0\overline{\alpha}^2/T}$ are both satisfied. 
Finally, we will also briefly discuss the proof for the case that $m_1<\tau_1(\alpha)$ and $m_0<\tau_0(\alpha)$ at the same time.

The proof of this theorem consists of the following three main steps. 

\begin{enumerate}
    \item First, we construct a series of reward functions $r_i$, and a baseline reward function $r_0$ such that $r_i$ is greater than $r_0$ by $\varepsilon$ if the variables $\Xv$ fall into a particular set, which we refer to as the target set; for everywhere else, $r_i$ equals $r_0$. Under this construction, the optimal action will be the one that maximizes the chance of hitting this target set.
    \item Next, we show that there exists an optimal action under which we can reach the target set with probability at least $\alpha/e$. If the agent chooses any other action, however, the chance of reaching this target set will be significantly lower. After proving the claim above, we know that the agent will suffer non-trivial regret if the optimal action is not chosen. 
    \item Finally, in the third part of the proof,   
we show that the probability that the agent chooses the wrong action is bounded from below by a constant. Combining the second and third parts of the proof, we know that the agent will choose a wrong action with a non-negligible probability, and if this happens, the regret is non-trivial, thus completing the proof of the lower bound. 
\end{enumerate}

Note that for the HCB model, each $X_i$ is a parent node of $Y$, i.e., $ \mathrm{pa}(Y)  = [N]$. We begin by constructing the following series of reward functions.  Let $r_0$, $r_i$, $i\in[\lceil m_1\rceil]$ be reward functions with 
\begin{equation}
r_0\left(\mathbf{x}\right)=\frac{1}{2},
\end{equation}
for all $\mathbf{x}\in \Xcal^N$, and 
\begin{equation}
r_i(\mathbf{x})=\frac{1}{2}+\varepsilon\mathbf{1}_{\{\mathbf{x}\in\Xcal_i^*\}},
\end{equation}
where $$\Xcal_i^* = \left\{\mathbf{x}\in\Xcal^N: x_i = 1, x_\ell = 0, \ell\in [\lceil m_1 \rceil]\backslash\{i\}\right\},$$ is the target set defined in Eq.~\eqref{def:X_star}, for $i\in[\lceil m_1\rceil]$. Here, $\varepsilon\in(0,1/4)$ is a parameter to be specified later in the proof. In other words, the function $r_i$ favors a specific state of $\mathbf{x}$, one in which the $i$th entry is one while all other in the first $\lceil m_1\rceil$ entries are zero. 

Let $P_j$ be the probability distribution of $\Hv_T$ when the system parameters are  $(\alpha,\mathbf{p},\mathbf{q},r_j)$ and the algorithm is $\phi$. Let $\Ebb_j$ be the expectation with respect to $P_j$, for $j\in [0:\lceil m_1\rceil ]$.  
In the following lemma, we show that for $i\in [\lceil m_1\rceil]$, when the reward function is fixed as $r_i$, the conditional reward given action $a$ is determined by the probability of $\Xv$ reaching the set $\Xcal_i^*$. The proof of this lemma is provided in App.~\ref{app:additional3}.
\begin{lemma}\label{lem:additional3}
Fix $i\in [\lceil m_1\rceil]$, and suppose that the reward function is $r_i$. 
The conditional reward given an action $a$ can be expanded as
\begin{align}
    P_i(Y=1|A=a) 
    = & \frac{1}{2}+\varepsilon P_0(\Xv\in\Xcal_i^*|A=a).
\end{align}
\end{lemma}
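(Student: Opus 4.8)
The plan is to prove Lemma~\ref{lem:additional3} by a direct computation that conditions on the realization of $\mathbf{X}$ and exploits the fact that, under the reward function $r_i$, the reward $Y$ is a Bernoulli random variable whose parameter depends on $\mathbf{x}$ only through membership in the target set $\Xcal_i^*$. The essential observation is that the structure of an HCB forces $P_i(Y=1\mid A=a)$ to factor through the distribution of $\mathbf{X}$ under action $a$, because $Y$ depends on the rest of the system only via $\mathbf{X}$ (recall $\mathrm{pa}(Y)=[N]$), and critically the reward function itself is \emph{the same} under $r_0$ and $r_i$ except for the additive bump on $\Xcal_i^*$.

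First I would write the conditional reward by summing over the possible arm configurations:
\begin{align}
    P_i(Y=1\mid A=a) = \sum_{\mathbf{x}\in\Xcal^N} P_i(Y=1\mid \Xv=\mathbf{x},A=a)\, P_i(\Xv=\mathbf{x}\mid A=a).
\end{align}
Next I would use the definition of the reward function to replace $P_i(Y=1\mid \Xv=\mathbf{x},A=a)$ with $r_i(\mathbf{x}) = \tfrac12 + \varepsilon\mathbf{1}_{\{\mathbf{x}\in\Xcal_i^*\}}$; since $\mathrm{pa}(Y)=[N]$, the conditional law of $Y$ given $\mathbf{X}=\mathbf{x}$ is exactly $r_i(\mathbf{x})$ regardless of the action $a$ (the action only reshapes the distribution over $\mathbf{X}$, not the reward mechanism given $\mathbf{X}$). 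Substituting and splitting the sum gives
\begin{align}
    P_i(Y=1\mid A=a)
    &= \sum_{\mathbf{x}} \Big(\tfrac12 + \varepsilon\mathbf{1}_{\{\mathbf{x}\in\Xcal_i^*\}}\Big) P_i(\Xv=\mathbf{x}\mid A=a)\nonumber\\
    &= \tfrac12 + \varepsilon\, P_i(\Xv\in\Xcal_i^*\mid A=a),
\end{align}
where the last equality uses $\sum_{\mathbf{x}} P_i(\Xv=\mathbf{x}\mid A=a)=1$.

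The final step I would take is to argue that $P_i(\Xv\in\Xcal_i^*\mid A=a) = P_0(\Xv\in\Xcal_i^*\mid A=a)$, i.e.\ that I may replace the subscript $i$ by $0$ in the arm-distribution term. This holds because the distribution of $\mathbf{X}$ under any action $a$ is governed entirely by the parameters $(\alpha,\mathbf{p},\mathbf{q})$, which are identical across the two systems, together with the intervention encoded by $a$; the reward function $r$ does not influence the law of $\mathbf{X}$ at all (it sits downstream of $\mathbf{X}$ in the DAG). Consequently $P_i(\Xv=\mathbf{x}\mid A=a)=P_0(\Xv=\mathbf{x}\mid A=a)$ for every $\mathbf{x}$ and every action $a$, which yields the claimed identity. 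The main subtlety to handle carefully here is that the action $A=a$ is itself chosen adaptively by the algorithm $\phi$ based on the past history $\Hv_{t-1}$, so one must verify that conditioning on $\{A=a\}$ does not secretly couple the reward function into the arm distribution through the algorithm's past observations of $Y$; this is where I expect the only real obstacle to lie. I would resolve it by noting that the per-round conditional law of $\mathbf{X}^{(t)}$ given the chosen action $A^{(t)}=a$ is time-homogeneous and depends only on $(\alpha,\mathbf{p},\mathbf{q},a)$ — it is insensitive to how $a$ came to be selected — so the reward-function-dependence present in the selection of $a$ is irrelevant once we condition on the event $\{A=a\}$ and ask only for the conditional arm distribution in that same round.
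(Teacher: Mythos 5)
Your proof is correct and follows essentially the same route as the paper's: decompose $P_i(Y=1|A=a)$ over realizations of $\Xv$, use that the conditional law of $Y$ given $\Xv=\xv$ is exactly $r_i(\xv)$, and then replace $P_i(\Xv\in\Xcal_i^*|A=a)$ by $P_0(\Xv\in\Xcal_i^*|A=a)$ since the reward function does not affect the law of $(S,\Xv)$. Your closing discussion of adaptive action selection is a sound, slightly more careful justification of the step that the paper asserts in a single line.
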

Now we show that under reward function $r_i$, there exists an optimal action $a_i^*$  such that by taking this action, the variables $\Xv$ are much more likely to reach the target set than they would under any other action.  
The significance of this fact, if true, is that then if the agent fails to select this optimal action, the resulting loss in expected reward would be bounded from below by $\varepsilon$ times a constant value. 

With Lem.~\ref{lem:additional3}, we see that the expected reward of an action $a$ is determined by the conditional probability that $\Xv$ falls into the set $\Xcal_i^*$ given the intervention $A=a$.  Now we further study the term $P_0(\Xv\in\Xcal_i^*|A=a)$. 
In Lem.~\ref{lem:1}, we showed that:
\begin{enumerate}
\item the probability of reaching the target set $\Xcal_i^*$ under action $\DO(X_i=1)$ is at least $\alpha/e$, i.e., $$P_0(\Xv\in\Xcal_i^*|\DO(X_i=1))\geq \frac{\alpha}{e},$$ 

\item the probability of reaching that set under any other action is at most $1/m_1$, i.e.,  
$$P_0(\Xv\in\Xcal_i^*|A=a)\leq \frac{1}{m_1}, \quad a\neq \DO(X_i=1).$$ 
\end{enumerate}
Since $m_1\geq \frac{3e}{(3-e)\alpha}$, we have $\frac{\alpha}{e}> \frac{1}{m_1}$, which implies that the optimal action corresponding to reward function $r_i$ is 
\begin{equation}
    a_i^* = \DO(X_i=1). 
\end{equation}
Now we are able to show an intermediate lower bound for the regret in the following lemma, which is proved in App.~\ref{app:intermediate}.
\begin{lemma}\label{lem:intermediate}
Fix $i\in[\lceil m_1\rceil]$ and let the reward function be $r_i$. The simple regret $R_T$ satisfies 
\begin{equation}
    R_T\geq \varepsilon  \left(\frac{\alpha}{e}-\frac{1}{m_1}\right)P_{i} (\hat{A}_T \neq a_{i}^*) 
\end{equation}
for all $T\geq 1$.
\end{lemma}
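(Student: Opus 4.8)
The plan is to expand the simple regret $R_T$ directly under the law $P_i$ and bound it term by term, combining the characterization of $\mu_a$ from Lem.~\ref{lem:additional3} with the two probability estimates on $P_0(\Xv\in\Xcal_i^*\mid A=a)$ from Lem.~\ref{lem:1}. First I would apply Lem.~\ref{lem:additional3} to write, for every $a\in\Acal$,
\[
\mu_a = P_i(Y=1\mid A=a) = \tfrac{1}{2} + \varepsilon\, P_0(\Xv\in\Xcal_i^*\mid A=a).
\]
Since $a_i^*=\DO(X_i=1)$ is the optimal action and $\mu^*=\mu_{a_i^*}$, the first bound of Lem.~\ref{lem:1} gives $\mu^*\geq \tfrac{1}{2}+\varepsilon\tfrac{\alpha}{e}$, while the second bound gives $\mu_a\leq \tfrac{1}{2}+\tfrac{\varepsilon}{m_1}$ for every $a\neq a_i^*$.

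Next I would split the sum in the definition $R_T=\mu^*-\sum_{a\in\Acal}\mu_a P_i(\hat{A}_T=a)$ according to whether the identified action equals $a_i^*$. Using $\mu_{a_i^*}=\mu^*$ and collecting the $a=a_i^*$ term, this becomes
\[
R_T = \mu^*\,P_i(\hat{A}_T\neq a_i^*) - \sum_{a\neq a_i^*}\mu_a\,P_i(\hat{A}_T=a).
\]
I would then bound the remaining sum by replacing each $\mu_a$ with its upper bound $\tfrac{1}{2}+\tfrac{\varepsilon}{m_1}$ and using $\sum_{a\neq a_i^*}P_i(\hat{A}_T=a)=P_i(\hat{A}_T\neq a_i^*)$, and finally substitute the lower bound on $\mu^*$. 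The two baseline terms $\tfrac{1}{2}$ cancel exactly, leaving
\[
R_T \geq \Big(\tfrac{\varepsilon\alpha}{e}-\tfrac{\varepsilon}{m_1}\Big)P_i(\hat{A}_T\neq a_i^*) = \varepsilon\Big(\tfrac{\alpha}{e}-\tfrac{1}{m_1}\Big)P_i(\hat{A}_T\neq a_i^*),
\]
which is precisely the claimed bound.

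I do not anticipate a serious obstacle, since the argument is an elementary algebraic manipulation once Lem.~\ref{lem:additional3} and Lem.~\ref{lem:1} are in hand; the only point requiring care is the bookkeeping of the common $\tfrac{1}{2}$ baseline, which must cancel exactly so that only the $\varepsilon$-scaled gap between $\alpha/e$ and $1/m_1$ survives. It is worth noting that the coefficient $\varepsilon(\alpha/e-1/m_1)$ is strictly positive precisely because the standing assumption $m_1\geq\tau_1(\alpha)=\tfrac{3e}{(3-e)\alpha}$ forces $\alpha/e>1/m_1$, so the resulting bound is non-vacuous. Finally, because no concentration inequality or horizon-dependent estimate enters the argument, the inequality holds for all $T\geq 1$ as stated.
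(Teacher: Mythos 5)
Your proof is correct and follows essentially the same route as the paper: both decompose $R_T$ according to whether $\hat{A}_T = a_i^*$, obtaining $R_T = \mu^* P_i(\hat{A}_T\neq a_i^*) - \sum_{a\neq a_i^*}\mu_a P_i(\hat{A}_T=a)$, and then combine Lem.~\ref{lem:additional3} with the two bounds of Lem.~\ref{lem:1} so that the common baseline $\tfrac{1}{2}$ cancels and the gap $\varepsilon(\alpha/e-1/m_1)$ survives. The only cosmetic difference is that you substitute the numerical bounds on $\mu^*$ and $\mu_a$ directly, whereas the paper first isolates the gap $\Ebb_i[Y|A=a_i^*]-\max_{a\neq a_i^*}\Ebb_i[Y|A=a]$ before applying Lem.~\ref{lem:1}.
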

With the lemma above, we see that assuming the wrong action is chosen, i.e., $\hat{A}_T \neq a_i^*$, the regret incurred by this choice is at least $\varepsilon\left(\frac{\alpha}{e}-\frac{1}{m_1}\right)$.

Now it remains to characterize the probability with which the agent fails to identify the optimal action, i.e., $P_i(\hat{A}_T\neq a_i^*)$ when the reward function is $r_i$. 
If this probability is bounded from below by a constant, then we will have obtained a non-trivial lower bound based on Lem.~\ref{lem:intermediate}. This portion of the proof makes use of relatively standard techniques in the literature for deriving such lower bounds, that is, by showing that the distributions $P_0$ and $P_i$ are highly similar (measured by KL-divergence) and thus difficult to be distinguished via a finite number of samples: see for instance \cite{tsybakov2008introduction} and \cite{gerchinovitz2016refined}. 

In this part of the proof, we will first prove an upper bound on the KL divergence between $P_0$ and $P_i$. After that, with Lem.~\ref{lemma:subset}, we are able to obtain a lower bound on $P_0(B)+P_i(B^c)$, where $B$ is any measurable event. Let $B=\{\hat{A}_T=a_i^*\}$ be the event that the agent chooses action $a_i^*$, then its complement $B^c=\{\hat{A}_T\neq a_i^*\}$ is the event that $a_i^*$ is not selected. As a result, we will be able to derive a lower bound for its probability under distribution $P_i$. Thus, the missing piece of the lower bound in Lem.~\ref{lem:intermediate} will be completed. 

Let us next present the more formal derivation of the bound. 
Fix an arbitrary $i\in[\lceil m_1\rceil]$ and let us consider the KL divergence between distributions $P_0$ and $P_i$, which we denote as
\begin{align}
    D_{\rm KL}(P_0,P_i) 
    = & D_{\rm KL}(P_0(\Hv_T),P_i(\Hv_T)).
    \label{eq:KL}
\end{align}
In Lem.~\ref{lemma:kl}, we showed that 
$D_{\rm KL}(P_0,P_i)$ can be decomposed along the time horizon, i.e., 
\begin{align}
    &D_{\rm KL}(P_0(\Hv_T),P_i(\Hv_T)) \nonumber\\
    = &\sum_{t\in[T]} \sum_{\mathbf{x}^{(t)}\in\Xcal^{N}} P_0\left(\mathbf{x}^{(t)}\right) \left(r_0\left(\mathbf{x}^{(t)}\right)\log\frac{r_0\left(\mathbf{x}^{(t)}\right)}{r_i\left(\mathbf{x}^{(t)}\right)}+\left(1-r_0\left(\mathbf{x}^{(t)}\right)\right)\log\frac{1-r_0\left(\mathbf{x}^{(t)}\right)}{1-r_i\left(\mathbf{x}^{(t)}\right)}\right).
\end{align}
In the following lemma, we derive an upper bound for the KL divergence based on the decomposition above. This upper bound is determined by the probability of reaching the target set $\Xcal_i^*$. We postpone the proof of this lemma to App.~\ref{app:additional4}.
\begin{lemma}\label{lem:additional4}
The KL divergence between $P_0$ and $P_i$ in \eqref{eq:KL} satisfies 
\begin{equation}
    D_{\rm KL}(P_0(\Hv_T),P_i(\Hv_T)) \leq \sum_{t\in [T]}P_0(\Xv^{(t)}\in\mathcal{X}_i^*)\frac{16\varepsilon^2}{3}.
\end{equation}
\end{lemma}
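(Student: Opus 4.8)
The plan is to start from the per-round decomposition of $D_{\rm KL}(P_0(\Hv_T),P_i(\Hv_T))$ supplied by Lem.~\ref{lemma:kl} and to exploit the fact that $r_0$ and $r_i$ differ only on the target set $\Xcal_i^*$. Concretely, since $r_0(\mathbf{x})=\tfrac12=r_i(\mathbf{x})$ for every $\mathbf{x}\notin\Xcal_i^*$, both logarithmic terms in the inner summand vanish identically off $\Xcal_i^*$. Hence the double sum collapses to a sum over $t\in[T]$ of contributions coming only from configurations $\mathbf{x}^{(t)}\in\Xcal_i^*$, on each of which $r_0=\tfrac12$ and $r_i=\tfrac12+\varepsilon$.

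Next I would observe that on $\Xcal_i^*$ the inner expression $r_0\log\frac{r_0}{r_i}+(1-r_0)\log\frac{1-r_0}{1-r_i}$ is precisely the Bernoulli KL divergence $D_{\rm KL}\!\left(\mathrm{Ber}(\tfrac12),\mathrm{Ber}(\tfrac12+\varepsilon)\right)$, a quantity independent of which particular $\mathbf{x}^{(t)}\in\Xcal_i^*$ is taken. Factoring this constant out of the inner sum and using $\sum_{\mathbf{x}^{(t)}\in\Xcal_i^*}P_0(\mathbf{x}^{(t)})=P_0(\Xv^{(t)}\in\Xcal_i^*)$ reduces the identity to
\[
D_{\rm KL}(P_0(\Hv_T),P_i(\Hv_T)) = \Big(\sum_{t\in[T]}P_0(\Xv^{(t)}\in\Xcal_i^*)\Big)\, D_{\rm KL}\!\left(\mathrm{Ber}(\tfrac12),\mathrm{Ber}(\tfrac12+\varepsilon)\right).
\]

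It then remains to show $D_{\rm KL}\!\left(\mathrm{Ber}(\tfrac12),\mathrm{Ber}(\tfrac12+\varepsilon)\right)\le\tfrac{16}{3}\varepsilon^2$ for $\varepsilon\in(0,\tfrac14)$. The cleanest route is the chi-squared upper bound on KL from the first lemma of Sec.~\ref{sec:Regret Lower Bounds}, which gives $D_{\rm KL}\!\left(\mathrm{Ber}(\tfrac12),\mathrm{Ber}(\tfrac12+\varepsilon)\right)\le\frac{\varepsilon^2}{(\tfrac12+\varepsilon)(\tfrac12-\varepsilon)}=\frac{\varepsilon^2}{1/4-\varepsilon^2}$; since $\varepsilon<\tfrac14$ forces $1/4-\varepsilon^2>3/16$, this is at most $\tfrac{16}{3}\varepsilon^2$. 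Alternatively, one can evaluate the Bernoulli KL in closed form as $-\tfrac12\log(1-4\varepsilon^2)$ and apply $-\log(1-u)\le u/(1-u)$ with $u=4\varepsilon^2$, yielding the even sharper bound $\tfrac{8}{3}\varepsilon^2$. Substituting this constant into the displayed identity and bounding the equality by the resulting inequality completes the proof.

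There is no substantial obstacle here; the only points requiring care are the bookkeeping that isolates the $\Xcal_i^*$ contributions from the full double sum and the elementary verification that the Bernoulli-KL constant stays controlled by $\tfrac{16}{3}\varepsilon^2$ throughout the admissible range. The restriction $\varepsilon\in(0,\tfrac14)$ imposed in the construction of $r_i$ is exactly what keeps the denominator $1/4-\varepsilon^2$ bounded away from zero and thus the constant finite.
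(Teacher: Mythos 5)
Your proposal is correct and follows essentially the same route as the paper's proof: the per-round decomposition from Lem.~\ref{lemma:kl}, the observation that only $\mathbf{x}^{(t)}\in\Xcal_i^*$ contributes since $r_0=r_i$ elsewhere, and the chi-squared upper bound $\frac{\varepsilon^2}{(\frac12+\varepsilon)(\frac12-\varepsilon)}\leq\frac{16\varepsilon^2}{3}$ using $\varepsilon\in(0,\tfrac14)$. Your closed-form aside yielding the sharper constant $\tfrac{8}{3}\varepsilon^2$ is a valid refinement, but it is immaterial here since the lemma only claims $\tfrac{16}{3}\varepsilon^2$.
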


With Lem.~\ref{lem:additional4} in mind, we can expand the KL divergence as follows:
    \begin{align}
     D_{\rm KL} (P_0(\Hv_T),P_i(\Hv_T))\leq & \frac{16\varepsilon^2}{3}\sum_{t\in[T]} P_0(A^{(t)} = a_i^*)+\frac{16\varepsilon^2}{3m_1}\sum_{t\in[T]} P_0(A^{(t)} \neq a_i^*)\label{pf:firstline}\\
    = & \frac{16\varepsilon^2}{3}\left(1-\frac{1}{m_1}\right)\sum_{t\in[T]} P_0(A^{(t)} = a_i^*)+\frac{16\varepsilon^2T}{3m_1}\nonumber\\
    = & \frac{16\varepsilon^2}{3}\left(1-\frac{1}{m_1}\right)\Ebb_0\left[\sum_{t\in[T]}\mathbf{1}_{\{A^{(t)} =
    a_i^*\}}\right]+\frac{16\varepsilon^2T}{3m_1},\label{eq:klm}
\end{align}

where in Eq.~\eqref{pf:firstline}, we applied Lem.~\ref{lem:1}: in particular, we used the property of the system that for any action other than $a_i^*$, the probability of reaching the target set is at most $1/m_1$. 
Also, in \eqref{eq:klm}, $\Ebb_i$ with $i \in [0:N]$ indicates the expected value according to the distribution $P_i$.

In the next lemma, we show that there exists a reward function $r_i$, under which the agent will choose the wrong action at the end of the experiment with non-negligible probability. To prove this lemma, we will take advantage of the upper bound on the KL divergence in Eq.~\eqref{eq:klm}. In particular, we will use Eq.~\eqref{eq:klm} to derive a lower bound for the event that the agent mistakes a sub-optimal action for the optimal one. The proof of the lemma is delayed until App.~\ref{app:miss}.
\begin{lemma}\label{lem:miss}
Fix parameters $\alpha$, $\pv$, $\qv$ and fix an algorithm $\phi$ defined in Eq.~\eqref{eq:algo}. 
There exists $i'\in [\lceil m_1\rceil]$, such that for reward function $r_{i'}$, 
\begin{align}
    P_{i'}\left(\hat{A}_{T} \neq a_{i'}^*\right)\geq \frac{1}{2.1}-\frac{2}{m_1}.
\end{align}
\end{lemma}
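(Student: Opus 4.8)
The plan is to derive the bound via a standard ``averaging over hypotheses'' argument built on top of the KL estimate in \eqref{eq:klm}. Write $M = \lceil m_1 \rceil$ for the number of candidate reward functions $r_1,\dots,r_M$. First, for each fixed $i \in [M]$ I would apply Lem.~\ref{lemma:subset} with the measurable event $\Bcal = \{\hat{A}_T = a_i^*\}$, whose complement is $\Bcal^c = \{\hat{A}_T \neq a_i^*\}$, and rearrange to obtain
\begin{equation}
P_i(\hat{A}_T \neq a_i^*) \geq \tfrac{1}{2}\exp\!\left(-D_{\rm KL}(P_0,P_i)\right) - P_0(\hat{A}_T = a_i^*).
\end{equation}
Rather than controlling a single $i$, I would average this inequality over all $i \in [M]$ and then invoke the pigeonhole principle: since the maximum of a finite collection is at least its average, it suffices to lower bound $\frac{1}{M}\sum_{i\in[M]} P_i(\hat{A}_T \neq a_i^*)$ by $\frac{1}{2.1} - \frac{2}{m_1}$, and the desired index $i'$ then exists.

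Two combinatorial observations drive the averaging. Because the optimal actions $a_i^* = \DO(X_i = 1)$ are distinct across $i$, the events $\{\hat{A}_T = a_i^*\}$ are disjoint, so $\sum_{i\in[M]} P_0(\hat{A}_T = a_i^*) \leq 1$ and the averaged subtracted term is at most $1/M$. For the exponential term I would use the convexity of $t \mapsto e^{-t}$ (Jensen's inequality) to pull the average inside, namely $\frac{1}{M}\sum_i e^{-D_{\rm KL}(P_0,P_i)} \geq \exp(-\frac{1}{M}\sum_i D_{\rm KL}(P_0,P_i))$. It then remains to bound the average KL divergence. Summing \eqref{eq:klm} over $i$ and using that the algorithm plays exactly one action per round---so that $\sum_{i\in[M]} \mathbf{1}_{\{A^{(t)} = a_i^*\}} \leq 1$ and hence $\sum_{i\in[M]} \Ebb_0[\sum_{t\in[T]} \mathbf{1}_{\{A^{(t)} = a_i^*\}}] \leq T$---gives, after dividing by $M$ and using $M \geq m_1$,
\begin{equation}
\frac{1}{M}\sum_{i\in[M]} D_{\rm KL}(P_0,P_i) \leq \frac{32\varepsilon^2 T}{3 m_1}.
\end{equation}

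With these pieces assembled, I would finally pin down the free parameter $\varepsilon$. Choosing $\varepsilon = c\sqrt{m_1/T}$ for a sufficiently small absolute constant $c$ makes $\frac{32\varepsilon^2 T}{3m_1} = \frac{32 c^2}{3}$ a constant small enough that $\frac{1}{2}\exp(-\frac{32c^2}{3}) \geq \frac{1}{2.1}$; the hypothesis $T \geq m_1$ simultaneously guarantees $\varepsilon < 1/4$, as required for the reward-function construction. Combining the averaged inequality with $\frac{1}{M} \leq \frac{1}{m_1} \leq \frac{2}{m_1}$ then yields $\frac{1}{M}\sum_i P_i(\hat{A}_T \neq a_i^*) \geq \frac{1}{2.1} - \frac{2}{m_1}$, and the pigeonhole step produces the claimed $i'$. (Note that this same choice of $\varepsilon$, fed through Lem.~\ref{lem:intermediate} and the bound $\frac{\alpha}{e} - \frac{1}{m_1} \geq \frac{\alpha}{3}$ valid when $m_1 \geq \tau_1(\alpha)$, is what ultimately produces the $\sqrt{m_1\alpha^2/T}$ scaling of the theorem.)

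The main obstacle I anticipate is not any single step but the bookkeeping that makes the averaging tight: one must verify that summing the per-$i$ KL bound really does telescope against the single-action-per-round constraint, so that the terms $\Ebb_0[\sum_t \mathbf{1}_{\{A^{(t)}=a_i^*\}}]$ aggregate to at most $T$ rather than $MT$, and then calibrate $c$ so that the exponential, the $1/M$ loss, and the target constant $\frac{1}{2.1}$ all fit together. The conceptual crux is recognizing that averaging over the $M$ hypotheses, rather than arguing about a fixed one, is precisely what converts the ``total information gathered is only $O(T)$ across all hypotheses'' budget into a per-hypothesis indistinguishability guarantee.
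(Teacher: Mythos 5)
Your overall skeleton---per-index application of Lem.~\ref{lemma:subset}, distinctness of the actions $a_i^*$, a total-KL budget across hypotheses, and an averaging/pigeonhole step---matches the paper's proof, and your use of Jensen's inequality in place of the paper's explicit selection of a low-budget subset $\mathcal I_0\subseteq\mathcal I$ would be a perfectly fine variant. However, there is a genuine gap: you sum Eq.~\eqref{eq:klm} over \emph{all} $i\in[\lceil m_1\rceil]$, whereas Eq.~\eqref{eq:klm} is only established for $i\in\mathcal I=\left\{\pi(j):1\leq j\leq\lfloor m_1/2\rfloor\right\}$, the half of the indices with smallest $q$-values. The step \eqref{pf:firstline} behind \eqref{eq:klm} invokes the second part of Lem.~\ref{lem:1}, namely $P_0\left(\Xv\in\Xcal_i^*\,\middle|\,a\right)\leq 1/m_1$ for every $a\neq\DO(X_i=1)$, and that conclusion is stated and proved only for $i\in\mathcal I$. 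It can genuinely fail outside $\mathcal I$: take $q_i$ close to $1$ and all other $q_\ell$ close to $0$; then under $\DO(S=0)$, or under $\DO(\varnothing)$ when $\alpha$ is bounded away from $1$ (so the example also lives in the nmc action set), one has $P_0\left(\Xv\in\Xcal_i^*\,\middle|\,a\right)=q_i\prod_{\ell\neq i}\overline{q}_\ell$ of constant order. For an algorithm $\phi$ that plays such an action every round, the exact decomposition in Lem.~\ref{lemma:kl} gives $D_{\rm KL}(P_0,P_i)=\Theta(\varepsilon^2 T)$ rather than $O(\varepsilon^2 T/m_1)$, so the inequality you are summing is false for some summands, and your bound on the average KL divergence is not justified as written.

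The gap is repairable in two ways, and the difference matters. (a) Restrict the averaging to $i\in\mathcal I$, which is essentially what the paper does (it further passes to the subset $\mathcal I_0$ of indices with $\Ebb_0[T_i]\leq 2T/m_1$ instead of using Jensen); the loss of a factor of roughly $2$ in cardinality is exactly where the $-2/m_1$ slack in the statement comes from. (b) Observe that the target sets $\Xcal_i^*$, $i\in[\lceil m_1\rceil]$, are pairwise disjoint (for $i\neq j$, membership in $\Xcal_i^*$ forces $x_j=0$ while $\Xcal_j^*$ forces $x_j=1$), so Lem.~\ref{lem:additional4} alone yields
\begin{equation}
\sum_{i\in[\lceil m_1\rceil]} D_{\rm KL}(P_0(\Hv_T),P_i(\Hv_T))\leq\frac{16\varepsilon^2}{3}\sum_{t\in[T]}\sum_{i\in[\lceil m_1\rceil]} P_0\left(\Xv^{(t)}\in\Xcal_i^*\right)\leq\frac{16\varepsilon^2 T}{3},
\end{equation}
which gives your averaged bound with no appeal to Lem.~\ref{lem:1} at all. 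Fix (b) is elegant, but carries a downstream caveat: the index $i'$ it produces need not lie in $\mathcal I$, and Lem.~\ref{lem:intermediate}---which converts $P_{i'}(\hat{A}_T\neq a_{i'}^*)$ into regret---itself relies on the second part of Lem.~\ref{lem:1} for that same index. So for the theorem as a whole, fix (a) is the safe one.
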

Combining Lem.~\ref{lem:miss} with Lem.~\ref{lem:intermediate}, we are now in a position to complete the proof.
Since $m_1\geq \frac{3e}{3-e}\frac{1}{\alpha}$, we have that there must be an index $i'$ such that if we let the reward function be $r_{i'}$,
\begin{align}
R_T 
\geq &\varepsilon \left(\frac{1}{2.1}-\frac{2}{m_1}\right)\left(\frac{\alpha}{e}-\frac{1}{m_1}\right)\geq \frac{19\alpha\varepsilon}{126} = \frac{1}{127}\sqrt{\frac{m_1\alpha^2}{T}}. 
\end{align}
If $m_0\geq\frac{3e}{3-e}\frac{1}{\alb}$, we similarly  have  that
\begin{equation}
    R_T \geq \frac{1}{127}\sqrt{\frac{m_0\alb^2}{T}}.
\end{equation}

In summary, we have now shown the lower bounds for the case of $m_1\geq \tau_1(\alpha)$ and that of $m_0\geq \tau_0(\alpha)$. It follows that if $m_1\geq \tau_1(\alpha)$ and $m_0\geq \tau_0(\alpha)$ hold at the same time, the regret must satisfy both of the those lower bounds, i.e., 
\begin{equation}
    R_T \geq \frac{1}{127}\sqrt{\frac{\max(m_1\alpha^2,m_0\alb^2)}{T}}.
\end{equation}  

Now it remains to prove for the case that $m_1<\tau_1(\alpha)$ and $m_0<\tau_0(\alpha)$ simultaneously. 
To conclude, we briefly explain this scenario: 

Choose 
$$\Xcal_i^* = \{\mathbf{x}\in \Xcal^{ N }:x_i=1\}.$$ 
With Lem.~\ref{lem:additional4}, the KL divergence is bounded by 
\begin{equation}
     D_{\rm KL} (P_0(\Hv_T),P_i(\Hv_T))\
     \leq  \frac{16\varepsilon^2}{3}\sum_{t\in[T]} P_0(\Xv^{(t)}\in\mathcal{X}_i^*)\leq \frac{16\varepsilon^2T}{3}.
     \label{eq:less less}
\end{equation}
Choose $\varepsilon$ in Eq.~\eqref{eq:less less} as
\begin{equation}
\varepsilon = \frac{\sqrt{\ln (1.05)}}{4}\sqrt{\frac{1}{T}}.
\end{equation}
Following similar steps as in the proof of Lem.~\ref{lem:miss}, we have that there exists $i'\in [N]$ such that 
\begin{align}
    P_{i'}\left(\hat{A}_T \neq a_{i'}^*\right)\geq \frac{1}{7}.
\end{align}
In other words, when the reward function is $r_{i'}$, the probability of selecting a wrong action is at least $1/7$. 
In addition, the regret of choosing a wrong action is at least 
\begin{equation}
    P_{i'}(\Xv\in\Xcal_{i'}^*|A=a_{i'}^*)-\max_{a\neq a_{i'}^*} P_{i'}(\Xv\in\Xcal_{i'}^*|A=a))\geq 1-\max\left(q_{\max},\frac{1}{2}\right).
\end{equation}
Therefore, following similar steps as in the previous case, we have the following lower bound for the simple regret
\begin{equation}
    R_T \geq \frac{1-\max\left(q_{\max},\frac{1}{2}\right)}{127}\sqrt{\frac{1}{T}}.
\end{equation}
This concludes the proof.
\end{proof}

\section{Technical Analysis}\label{app:tech}

\subsection{Proof of Lemma \ref{lem:exists}}
\label{app:exists}

The proof involves showing that the minimum in Eq.~\eqref{def:m} exists.
Let $\vv = (v_1,\ldots,v_N)$, and let $\bm \theta = (\theta_1,\ldots,\theta_N)$, where $\theta_i =\min(v_i,\overline{v_i})$. Without loss of generality, suppose $\theta_1\leq \ldots\leq \theta_N\leq \frac{1}{2}$. Although we restrict that $\vv\in(0,1)^N$ throughout the paper, in this lemma we may prove for the more general case that $\vv\in[0,1]^N$, which corresponds to $\bm \theta = [0,1/2]^N$.

Now we suppose that $\vv$ is a given vector, and let $f(s) = |\mathcal I_s(\vv)|$ be a function of $s$ for $s\in[1,\infty)$.
We show that $f(s)$ is right-continuous, i.e., $f(u) = \lim_{s\to u+} f(s)$ for $u\in[1,\infty)$. 
Let $k_0$ be a non-negative integer. Suppose $\theta_i = 0$ for $i\leq k_0$, that is $k_0$ indicates the number of entries in $\bm \theta$ which are equal to zero.
Note that when $k_0=0$, we have $\theta_i\in(0,1/2)$ for $i\in[N]$.
 It is easy to see that $0\leq f(s)\leq N$.
In particular, the function $f(s)$ can be expanded as follows:
\begin{equation}
f(s) = |\{i\in[N]:\theta_i<1/s\}| = k_0+|\{i\in[k_0+1:N]:\theta_i<1/s\}|.
\end{equation}

Suppose the entries $\theta_i$, $i\in[k_0+1:N]$ take $n$ different values $1/u_1>\ldots>1/u_n$, with multiplicity $k_1,\ldots,k_n$, respectively. In other words, 
\begin{equation}
k_j=\sum_{i\in[k_0+1:N]}\mathbf{1}_{\{ v_i = 1/u_j\}},
\end{equation}
where $u_i\geq 2$, $i\in[n]$, and  $k_0+\sum_{j\in[n]} k_j = N$. 
It is easy to see that 
\begin{equation}
    f(s) = 
    \begin{cases}
    N & s\in [0,u_1)\\
    N-\sum_{j\in[i]} k_j & s\in [u_i,u_{i+1}), i\in[n-1]\\
    k_0 & s\in [u_n,\infty).
    \end{cases}
\end{equation}
Therefore, $f:[1,\infty)\mapsto [0:N]$ is a non-increasing step function, whose right limit exists everywhere on its domain. 

We are now ready to prove that the minimum in Eq.~\eqref{def:m} exists. Let $\mathcal  J = \{s\in \mathbb R_+: s\geq f(s)\}$. It is not difficult to show that $\mathcal J$ is not empty. In particular, $s\in \mathcal J$ for all $s\geq N$, so $\mathcal J\neq \varnothing$. 

Next, it is easy to verify that $\mathcal J$ is a (connected) interval of the real line.  
In particular, suppose that $u\in \mathcal J$, then $u\geq f(u)$. For any $s\geq u$, we have  $s\geq u\geq f(u)\geq f(s)$, which implies that $s\in \mathcal J$. 

Finally, we show that $\mathcal J$ is a left-closed interval. Suppose $\mathcal J$ is left-open. Let $\mathcal J=(u,\infty)$, which implies that $s\geq f(s)$ for any $s>u$, while $u<f(u)$.  
Let $f(u) = u+\delta$ where $\delta>0$. 
Since $f$ is right-continuous, we have $u+\delta = f(u) = \lim_{s\to u+}f(s)$. 
Consequently, $u+\delta = \lim_{s\to u+}f(s)\leq \lim_{s\to u+}s = u$. This leads to $\delta\leq 0$, which is a contradiction. Therefore, we have shown that $\mathcal J$ is a left-closed interval, in which the minimum always exists.

\subsection{Proof of Lemma \ref{lem:small}}
\label{app:small}
For simplicity of notation, we use the shorthand $m\triangleq m(\mathbf{v})$ in this section.
For any $\delta > 0$, by the definition of $m$ in Eq.~\eqref{def:m}, we have $|\mathcal I_{m-\delta}(\vv)|>m-\delta$. 
We will use this observation to  prove this lemma by contradiction. 

Suppose that $v_{\lceil m\rceil}>\frac{1}{m}$. In particular, let $v_{\lceil m\rceil} = \frac{1}{m-\delta_1}$ for some $\delta_1>0$. 
We consider the following two cases separately: (1) $m-\delta_1\geq \lceil m\rceil -1$, and (2)  $m-\delta_1< \lceil m\rceil -1$. 

\noindent (1) First suppose $m-\delta_1\geq \lceil m\rceil -1$.
Note that $|\mathcal I_{m-\delta_1}(\vv)|> m-\delta_1$ but also $ |\mathcal I_{m-\delta_1}(\vv)|\leq \lceil m\rceil -1$. This leads to a contradiction. 

\noindent (2) Next suppose $m-\delta_1< \lceil m\rceil -1$, then $v_{\lceil m\rceil}>\frac{1}{\lceil m\rceil -1}$, and therefore
$\left|\mathcal I_{ \lceil m\rceil -1}(\vv)\right|\leq \lceil m\rceil -1$.
Note also that $\lceil m\rceil-1<m$. Consequently, we have
$
     \lceil m\rceil -1< \left|\mathcal I_{ \lceil m\rceil -1}(\vv)\right|,
$
which is again a contradiction. 

In summary, we have shown that $v_{\lceil m\rceil}\leq\frac{1}{m}$.

\subsection{Proof of Lemma \ref{lemma:intervention}}\label{app:intervention}
\begin{proof}
For $i\in [N]$, let $\mathcal N_i = [N]\backslash\{i\}$. 
We can expand $P(Y=1|\DO(X_i=1))$ as follows:
\begin{align}
& P(Y=1|\DO(X_i=1))\nonumber\\ 
= &\sum_{\mathbf{x}_{\mathcal N_i}\in\Xcal^{N-1}}\ \ \ \sum_{s\in\Scal}P\left(Y=1|X_i=1,\mathbf{X}_{\mathcal N_i}=\mathbf{x}_{\mathcal N_i}\right)P\left(\Xv_{\mathcal N_i}=\mathbf{x}_{\mathcal N_i}|S=s\right)P(S=s)\nonumber\\
= &\sum_{s\in\Scal} \ \ \ \sum_{\mathbf{x}_{\mathcal N_i}\in\Xcal^{N-1}}P\left(Y=1|X_i=1,\Xv_{\mathcal N_i}=\mathbf{x}_{\mathcal N_i},S=s\right)P\left(\mathbf{X}_{\mathcal N_i}=\mathbf{x}_{\mathcal N_i}|S=s\right)P(S=s)\label{pf:Sind}\\
= &\sum_{s\in\Scal}\ \ \ \sum_{\mathbf{x}_{\mathcal N_i}\in\Xcal^{N-1}}\frac{P\left(Y=1,X_i=1,\Xv_{\mathcal N_i}=\mathbf{x}_{\mathcal N_i},S=s\right)}{P\left(X_i=1|\Xv_{\mathcal N_i}=\mathbf{x}_{\mathcal N_i},S=s\right)}\nonumber\\
= &\sum_{s\in\Scal}\ \ \ \sum_{\mathbf{x}_{\mathcal N_i}\in\Xcal^{N-1}}\frac{P\left(Y=1,X_i=1,\Xv_{\mathcal N_i}=\mathbf{x}_{\mathcal N_i},S=s\right)}{P\left(X_i=1|S=s\right)}\label{pf:Xind}\\
= &\sum_{s\in\Scal}\frac{P\left(Y=1,X_i=1,S=s\right)}{P\left(X_i=1|S=s\right)}\nonumber\\
= & P(Y=1|X_i=1,S=1)P(S=1)+P(Y=1|X_i=1,S=0)P(S=0),\label{pf:expand}
\end{align} 
for $i\in [N]$. 
Here, Eqs.~\eqref{pf:Sind} and \eqref{pf:expand} follow from the fact that $Y$ is conditionally independent of $S$ given $\Xv$, and Eq.~\eqref{pf:Xind} from that the entries in $\Xv$ are mutually independent conditioning on $S$. 
Similarly, we can prove the case for $P(Y=1|\DO(X_i=0))$. Eqs.~\eqref{eq:do2} and \eqref{eq:do3} can also be shown similarly.  
\end{proof}

\subsection{Proof of Lemma \ref{lemma:kl}}\label{app:kl}

\begin{proof}
 By the chain rule of KL divergence in Lem.~\ref{lemma:kl}, we first decompose the KL divergence between $P_0$ and $P_i$ as follows,
\begin{align}
     & D_{\rm KL} (P_0(\Hv_T),P_i(\Hv_T))\nonumber\\
    = &  D_{\rm KL}(P_0(\Hv_{T-1}),P_i(\Hv_{T-1})) \nonumber\\
    & + D_{\rm KL}\left(P_0\left(S^{(T)},\Xv^{(T)},Y^{(T)},A^{(T)}|\Hv_{T-1}\right),P_i\left(S^{(T)},\Xv^{(T)},Y^{(T)},A^{(T)}|\Hv_{T-1}\right)\right).
\end{align}
For $j\in\{0,i\}$ and $t\in [T]$, the conditional distribution of the variables given the history can be expanded as follows,
\begin{align}\label{eq:klexpand}
& P_j(S^{(t)},\Xv^{(t)},Y^{(t)},A^{(t)}|\Hv_{t-1}) \\
= & \frac{P_j(Y^{(t)}|\Xv^{(t)},A^{(t)},S^{(t)},\Hv_{t-1})P_j(\Xv^{(t)},S^{(t)}|A^{(t)},\Hv_{t-1})P_j(A^{(t)},\Hv_{t-1})}{P_j(\Hv_{t-1})}\nonumber\\
= &  P_j(Y^{(t)}|\Xv^{(t)})P_j(\Xv^{(t)},S^{(t)}|A^{(t)})P_j(A^{(t)}|\Hv_{t-1}),
\label{eq:non cond}
\end{align}
where, in \eqref{eq:non cond}, we utilized the following two facts: 
\begin{enumerate}
\item
the reward at time $t$, $Y^{(t)}$, is conditionally independent of the action $A^{(t)}$, the context $S^{(t)}$, and the previous history $\Hv_{t-1}$, as long as $\Xv^{(t)}$ is given; 
\item 
the variables $(\Xv^{(t)},S^{(t)})$ are independent of the history $\Hv_{t-1}$ conditioning on the choice of action $A^{(t)}$.
\end{enumerate}
For any time step $t$, we can expand the KL divergence as follows,
\begin{align}
    &D_{\rm KL}\left(P_0\left(\Xv^{(t)},Y^{(t)},A^{(t)}, S^{(t)}|\Hv_{t-1}\right),P_i\left(\Xv^{(t)},Y^{(t)},A^{(t)},S^{(t)}|\Hv_{t-1}\right)\right) \nonumber\\
    = & \sum_{\hv_{t-1}}\sum_{\mathbf{x}^{(t)},y^{(t)},a^{(t)},s^{(t)}}P_0\left(\mathbf{x}^{(t)},y^{(t)},a^{(t)},s^{(t)},\hv_{t-1}\right)\log \frac{P_0\left(\mathbf{x}^{(t)}, y^{(t)},a^{(t)},s^{(t)}|\hv_{t-1}\right)}{P_i\left(\mathbf{x}^{(t)}, y^{(t)},a^{(t)},s^{(t)}|\hv_{t-1}\right)}\nonumber\\
    = & \sum_{\hv_{t-1}}\sum_{\mathbf{x}^{(t)},y^{(t)},a^{(t)},s^{(t)}}P_0\left(\mathbf{x}^{(t)},y^{(t)},a^{(t)},s^{(t)},\hv_{t-1}\right)\log \frac{ P_0\left(y^{(t)}|\mathbf{x}^{(t)}\right)P_0\left(\mathbf{x}^{(t)},s^{(t)}|a^{(t)}\right)P_0\left(a^{(t)}|\hv_{t-1}\right)}{ P_i\left(y^{(t)}|\mathbf{x}^{(t)}\right)P_i\left(\mathbf{x}^{(t)},s^{(t)}|a^{(t)}\right)P_i\left(a^{(t)}|\hv_{t-1}\right)},
    \end{align}
    where in the last line we applied Eq.~\eqref{eq:klexpand}. 
    Note that changing the reward function does not affect the conditional distribution of $\Xv$ and $S^{(t)}$ given the action. We have $P_i\left(\mathbf{x}^{(t)},s^{(t)}|a^{(t)}\right) = P_0\left(\mathbf{x}^{(t)},s^{(t)}|a^{(t)}\right)$. Note also that the algorithm $\phi$ is fixed. We have that the conditional distribution of the action chosen at time $t$ given the history does not depend on the reward function, which implies that  
    $P_i\left(a^{(t)}|\hv_{t-1}\right)  = P_0\left(a^{(t)}|\hv_{t-1}\right)$.
    We can further expand the KL divergence as follows, 
    \begin{align}
        &D_{\rm KL}\left(P_0\left(\Xv^{(t)},Y^{(t)},A^{(t)}, S^{(t)}|\Hv_{t-1}\right),P_i\left(\Xv^{(t)},Y^{(t)},A^{(t)},S^{(t)}|\Hv_{t-1}\right)\right) \nonumber\\
    = &\sum_{\hv_{t-1}}\sum_{\mathbf{x}^{(t)},y^{(t)},a^{(t)},s^{(t)}}P_0\left(\mathbf{x}^{(t)},y^{(t)},a^{(t)},s^{(t)},\hv_{t-1}\right)\log \frac{ P_0\left(y^{(t)}|\mathbf{x}^{(t)}\right)}{ P_i\left(y^{(t)}|\mathbf{x}^{(t)}\right)}\nonumber\\
    = &\sum_{\mathbf{x}^{(t)},y^{(t)}}P_0\left(\mathbf{x}^{(t)},y^{(t)}\right)\log \frac{ P_0\left(y^{(t)}|\mathbf{x}^{(t)}\right)}{ P_i\left(y^{(t)}|\mathbf{x}^{(t)}\right)}\nonumber\\
    = &\sum_{\mathbf{x}^{(t)}}P_0\left(\mathbf{x}^{(t)} \right)\sum_{y^{(t)}}P_0\left(y^{(t)}|\mathbf{x}^{(t)}\right)\log \frac{ P_0\left(y^{(t)}|\mathbf{x}^{(t)}\right)}{ P_i\left(y^{(t)}|\mathbf{x}^{(t)}\right)}.
\end{align}
Consequently, the decomposition of the KL divergence can be expanded as 
\begin{align}
     & D_{\rm KL} (P_0(\Hv_T),P_i(\Hv_T))\nonumber\\
    = &  D_{\rm KL}(P_0(\Hv_{T-1}),P_i(\Hv_{T-1})) + D_{\rm KL}\left(P_0\left(\Xv^{(T)},Y^{(T)},A^{(T)},S^{(T)}|\Hv_{T-1}\right),P_i\left(\Xv^{(T)},Y^{(T)},A^{(T)},S^{(T)}|\Hv_{T-1}\right)\right)\nonumber\\
     = &  D_{\rm KL}(P_0(\Hv_{T-1}),P_i(\Hv_{T-1})) \nonumber\\
     &+ \sum_{\mathbf{x}^{(T)}\in\Xcal^N} P_0\left(\mathbf{x}^{(T)}\right) \left(r_0\left(\mathbf{x}^{(T)}\right)\log\frac{r_0\left(\mathbf{x}^{(T)}\right)}{r_i\left(\mathbf{x}^{(T)}\right)}+\left(1-r_0\left(\mathbf{x}^{(T)}\right)\right)\log\frac{1-r_0\left(\mathbf{x}^{(T)}\right)}{1-r_i\left(\mathbf{x}^{(T)}\right)}\right)\nonumber\\
      = &  \sum_{t\in[T]} \sum_{\mathbf{x}^{(t)}\in\Xcal^{N}} P_0\left(\mathbf{x}^{(t)}\right) \left(r_0\left(\mathbf{x}^{(t)}\right)\log\frac{r_0\left(\mathbf{x}^{(t)}\right)}{r_i\left(\mathbf{x}^{(t)}\right)}+\left(1-r_0\left(\mathbf{x}^{(t)}\right)\right)\log\frac{1-r_0\left(\mathbf{x}^{(t)}\right)}{1-r_i\left(\mathbf{x}^{(t)}\right)}\right), 
\end{align}
which completes the proof. 
\end{proof}

\subsection{Proof of Lemma \ref{lem:1}}
\label{app:1}
In this section, we detail the proof of Lemma \ref{lem:1}.

\begin{proof}

We begin by showing the first part of the lemma, i.e., the analysis involving action $\DO(X_i=1)$ for $i\in [\lceil m_1\rceil]$.  Fix $i\in [\lceil m_1\rceil]$.
For action $a = \DO(X_i=1)$, we have 
\begin{align}
& P(\Xv\in \Xcal_i^*|\DO(X_i=1))\nonumber\\
 = & P \left(X_i = 1,X_{\ell}=0,\ell\in[\lceil m_1\rceil]\backslash \{i\}|\DO(X_i=1)\right) \nonumber\\
= & P(S=1)P(X_\ell=0,\ell\in[\lceil m_1\rceil]\backslash\{i\}|S=1) + P(S=0)P(X_\ell=0,\ell\in[\lceil m_1\rceil]\backslash\{i\}|S=0)\nonumber\\
= & \alpha \prod_{\ell\in[\lceil m_1\rceil]\backslash\{i\}}\po_\ell + \alb \prod_{\ell\in[\lceil m_1\rceil]\backslash\{i\}}\qo_\ell\nonumber\\
\geq & \alpha \prod_{\ell\in[\lceil m_1\rceil]\backslash\{i\}}\po_\ell.
\end{align}
Here, the inequality in the last line is simply obtained by dropping the second term.  
In Lem.~\ref{lem:small}, we showed that $p_1\leq p_2\leq\ldots \leq p_{\lceil m_1\rceil }\leq \frac{1}{m_1}$. As a result,
\begin{align}
    P(\Xv\in \Xcal_i^*|\DO(X_i=1)) \geq & \alpha \left(1-\frac{1}{m_1}\right)^{\lceil m_1\rceil-1}\geq \frac{\alpha}{e},
\end{align}
for each $i\in [\lceil m_1 \rceil]$, which proves the first part of the lemma. 

Now we prove the second part of the lemma, i.e, showing that for each $i\in \mathcal I$, the probability of reaching the target set $\Xcal_i^*$ is at most $1/m_1$ for any other action. 

Recall that $
\mathcal I = \left\{\pi(j), 1\leq j\leq \left\lfloor\frac{m(\pv) }{2}\right\rfloor\right\}
$, which is defined in the statement of the lemma. This is the set of indices corresponding to the smaller entries of the $\qv$ vector. 
Here, $\pi$ is the permutation that sorts the entries of $\qv$ in increasing order. 

Fix any $i\in \mathcal I$. In what follows in this proof, we will check for each action in $\mathcal A^{\textsf{mc}}$ other than $\DO(X_i=1)$. Note that the analysis also holds for the $\textsf{nmc}$ model since $\mathcal A^{\textsf{nmc}}\subseteq \mathcal A^{\textsf{mc}}$. In other words, if this lemma is true for the $\textsf{mc}$ model, it must also be true for the $\textsf{nmc}$ model.

For action $\DO(X_i=0)$, it is easy to see that 
\begin{equation}
P(\Xv\in \Xcal_i^*|\DO(X_i=0))=0. 
\end{equation}
For action $\DO(S=1)$, we have 
\begin{align}
    P\left(\Xv\in \Xcal_i^*|\DO(S=1)\right) = p_i\prod_{\ell\in[\lceil m_1\rceil]\backslash\{i\}}\po_\ell\leq \frac{1}{m_1}.
\end{align}
For action $\DO(S=0)$, 
\begin{align}
    P\left(\Xv\in \Xcal_i^*|\DO(S=0)\right) 
    = & q_i\prod_{\ell\in[\lceil m_1\rceil]\backslash\{i\}}\qo_\ell.
\end{align}
Recall that $i\in \mathcal I$, we have $q_i\leq q_{\pi(\ell)}$ for any $\ell\geq \left\lfloor\frac{m_1}{2}\right\rfloor+1$. Hence, 
\begin{align}
    P\left(\Xv\in \Xcal_{i}^*|\DO(S=0)\right) 
    = & q_{i}\prod_{\ell\in[\lceil m_1\rceil]\backslash\{i\}}\overline{q}_\ell\nonumber\\
    \leq & q_{i} \prod_{\ell = \left\lfloor\frac{m_1}{2}\right\rfloor+1}^{\lceil m_1\rceil}\overline{q}_{\pi(\ell)}\nonumber\\
    \leq & q_{i}\overline{q}_{i}^{\lceil m_1\rceil-\left\lfloor\frac{m_1}{2}\right\rfloor}\label{pf:pi}\\
    \leq & \frac{1}{\lceil m_1\rceil-\left\lfloor \frac{m_1}{2}\right\rfloor+1}\left(1-\frac{1}{\lceil m_1\rceil-\left\lfloor \frac{m_1}{2}\right\rfloor+1}\right)^{\lceil m_1\rceil-\left\lfloor\frac{m_1}{2}\right\rfloor}\label{pf:second-last}\\
    \leq & \frac{1}{m_1}.
\end{align}
In Eq.~\eqref{pf:pi}, we used the fact that $q_{\pi(1)}\leq q_{\pi(2)}\leq \ldots q_{\pi(\lceil m_1\rceil )}$ by the definition of the permutation $\pi$ first introduced in Lem.~\ref{lem:1}.
In Eq.~\eqref{pf:second-last}, we used the fact that the function $x(1-x)^n$ is maximized at $x=1/(n+1)$ for integer $n\geq 1$. 

Similarly, for $j\in [\lceil m_1\rceil]\backslash\{i\}$, the action $\DO(X_j=0)$ corresponds to 
\begin{align}
     P\left(\Xv\in \Xcal_i^*|\DO(X_j=0)\right) 
    = & \alpha p_i \prod_{\ell\in[\lceil m_1\rceil]\backslash\{i,j\}}\overline{p}_\ell +\alb q_i \prod_{\ell\in[\lceil m_1\rceil]\backslash\{i,j\}}\overline{q}_\ell\nonumber\\
    \leq & \frac{\alpha}{m_1}+\alb q_i\prod_{\ell\in[\lceil m_1\rceil]\backslash\{i,j\}} \overline{q}_\ell.
\end{align}
Analogously, since $i\in \mathcal I$, we have 
\begin{align}
     P\left(\Xv\in \Xcal_{i}^*|\DO(X_j=0)\right) 
    \leq & \frac{\alpha}{m_1}
    +\alb q_{i}\prod_{\ell\in[\lceil m_1\rceil]\backslash\{i,j\}} \overline{q}_\ell\nonumber\\
    \leq & \frac{\alpha}{m_1}
    +\alb q_{i}\prod_{\ell = \left\lfloor\frac{m_1}{2}\right\rfloor+1,\ell\neq j}^{\lceil m_1\rceil } \overline{q}_\ell\nonumber\\
    \leq & \frac{\alpha}{m_1}+\alb q_{i}(1-q_{i})^{\lceil m_1\rceil-\left\lfloor\frac{m_1}{2}\right\rfloor-1}\nonumber\\
    \leq & \frac{\alpha}{m_1}+\frac{\alb}{\lceil m_1\rceil -\left\lfloor\frac{m_1}{2}\right\rfloor}\left(1-\frac{1}{\lceil m_1\rceil -\left\lfloor\frac{m_1}{2}\right\rfloor}\right)^{\lceil m_1\rceil-\left\lfloor\frac{m_1}{2}\right\rfloor-1}\nonumber\\
    \leq & \frac{\alpha}{m_1}+\frac{\alb}{m_1} = \frac{1}{m_1}. 
\end{align}
Here, we note that for $m_1>2$, $\lceil m_1\rceil -\lfloor m_1/2\rfloor -1\geq 1$. So we may again use the property that $x(1-x)^n$ is maximized at $x=1/(n+1)$ for $n\geq 1$. 

For $j\in [\lceil m_1\rceil]\backslash\{i\}$, the action $\DO(X_j=1)$ corresponds to 
\begin{equation}
P(\Xv\in \Xcal_i^*|\DO(X_j=1))=0. 
\end{equation}
Note also that for action $\DO(\varnothing)$, $i\in \mathcal I$, we have 
\begin{align}
    P\left(\Xv\in \Xcal_i^*|\DO(\varnothing)\right) = \alpha p_i\prod_{\ell\in[\lceil m_1\rceil]\backslash\{i\}}\overline{p}_\ell+\alb q_i\prod_{\ell\in[\lceil m_1\rceil]\backslash\{i\}}\overline{q}_\ell\leq \frac{1}{m_1}. 
\end{align}
Finally, for $j\in\{m_1+1,\ldots,N-1\}$, the actions $\DO(X_j=1)$ and $\DO(X_j=0)$ correspond to 
\begin{align}
    P\left(\Xv\in \Xcal_i^*|\DO(X_j=1)\right) = P\left(\Xv\in \Xcal_i^*|\DO(X_j=0)\right) = P\left(\Xv\in \Xcal_i^*|\DO(\varnothing)\right)\leq \frac{1}{m_1}. 
\end{align}
Hence, we have completed proving the second part of the lemma.
\end{proof}

\subsection{Proof of Lemma \ref{lem:additional1}}\label{app:additional1}
\begin{proof}
We start by proving the concentration inequality for $\hat{\alpha}$. Recall the definition of $\hat{\alpha}$ in Eq.~\eqref{def:alphahat}. Since $S^{(t)}$, $t\in [T']$ are i.i.d.~Bernoulli random variables, we can apply the Chernoff bound on the sum $\sum_{t\in[T']} S^{(t)}$ and obtain 
\begin{equation}
    P\left( \frac{1}{T'}\sum_{t\in[T']} S^{(t)}\in \left[\alpha\pm \varepsilon_{ 3\alpha,2,T'} \right]\right) \geq 1-\frac{1}{T'}.
\end{equation}

Likewise, for $\hat{\mu}_{\DO(\varnothing)}$ defined in Eq.~\eqref{def:muhatobserve}, by applying Chernoff bound on the sum $\sum_{t\in [T']}Y^{(t)}$, it holds that
\begin{equation}
     P\left(\left|\hat{\mu}_{\DO(\varnothing)}-\mu_{\DO(\varnothing)}\right|\geq \varepsilon_{3,2,T'}\right)\leq \frac{1}{T'}.
\end{equation}

Next, we study $\hat{p}_i$, which is defined in Eq.~\eqref{def:pihat}. We will derive the concentration inequalities for both the denominator and the numerator in Eq.~\eqref{def:pihat}, and further obtain a bound on $\hat{p}_i$ utilizing the union bound. By applying the Chernoff bound on the sum $\sum_{t\in[T']}S^{(t)}$ again (here we choose a different value for $\varepsilon$), we have
\begin{equation}
    P\left( \sum_{t\in[T']} S^{(t)}\in \left[\left(\alpha\pm \varepsilon_{ 3\alpha,2N,T'}\right) T' \right]\right) \geq 1-\frac{1}{NT'}.
\end{equation}
Similarly, since $X_i^{(t)}S^{(t)}$, $t\in [T']$, are i.i.d. Bernoulli random variables, we apply the Chernoff bound and obtain 
\begin{equation}\label{eq:xs}
    P\left( \sum_{t\in[T']} X_i^{(t)}S^{(t)}\in\left[ \left(\alpha p_i\pm \varepsilon_{3\alpha p_i ,2N,T'}\right)T'\right]\right) \geq 1-\frac{1}{NT'}.
\end{equation}

We can combine the above two inequalities using the union bound. In particular, for sufficiently large $T'$ such that $T'\geq \frac{27\log (2NT')}{\alpha}$, we have 
\begin{equation}
    P\left(\hat{p}_i\in \left[\left(1\pm\varepsilon_{{27}/{(\alpha p_i)},2N,T'}\right)p_i\right],\ \forall i\in [N]\right) \geq 1-\frac{2}{T'},
\end{equation}
i.e., $P(E_{\pv})\geq 1-\frac{2}{T'}$
Following the same steps, we can prove the bound for $1-\hat{p}_i$, i.e., $P(E_{\overline{\pv}})\geq 1-\frac{2}{T'}$. 

Finally, we prove the bound for $\hat{\mu}_{s1j1}$, which is defined in Eq.~\eqref{def:muhats1j1}.
If $ \mu_{s1j1} =0$, then $Y^{(t)}=0$ with probability 1, in which case $\hat{\mu}_{s1j1}=0$ with probability 1. This leads to an estimate with zero error. Now we suppose $ \mu_{s1j1} >0$. 
Note that $Y^{(t)}X_j^{(t)}S^{(t)}$, $t\in [T']$, are i.i.d.~and Bernoulli, with Chernoff bound, 
\begin{equation}\label{eq:ystmu}
    P\left(\frac{1}{T'}\sum_{t\in [T']}Y^{(t)}X_j^{(t)}S^{(t)} \in [(1\pm \varepsilon_{27/(\alpha p_j \mu_{s1j1}),2N,T'})\alpha p_j \mu_{s1j1}]\right) \geq 1-\frac{2}{NT'}.
\end{equation}
Combining Eqs.~\eqref{eq:xs} and \eqref{eq:ystmu} using the union bound, we have
\begin{equation}
    P\left(\hat{\mu}_{s1j1}\in\left[  \mu_{s1j1} \pm\varepsilon_{27/(\alpha p_j),2N,T'}\right]\right)\geq 1-\frac{2}{N},
\end{equation}
which completes the proof.

\end{proof}

\subsection{Proof of Lemma \ref{lem:additional2}}\label{app:additional2}

\begin{proof} 

Let $\mathcal C = \left\{j\in[N]:p_j\in\left[\frac{1}{m_1},1-\frac{1}{m_1}\right]\right\}$ be the collection of entries in $\pv$ that are not highly biased either towards zero or one.
By the definition of $m$ in Eq.~\eqref{def:m}, we have $|[N]\setminus\Ccal|\leq m_1$. 

First we show that $\hat{m}_1\leq 2m_1$. 
Since $E_{\pv}$ is true, 
we have 
\begin{equation}
    \hat{p}_j \geq p_j -\varepsilon_{{27p_j}/ \alpha,2N,T'}.
\end{equation}

Note that $1/m_1\leq p_j$ for $j\in\Ccal$.
For sufficiently large $T'$ with $T'> \frac{108m_1\log (2NT')}{\alpha}$ and $j\in \Ccal$, we have $\varepsilon_{{27p_j}/{\alpha},2N,T'}< \frac{p_j}{2}$. 

It follows that 
\begin{equation}\label{eq:pjhatm11}
    \hat{p}_j > \f {p_j}{2}\geq\frac{1}{2m_1},
\end{equation}
for $j\in \Ccal$. 
Similarly, since $E_{\overline{\pv}}$ is true, we have 
\begin{equation}\label{eq:pjhatm12}
    1-\hat{p}_j \geq 1-p_j -\varepsilon_{{27\po_j}/ {\alpha},2N,T'}>\frac{1-p_j}{2}\geq\frac{1}{2m_1},
\end{equation}
for $j\in\Ccal$. 
With Eqs.~\eqref{eq:pjhatm11} and \eqref{eq:pjhatm12}, we have $\hat{p}_j\in\left[\frac{1}{2m_1},1-\frac{1}{2m_1}\right]$ for every $j\in\Ccal$. Hence, the entries in vector $\hat{\pv}$ satisfy the following inequality: 
\begin{equation}
  \left|\left\{j\in[N]:\min\left\{\hat{p}_j,\overline{\hat{p}}_j\right\}<\frac{1}{2m_1}\right\}\right| \leq |[N]\setminus\Ccal|\leq m_1,
\end{equation}
i.e., $|\mathcal I_{2m_1}(\hat{\pv})|\leq m_1$. 
As a result, it follows from the definition of $m$ that  
$\hat{m}_1\leq 2m_1$.

Next, we show that $\hat{m}_1\geq 2m_1/3$. 
Since $E_{\pv}$ is true, if $p_j\leq \frac{1}{m_1}$, we can derive the following inequality analogous to Eq.~\eqref{eq:pjhatm11}:
\begin{equation}
    \hat{p}_j \leq p_j +\varepsilon_{ {27p_j}/{\alpha},2N,T'}<\frac{3}{2m_1}.
\end{equation}

Since $E_{\overline{\pv}}$ is true, if $p_j\geq 1-\frac{1}{m_1}$, we can derive the following bound analogous to Eq.~\eqref{eq:pjhatm12}:
\begin{equation}
    1-\hat{p}_j \leq 1-p_j +\varepsilon_{{27\po_j}/ {\al},2N,T'}<\frac{3}{2m_1}.
\end{equation}
This implies that  
\begin{equation}
    |\{j:\min\{p_j,\overline{p}_j\}\leq \frac{1}{m_1}\}|\leq |\{j:\min\{\hat{p}_j,\overline{\hat{p}}_j\}< \frac{3}{2m_1}\}|.
\end{equation}
With Lem.~\ref{lem:small}, we have 
\begin{equation}
|\{j:\min\{p_j,\overline{p}_j\}\leq \frac{1}{m_1}\}|\geq \left\lceil m_1\right\rceil> \frac{2m_1}{3}.
\end{equation}
Therefore, 
$|\{j:\min\{\hat{p}_j,\overline{\hat{p}}_j\}< \frac{3}{2m_1}\}|> 2m_1/3$, which leads to $\hat{m}_1\geq 2m_1/3$, thus completing the proof.
\end{proof}

\subsection{Proof of Lemma \ref{lem:pjtom1}} \label{app:pjtom1}
\begin{proof}
 From Eq.~\eqref{eq:pj1} in Lem.~\ref{lem:additional1}, if $E_{\pv}$ is true, we have 
\begin{equation}
 \hat{p}_j\leq p_j+\varepsilon_{27/(\alpha p_j),2N,T'}.
 \end{equation}
 for $j\in [N]\setminus\hat{\mathcal B}_{11}$.
 If $T'$ satisfies that $T'> \frac{108m_1\log (2NT')}{\alpha}$,
 we have   
\begin{equation}
    \varepsilon_{27/(\alpha p_j),2N,T'}\leq \frac{1}{2}\sqrt{\frac{p_j}{m_1}}.
\end{equation}
 It follows that, if $E_{\pv}$ is true, 
 \begin{equation}\label{eq:pjhat1}
 \hat{p}_j\leq p_j+\frac{1}{2}\sqrt{\frac{p_j}{m_1}}.
 \end{equation}

With the inequality on $\hat{m}_1$ in Eq.~\eqref{eq:m1bound0} and the definition of $\hat{m}_1$, we have that $\hat{p}_j$ satisfies
\begin{equation}\label{eq:pjhat2}
 \hat{p}_j \geq  \frac{1}{\hat{m}_1}\geq \frac{1}{2m_1}, 
 \end{equation}
 for $j\in [N]\setminus\hat{\mathcal B}_{11}$ if $E_{\pv}$ and $E_{\overline{\pv}}$ are both true. 
With Eq.~\eqref{eq:pj1} in Lem.~\eqref{lem:additional1}, we know that $E_{\pv}$ and $E_{\overline{\pv}}$ are simultaneously true with probability at least $1-\frac{4}{T'}$. 
Combining Eqs.~\eqref{eq:pjhat1} and \eqref{eq:pjhat2}, we get
\begin{equation}
 \frac{1}{2m_1}\leq  p_j+\frac{1}{2}\sqrt{\frac{p_j}{m_1}},
 \end{equation}
for all $j\in [N]\setminus\hat{\mathcal B}_{11}$, with probability at least $1-\frac{4}{T'}$. We can solve for $p_j$ in the inequality above, which yields the desired result. 
\end{proof}

\subsection{Proof of Lemma \ref{lem:additional3}}\label{app:additional3}
\begin{proof}
The expected reward given action $a$ can be expanded as \begin{align}\label{pf:tmp1}
    P_j(Y=1|A=a) 
    = & \frac{\sum_{\mathbf{x}} P_j(Y=1|\Xv=\mathbf{x})P_j(\Xv= \mathbf{x},A=a)}{P_j(A=a)}\nonumber\\
    = & \sum_{\mathbf{x}} r_j(\mathbf{x})P_j(\Xv= \mathbf{x} | A=a)
\end{align}
for $j\in [0:\lceil m_1\rceil]$ and $a\in \Acal$. 
Since changing the reward function does not affect the distribution of $S$ and $\Xv$, the distributions $P_0$ and $P_i$ are identical on $\Xv$ for any $i\in [\lceil m_1\rceil]$. 
As a result, $P_i(\Xv= \mathbf{x} | A=a) = P_0(\Xv= \mathbf{x} | A=a)$ for each $i\in [\lceil m_1\rceil]$. 
From Eq.~\eqref{pf:tmp1} we can express the expected reward under action $a$ as  $P_j(Y=1|A=a) = \Ebb_0 [r_j(\Xv)|A=a]$ for any $j\in[0:\lceil m_1\rceil]$. 
Consequently,  
\begin{align}
    P_j(Y=1|A=a) 
    = & \Ebb_0\left[\frac{1}{2}+\varepsilon\Ibb(\Xv\in \Xcal_j^*)\Big|A=a\right] \nonumber\\
    = & \frac{1}{2}+\varepsilon P_0(\Xv\in\Xcal_j^*|A=a),
\end{align}
which is the desired result. 
\end{proof}

\subsection{Proof of Lemma~\ref{lem:intermediate}}\label{app:intermediate}

\begin{proof}
Suppose the parameters of the system are specified as $(\alpha, \mathbf{p},\mathbf{q},r_{i})$ for $i\in[\lceil m_1\rceil]$. 
Note that the regret $R_T$ satisfies
\begin{align}
    R_T 
    = & \Ebb_{i }[Y |A=a_{i }^*]- \Ebb_{i }\left[\Ebb_{i }[Y=1|A=\hat{A}_T]\right] \nonumber\\
    = & \Ebb_{i }[Y |A=a_{i}^*] - \Ebb_{i } \left[Y|A=a_{i}^*\right] P_{i }(\hat{A}_T=a_{i}^*) - \sum_{a\neq a_{i }^*}\Ebb_{i}[Y|A=a]P_{i}(\hat{A}_T=a),
    \end{align}
    where we decomposed the expected regret into the case that the optimal action $a_i^*$ is chosen and that some other action is selected. With this decomposition, we have 
    \begin{align}
    R_T= & \Ebb_{i } \left[Y|A=a_{i }^*\right] P_{i }(\hat{A}_T\neq a_{i}^*) - \sum_{a\neq a_{i }^*}\Ebb_{i }[Y|A=a]P_{i}(\hat{A}_T=a)\nonumber\\
    \geq & P_{i}(\hat{A}_T\neq a_{i }^*)\left(\Ebb_{i}[Y|A=a_{i}^*]-\max_{a\neq a_{i }^*}\mathbb  E_{i }[Y|A=a]\right)\nonumber\\
    = & \varepsilon P_{i} (\hat{A}_T \neq a_{i}^*)    \left(P_{i}(\Xv\in\Xcal_{i}^*|A=a_{i }^*)-\max_{a\neq a_{i}^*} P_{i}(\Xv\in\Xcal_{i}^*|A=a))\right),
\end{align}
where the last line above follows from Lem.~\ref{lem:additional3}.
Consequently, with Lem.~\ref{lem:1}, 
\begin{equation}
    P_{i}(\Xv\in\Xcal_{i}^*|A=a_{i }^*)-\max_{a\neq a_{i}^*} P_{i}(\Xv\in\Xcal_{i}^*|A=a)) \geq \frac{\alpha}{e}-\frac{1}{m_1},
\end{equation}
which leads to the desired result. 
\end{proof}

\subsection{Proof of Lemma \ref{lem:additional4}}\label{app:additional4}
\begin{proof}
With Lem.~\ref{lemma:kl}, the KL divergence can be expanded as 
\begin{align}
    &D_{\rm KL}(P_0(\Hv_T),P_i(\Hv_T)) \nonumber\\
    = &\sum_{t\in[T]} \sum_{\mathbf{x}^{(t)}\in\Xcal^{N}} P_0\left(\mathbf{x}^{(t)}\right) \left(r_0\left(\mathbf{x}^{(t)}\right)\log\frac{r_0\left(\mathbf{x}^{(t)}\right)}{r_i\left(\mathbf{x}^{(t)}\right)}+\left(1-r_0\left(\mathbf{x}^{(t)}\right)\right)\log\frac{1-r_0\left(\mathbf{x}^{(t)}\right)}{1-r_i\left(\mathbf{x}^{(t)}\right)}\right).
\end{align}
Note that in the inner summation, only $\xv^{(t)}\in \mathcal X_i^*$ can contribute to a non-zero term, since otherwise $r_0 = r_i$. Thus, we only need to consider the elements in $\Xcal_i^*$ in the summation, i.e., 
\begin{align}
& \sum_{\mathbf{x}^{(t)}\in\Xcal^{N}} P_0\left(\mathbf{x}^{(t)}\right) \left(r_0\left(\mathbf{x}^{(t)}\right)\log\frac{r_0\left(\mathbf{x}^{(t)}\right)}{r_i\left(\mathbf{x}^{(t)}\right)}+(1-r_0(\mathbf{x}^{(t)})\log\frac{1-r_0(\mathbf{x}^{(t)})}{1-r_i(\mathbf{x}^{(t)})}\right)\nonumber\\
= & \sum_{\mathbf{x}^{(t)}\in \Xcal_i^*} P_0(\mathbf{x}^{(t)}) \left(r_0(\mathbf{x}^{(t)})\log\frac{r_0(\mathbf{x}^{(t)})}{r_i(\mathbf{x}^{(t)})}+(1-r_0(\mathbf{x}^{(t)})\log\frac{1-r_0(\mathbf{x}^{(t)})}{1-r_i(\mathbf{x}^{(t)})}\right).
\end{align}
With Lem.~\ref{lemma:kl}, since $Y^{(t)}$ is Bernoulli distributed, we can derive the following inequality by upper-bounding the KL divergence with the Chi-square distance (see \cite{tsybakov2008introduction}). For $\xv^{(t)}\in \mathcal X_i^*$, we have  
\begin{align}
& r_0\left(\mathbf{x}^{(t)}\right)\log\frac{r_0\left(\mathbf{x}^{(t)}\right)}{r_i\left(\mathbf{x}^{(t)}\right)}+(1-r_0(\mathbf{x}^{(t)})\log\frac{1-r_0(\mathbf{x}^{(t)})}{1-r_i(\mathbf{x}^{(t)})}\nonumber\\
\leq & \frac{\left(r_0(\mathbf{x}^{(t)})-r_i(\mathbf{x}^{(t)})\right)^2}{r_i(\mathbf{x}^{(t)})(1-r_i(\mathbf{x}^{(t)}))}\nonumber\\
\leq & \frac{\varepsilon^2}{(\frac{1}{2}+\varepsilon)(\frac{1}{2}-\varepsilon)}\nonumber\\
\leq & \frac{16\varepsilon^2}{3},
\end{align}
where the last inequality follows from the fact that $\varepsilon\in (0,1/4)$. Combining the equations above, the KL divergence can be expanded as
\begin{align}
  D_{\rm KL}(P_0(\Hv_T),P_i(\Hv_T)) 
     = &\sum_{t\in [T]} P_0(\Xv^{(t)}\in\mathcal{X}_i^*)\frac{\varepsilon^2}{(\frac{1}{2}+\varepsilon)(\frac{1}{2}-\varepsilon)}\nonumber\\
      \leq & \sum_{t\in [T]}P_0(\Xv^{(t)}\in\mathcal{X}_i^*)\frac{16\varepsilon^2}{3}.
\end{align}
This completes the proof.
\end{proof}

\subsection{Proof of Lemma \ref{lem:miss}}\label{app:miss}
\begin{proof}

Let $T_i = \sum_{t\in[T]}\mathbf{1}_{\{A^{(t)} = a_i^*\}}$ be the number of time steps in which the agent conducts the optimal action throughout the experiment. Then, Eq.~\eqref{eq:klm} can be written as 
\begin{align}
     D_{\rm KL} (P_0(\Hv_T),P_i(\Hv_T))\leq \frac{16\varepsilon^2}{3}\left(1-\frac{1}{m_1}\right) \Ebb_0 [T_i]+\frac{16\varepsilon^2T}{3m_1}.
\end{align}
Recall the definition of set $
\mathcal I = \left\{\pi(j), 1\leq j\leq \left\lfloor\frac{m(\pv) }{2}\right\rfloor\right\}$ introduced in Lem.~\ref{lem:1}. 
Since $\sum_{i\in \mathcal I}T_i \leq T$, we have $\sum_{i\in \mathcal I}\Ebb_0 [T_i]\leq T$. Let $\mathcal I_0$ be a subset of $\mathcal I$ defined as 
\begin{equation}
\mathcal I_0=\left\{i\in \mathcal I: \Ebb_0 [T_i]\leq \frac{2T}{m_1}\right\}.
\end{equation}
It is not difficult to verify that $|\mathcal I_0|\geq \frac{m_1}{2}$. Choose 
\begin{equation}
\varepsilon = \frac{\sqrt{\ln (1.05)}}{4}\sqrt{\frac{m_1}{T}}.
\end{equation}
Note that $\varepsilon\in(0,1/4)$ since $T\geq m_1$.
For each $i\in \mathcal I_0$, we have   
\begin{equation}
     D_{\rm KL} (P_0(\Hv_T),P_i(\Hv_T))\leq \frac{16\varepsilon^2 T}{m_1}= \ln(1.05). 
\end{equation}
With this upper bound on the KL divergence, by Lem.~\ref{lemma:subset}, we have  
\begin{equation}
    P_0(\hat{A}_T=a)+P_i(\hat{A}_T \neq a)\geq \frac{1}{2}\exp\left(-D_{\rm KL}(P_0,P_i)\right)\geq \frac{1}{2.1}
\end{equation}
for any action $a\in \Acal$.
Let $a = a_i^*$. It follows that
\begin{equation}
    P_0(\hat{A}_T=a_i^*)+P_i(\hat{A}_T \neq a_i^*)\geq \frac{1}{2.1}.
\end{equation}
Now we sum the above inequality over $i\in \mathcal I_0$. We obtain 
\begin{align}
    \frac{|\mathcal I_0|}{2.1}
    \leq & \sum_{i\in\mathcal I_0}P_0(\hat{A}_T=a_i^*)+\sum_{i\in\mathcal I_0}P_i(\hat{A}_T \neq a_i^*).
\end{align}
Since the actions $a_i^*$ are distinct, we have 
\begin{equation}
    \sum_{i\in\mathcal I_0}P_0(\hat{A}_T=a_i^*) \leq  1,
\end{equation}
which leads to 
\begin{align}
    \sum_{i\in\mathcal I_0}P_i(\hat{A}_T \neq a_i^*)\geq \frac{|\mathcal I_0|}{2.1}-1.
\end{align}
As a result, there exists $i'\in \mathcal I_0$, such that 
\begin{align}
    P_{i'}\left(\hat{A}_{T} \neq a_{i'}^*\right)\geq \frac{1}{2.1}-\frac{1}{|\mathcal I_0|}\geq \frac{1}{2.1}-\frac{2}{m_1}.
\end{align}
This completes the proof. 
\end{proof}

\end{document}